\documentclass[11pt]{article}

\usepackage[a4paper,margin=1in]{geometry}

\usepackage{natbib}
\usepackage[utf8]{inputenc} 
\usepackage[T1]{fontenc}    
\usepackage[backref=page, colorlinks = true, linkcolor = mydarkblue, citecolor = mydarkblue, urlcolor = mydarkblue]{hyperref}       
\usepackage{url}            
\usepackage{booktabs}       
\usepackage{amsfonts}       
\usepackage{nicefrac}       
\usepackage{microtype}      
\usepackage{xcolor}         

\usepackage{amsthm,amsmath,amssymb}
\usepackage{mathtools}
\usepackage{thmtools}
\usepackage{bm}
\usepackage{minitoc}
\usepackage{svg}
\usepackage{graphicx}
\usepackage{algorithm}
\usepackage{algpseudocode}
\usepackage{enumitem}
\usepackage{todonotes}
\usepackage{lipsum}
\usepackage{array}
\usepackage{cleveref}
\usepackage{pgfgantt}
\usepackage[official]{eurosym}
\usepackage{subcaption}
\usepackage{tikz}
\usetikzlibrary{patterns}
\definecolor{mygreen}{HTML}{2CA02C}
\usepackage{wrapfig}
\usepackage{float}
\usepackage{bm}

\newcommand{\x}{\bm{x}}
\newcommand{\h}{\mathbf{h}}
\newcommand{\g}{\mathbf{g}}

\newcommand{\z}{\bm{z}}

\newcommand{\X}{\mathcal{X}}

\newcommand{\Z}{\mathcal{Z}}
\newcommand{\Y}{\mathcal{Y}}
\newcommand{\Q}{\mathcal{Q}}
\newcommand{\R}{\mathcal{R}}

\newcommand{\btheta}{\bm{\theta}}
\newcommand{\ellNPE}{\ell_{\mathrm{NPE}}}

\newcommand{\softplus}{\mathrm{softplus}}
\newcommand{\sech}{\mathrm{sech}}
\newcommand{\KL}{\mathrm{KL}}
\newcommand{\kl}{\mathrm{kl}}
\newcommand{\HPDR}{R_\alpha^q(\x)}

\newcommand{\NPEloss}{\mathcal{L}_{\mathrm{NPE}}}

\definecolor{npecolor}{HTML}{1f77e9}
\definecolor{drocolor}{HTML}{2ca02c}
\definecolor{stopnpecolor}{HTML}{FF5F1F}
\definecolor{calnpecolor}{HTML}{AA3377}
\definecolor{balancecolor}{HTML}{CCBB44}
\definecolor{dronlpdcolor}{HTML}{BBBBBB}
\definecolor{drokldistcolor}{HTML}{EE6677}
\definecolor{drocoverror}{HTML}{66CCEE}

\newcommand{\legendbox}[1]{%
  \textcolor{#1}{\rule{0.55em}{0.55em}}%
}

\definecolor{mydarkblue}{rgb}{0,0.08,0.45}

\newenvironment{talign*}
 {\csname align*\endcsname}
 {\endalign}
\newenvironment{talign}
{\align}
{\endalign}

\newtheorem{definition}{Definition}[section]

\newtheorem{theorem}{Theorem}[section]
\newtheorem*{theorem*}{Theorem}
\newtheorem{assumption}{Assumption}[section]
\newtheorem{proposition}[theorem]{Proposition}
\newtheorem*{proposition*}{Proposition}
\newtheorem{lemma}[theorem]{Lemma}
\newtheorem*{lemma*}{Lemma}

\crefname{lemma}{lemma}{lemmas}
\Crefname{lemma}{Lemma}{Lemmas}

\title{Conservative neural posterior estimation via \\distributionally robust training}

\author{%
William Laplante$^{1,3,}$\thanks{Equal contribution.}\qquad
Yuga Hikida$^{2,*}$\qquad
Charita Dellaporta$^{1}$\\[0.8ex]
Fran\c{c}ois-Xavier Briol$^{1}$\qquad
Ayush Bharti$^2$\\[1ex]
\small $^{1}$Department of Statistical Science, University College London, UK\\
\small $^{2}$Department of Computer Science, Aalto University, Finland\\
\small $^{3}$The Alan Turing Institute, UK \\[1ex]
\small Corresponding authors: \texttt{william.laplante.24@ucl.ac.uk, yuga.hikida@aalto.fi}
}

\date{}

\begin{document}

\maketitle

\begin{abstract}

Simulation-based inference with neural posterior estimation (NPE) often yields overconfident and unreliable posteriors under limited simulation budgets.
To address this, we propose DRO-NPE, a distributionally robust approach that replaces the standard NPE objective with a worst-case loss over a Wasserstein ambiguity set.
We introduce KL-based metrics for miscoverage and miscalibration, and use these to show that the DRO-NPE objective controls overfitting and reduces posterior overconfidence.
Our method is tractable, parallelisable, and readily integrates with standard normalising flows.
Across benchmark SBI tasks, DRO-NPE consistently improves coverage and calibration, while narrowing the gap between empirical and population NPE loss, leading to more reliable inference in low-simulation regimes.
\end{abstract}

\section{Introduction}
\label{sec:introduction}

Simulation-based inference \citep[SBI;][]{Cranmer2020} is a powerful framework for inferring parameters of scientific models whose likelihood functions are unavailable or computationally prohibitive to evaluate, but for which simulating data is straightforward. The use of flexible neural conditional density estimators has substantially expanded the applicability of SBI to challenging problems, especially in fields such as particle physics \citep{brehmer2021simulation}, cognitive neuroscience \citep{fengler2021likelihood}, economics \citep{dyer2024black} and cosmology \citep{alsing2018massive, Jeffrey2021}. 
Neural SBI methods rely on simulations from the scientific model to approximate intractable quantities such as the posterior, the likelihood, the likelihood-to-evidence ratio, or the score function; see \citet{zammitmangion2024neural} for a recent review. In this work, we focus on the widely used neural posterior estimation (NPE) method \citep{Papamakarios2016, Radev2022}. 

A central practical limitation of NPE is the simulation budget required to train the conditional density estimator. As many scientific simulators are expensive to run, generating a sufficiently large training set is often the main computational bottleneck. Prior work has sought to reduce simulation cost using sequential training schemes \citep{Papamakarios2016, Lueckmann2017, Greenberg2019}, multi-fidelity simulations \citep{Hikida2025,Krouglova2025, tatsuoka2025multi, Saoulis2025}, or by exploiting heterogeneous simulation costs \citep{Bharti2024}. 
However, these strategies do not directly address the challenge of ensuring reliable uncertainty estimates in low-simulation regimes.
In practice, the adequacy of a given simulation budget is rarely clear \emph{a priori}, and neural SBI methods trained on limited simulations can exhibit poor uncertainty quantification, leading \citet{Hermans2022} to warn of the potential for a ``trust crisis''. 

Under finite simulations, NPE can yield \emph{overconfident} posterior approximations whose credible regions contain the true parameter less often than their nominal coverage suggests. 
Such undercoverage is especially problematic in scientific applications, where overstated certainty can lead to misleading conclusions. By contrast, conservative posteriors are often safer as they widen uncertainty statements rather than overstating confidence. This has motivated several works on conservative uncertainty quantification in neural SBI: 
\citet{delaunoy2022towards} proposed regularised balanced neural ratio estimation \citep[NRE;][]{Hermans2020} to inflate estimator uncertainty, and \citet{delaunoy2023balancing} extended this idea to contrastive NRE \citep{miller2022contrastive} and NPE.  
\citet{falkiewicz2023calibrating} instead regularise NPE directly with a coverage error term. 
However, these methods remain largely heuristic: they approximate constrained objectives to encourage conservativeness without statistical guarantees for the resulting uncertainty quantification. 
Instead of regularising standard NPE, \citet{delaunoy2024low} use Bayesian neural networks for conservative inference, at the cost of Bayesian inference over network weights.
Complementarily, conformal and Neyman-inversion methods \citep{Masserano2023_Waldo, patel2023variational, cabezas2025cp4sbi} post-hoc calibrate credible sets with coverage guarantees, but do not directly address poor conditional density estimation. 

\begin{wrapfigure}{r}{0.3\textwidth}
    \centering
    \vspace{-1.5em}
    \includegraphics[width=\linewidth]{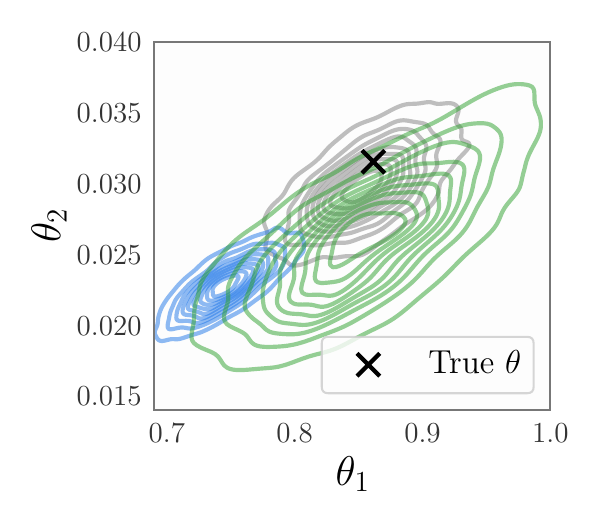}
    \vspace{-3ex}
    \caption{Lotka--Volterra posteriors in a low-simulation regime; details in \Cref{sec:experiments}.}
    \label{fig:intro-posterior-comparison}
    \vspace{-0.9em}
\end{wrapfigure}

In this paper, we view NPE through the lens of \emph{empirical risk minimisation}. Standard NPE minimises an empirical risk over a finite set of simulated parameter--data pairs. When the simulation budget is limited, the empirical risk can be a poor proxy for the population risk, leading to overfitting and poorly calibrated posteriors \citep{Hermans2022}. This is illustrated in \Cref{fig:intro-posterior-comparison} on Lotka--Volterra model: standard NPE (\legendbox{npecolor}) is concentrated away from the true parameter, while a wider reference NPE posterior trained with many more simulations (\legendbox{gray}) assigns mass around it.
We make NPE conservative using \emph{distributionally robust optimisation} (DRO; \citealp{kuhn2025distributionally}), a framework for optimisation under distributional uncertainty. Our method (DRO-NPE; \legendbox{drocolor}) yields a broader posterior that covers the true parameter, reflecting uncertainty from the limited simulation budget.

Our contributions are as follows. First, we introduce Kullback--Leibler (KL) divergence-based uncertainty-quantification metrics (\Cref{sec:miscov-miscal}) that generalise standard notions of coverage and calibration errors \citep{Hermans2022, talts2018validating}. 
Second, we propose a Wasserstein DRO objective that upper-bounds these metrics and the population NPE loss, so that, under suitable conditions, minimising it controls both posterior-fitting and uncertainty-quantification error (\Cref{theorem:W-DRO-upper-bound-chain}).
We also derive a tractable regularised objective that upper-bounds the DRO objective (\Cref{proposition:W-DRO-upper-bound}), and verify its assumptions for standard normalising-flow architectures used in NPE (\Cref{lem:flows-assumptions}).
Third, we establish excess-risk, miscoverage, and miscalibration guarantees for DRO-NPE, and compare them to those of standard NPE (Appendix~\ref{appendix:excess-miscov-miscal}).
Finally, we show that DRO-NPE improves uncertainty quantification and generalisation across challenging benchmarks (\Cref{sec:benchmarking}) and a real-world cosmology application (\Cref{sec:experiments_cosmology}).

\section{Background}\label{sec:background}

We introduce NPE and uncertainty-quantification metrics revealing its finite-simulation failure modes.
\paragraph{Simulation-based inference with neural posterior estimation.}

Let $\X \subseteq \mathbb{R}^{d_{\X}}$ and $\Theta \subseteq \mathbb{R}^{d_{\Theta}}$ denote the data and parameter spaces, and let $\Z := \X \times \Theta \subseteq \mathbb{R}^{d_{\Z}}$, where $d_{\Z} = d_{\X} + d_{\Theta}$.
Consider a prior $\Pi$ on $\Theta$ with density $\pi$ and a conditional distribution $P_{\x \mid \btheta}$ from which one can simulate, but whose density $p(\x \mid \btheta)$, and therefore the likelihood, cannot be evaluated pointwise.
Together, $\Pi$ and $P_{\x \mid \btheta}$ induce a joint distribution $P_{\x,\btheta}$, denoted by $P$ for simplicity, with marginal $P_{\x}$ and posterior $P_{\btheta \mid \x}$.
The goal of SBI is to approximate this posterior distribution $P_{\btheta \mid \x_0}$ given observed data $\x_o$.

NPE is an amortised SBI method that approximates the posterior using flexible conditional density estimators such as mixture-density networks \citep{bishop1994mixture} and normalising flows \citep{papamakarios2021normalizing}. Let $q_\phi(\btheta \mid \x)$ denote such an estimator with parameter $\phi \in \Phi \subseteq \mathbb{R}^{d_{\Phi}}$, and let $Q_{\btheta \mid \x}$ be the corresponding distribution (we remove the explicit dependence on $\phi$ for notational simplicity). Assuming $P_{\btheta \mid \x} $ is absolutely continuous with respect to $Q_{\btheta \mid \x}$, the NPE objective is obtained by minimising the expected KL divergence between $P_{\btheta \mid \x} $ and $Q_{\btheta \mid \x}$ over $\phi$: 
\begin{talign}
     \mathbb{E}_{\x \sim P_{\x}}\left [\KL \Big(P_{\btheta \mid \x} \;\|\; Q_{\btheta \mid \x} \Big)\right] \overset{+ K(P)}{=}  \mathbb{E}_{(\btheta, \x) \sim P}[- \log q_\phi(\btheta \mid \x)]:=\NPEloss(q_\phi;P),
     \label{eq:NPE-KL-objective}
\end{talign}
where $\z = (\x, \btheta)$, and the additive constant $K(P)$ does not depend on $\phi$. Note that $\NPEloss(q_\phi; P)$ can be thought of as a \emph{population risk}, as it is defined as an expectation under $P$, induced by the \emph{pointwise loss} $\ellNPE(q_\phi; \z):= -\log q_{\phi}(\btheta \mid \x)$.
In practice, $\NPEloss(q_\phi; P)$ is approximated using independent and identically distributed (i.i.d.) samples $\z_{1:n} \sim P$, where each $\z_i$ is obtained by first sampling $\btheta_i \sim \Pi$, and then simulating $\x_i \sim P_{\x \mid \btheta_i}$.
These samples induce the empirical measure $P_n := \nicefrac{1}{n}\sum_{i=1}^n \delta_{\z_i}$, such that $\NPEloss(q_\phi; P_n) := \nicefrac{1}{n}\sum_{i=1}^n \ellNPE(q_\phi; \z_i)$ is the associated \emph{empirical risk}.
The conditional density model is then trained by minimising the empirical risk, $\hat{\phi}_n := \arg\min_{\phi \in \Phi} \NPEloss(q_\phi;P_n)$.
As a result, the estimator $q_{\hat{\phi}_n}$ is amortised: for new observed data $\x_o$, the posterior is approximated by a simple forward pass, namely $q_{\hat{\phi}_n}(\btheta \mid \x_o) \approx p(\btheta \mid \x_o)$, without requiring retraining.

\paragraph{Uncertainty quantification metrics in SBI.}
Despite its widespread success, NPE remains prone to failures in uncertainty quantification. We focus on the two most widely used diagnostics for assessing this: expected coverage of high-probability density region (HPDR)  \citep{Ward2022, Hermans2022} and simulation-based calibration \citep{talts2018validating, sailynoja2022graphical, modrak2023simulation}. Other diagnostics have also been studied \citep{zhao2021diagnostics, linhart2022validation, lemos2023sampling, chung2024sampling, dey2025towards}, but are less common in SBI.

\begin{definition}[Expected coverage of HPDR]
\label{def:hpdr-coverage}
The $(1-\alpha)$ HPDR induced by $q$ at $\x$ is the region where $q(\btheta \mid \x)$ is largest while containing $1-\alpha$ of its mass.
Formally, \( \HPDR := \left\{\btheta \in \Theta : q(\btheta \mid \x) \ge \tau_\alpha^q(\x)\right\}\), where $\tau_\alpha^q(\x)$ is a threshold such that \( Q_{\btheta \mid \x}\!\left(\HPDR\right) = 1-\alpha \).
The expected coverage of the HPDR is then: 
\begin{talign}
\label{eq:expected-cov-def}
C_\alpha^P(q)
:= \mathbb{E}_{\z\sim P}
\left[\mathbf{1}\Big\{\btheta \in \HPDR\Big\}\right].
\end{talign}
\end{definition}
Expected coverage of HPDR is estimated by repeatedly drawing $\z_i=(\x_i,\btheta_i)\sim P$. For each $\x_i$, $M$ posterior samples $\tilde{\btheta}_{i,1},\dots,\tilde{\btheta}_{i,M}\sim Q_{\btheta\mid\x_i}$ are used to evaluate the densities $q(\tilde{\btheta}_{i,m}\mid \x_i)$ for $m=1,\dots,M$, and the empirical $\alpha$-quantile of these values approximates the threshold $\tau_\alpha^q(\x_i)$ defining the $(1-\alpha)$ HPDR. Expected coverage of HPDR is then assessed by whether the ground-truth parameter $\btheta_i$ satisfies $q(\btheta_i\mid \x_i)\ge \tau_\alpha^q(\x_i)$.
We say $Q_{\btheta\mid\x}$ is conservative when $C_\alpha^P(q) >1-\alpha$ and overconfident when $C_\alpha^P(q)<1-\alpha$.

\begin{definition}[Calibration w.r.t. $S$]\label{def:cal}
Let $S:\Theta\times\X\to\mathbb{R}$ be a summary, and $Q_{S \mid \x}$, $P_{S \mid \x}$ be the distributions of $S(\btheta;\x)$ under $\btheta \sim Q_{\btheta \mid \x}$ and $\btheta \sim P_{\btheta \mid \x}$, respectively.
Let the CDF of $Q_{S \mid \x}$ be denoted as \(F^Q_{S \mid \x}\).
We say that $Q_{\btheta \mid \x}$ (or equivalently $q$) is calibrated with respect to $S$ if, for $S^P_{\x} \sim P_{S \mid \x}$, the random variable $F^Q_{S\mid\x}(S_{\x}^P)$ is distributed as $\mathcal U[0,1]$ under $\x\sim P_{\x}$.
\end{definition}

Calibration with respect to \(S\) means that, under mild conditions on the CDF of \(Q_{S\mid \x}\), the distribution of $S$ under $P_{\btheta \mid \x}$ is the same as that of $S$ under $Q_{\btheta \mid \x}$, for \(P_{\x}\)-almost every \(\x\).
In practice, simulation-based calibration (SBC) is used to approximate calibration w.r.t. $S$ by repeatedly drawing $(\btheta_i,\x_i)\sim P$ and computing $u_i := F^Q_{S\mid\x_i}\big(S(\btheta_i;\x_i)\big), i=1,\dots,n$.
The empirical distribution of $u_{1:n}$ is then compared to $\mathcal U[0,1]$, for example via histograms, empirical CDFs, or summary statistics.
The function $S$ allows SBC to be applied in higher-dimensional settings.
For example, \(S(\btheta; \x)=\|\btheta-\btheta_0\|_2\) for a reference point \(\btheta_0\) is used in the \textit{BayesFlow} library \citep{kuhmichel2026bayesflow}. Note that expected coverage of HPDR is a special case of SBC when \(S(\btheta;\x)=q(\btheta\mid\x)\) \citep{delaunoy2022towards}; see also Appendix~\ref{appendix:theory-calib-cov}, \Cref{lemma:cov-special-case-cal}.

\section{A new KL-based perspective on uncertainty quantification metrics}\label{sec:miscov-miscal}

We consider errors in the expected coverage of HPDR and in calibration w.r.t. $S$, which we refer to as \emph{miscoverage} of HPDR and \emph{miscalibration} w.r.t. $S$, respectively. 
Miscoverage is often measured by the absolute deviation \citep{falkiewicz2023calibrating, yao2024simulation}:
\begin{talign*}
    \Delta_{\mathrm{cov}}^\alpha(q; P) := \left | C_\alpha^P(q) - (1 - \alpha) \right |.
\end{talign*}
However, this definition is blind to the prior, as the latter attains perfect expected coverage of HPDR, i.e, for any $\alpha \in (0,1)$,
\(
C_\alpha^P(\pi) = \mathbb{E}_{\z \sim P} [ \mathbf{1}\{\btheta \in R_{\alpha}^\pi \}] = \Pi(R_\alpha^\pi) = 1 - \alpha;
\)
therefore, $\Delta_{\mathrm{cov}}^\alpha(\pi;P)=0$ for all valid $\alpha$. This motivates an alternative notion of miscoverage. We adopt a KL-based approach because it removes prior blindness, links miscoverage and miscalibration to the NPE objective (\Cref{theorem:W-DRO-upper-bound-chain}), and provides a natural target for hyperparameter selection.
\begin{definition}[KL-based miscoverage of HPDR]
\label{def:kl-miscoverage}
Fix $\alpha \in (0,1)$, and let $T_A(\btheta):= \mathbf{1}\{\btheta \in A\} \in \{0,1\}$ denote the indicator of a set $A \subseteq \Theta$. 
Let $(T_A)_{\#}P_{\btheta \mid \x}$ denote the pushforward of $P_{\btheta\mid\x}$ through $T_A$, i.e., the Bernoulli distribution of $T_A(\btheta)$ when $\btheta\sim P_{\btheta\mid\x}$.
Define $(T_A)_{\#}Q_{\btheta \mid \x}$ analogously.
Then, KL-based miscoverage of HPDR is:
\begin{talign*}
        \kl_{\mathrm{cov}}^\alpha(q; P) :=  \mathbb{E}_{\x \sim P_{\x}}\left [\KL \left ( (T_{\HPDR})_{\#} P_{\btheta \mid \x} \;\Big \| \; (T_{\HPDR})_{\#} Q_{\btheta \mid \x} \right) \right].
\end{talign*}
\end{definition}
KL-based miscoverage of HPDR is weaker than the expected KL divergence between the full conditionals $P_{\btheta\mid\x}$ and $Q_{\btheta\mid\x}$ in \Cref{eq:NPE-KL-objective}, since it compares only the Bernoulli distributions induced by whether $\btheta$ lies in the HPDR $\HPDR$. This follows from the data-processing inequality \citep{polyanskiy2025information}, which states that KL divergence cannot increase under pushforwards. Moreover, $\kl_{\mathrm{cov}}^\alpha$ upper-bounds $\Delta_{\mathrm{cov}}^\alpha$ via Pinsker's inequality: $\Delta_{\mathrm{cov}}^\alpha(q; P) \leq \sqrt{\nicefrac{1}{2} \kl_{\mathrm{cov}}^\alpha (q; P)}$; see \Cref{lemma:kl-based-miscov-upper-bounds-abs-val}.
Small KL-based miscoverage of HPDR then implies small $\Delta_{\mathrm{cov}}^\alpha$, but not conversely: $\kl_{\mathrm{cov}}^\alpha(q;P)$ may remain positive even when $\Delta_{\mathrm{cov}}^\alpha(q;P) = 0$, and, unlike $\Delta_{\mathrm{cov}}^\alpha$, $\kl_{\mathrm{cov}}^\alpha(q;P)$ does not generally vanish for the prior. The latter would require the posterior to place the same mass as the prior on the HPDR for almost every $\x$.

The standard notion of CDF-based miscalibration quantifies deviation of
the distribution of $F^Q_{S\mid\x}(S_{\x}^P)$ from standard uniform 
\citep{talts2018validating, gneiting2007probabilistic}.
For the pushforward distributions $(F^Q_{S \mid \x})_{\#} P_{S \mid \x}$ and $(F^Q_{S \mid \x})_{\#} Q_{S \mid \x}$, where the latter is uniformly distributed under mild assumptions, this deviation is often measured via Kolmogorov--Smirnov distances or via Cram\'er--von Mises criteria \citep{rossi2019alternative, zhao2021diagnostics, falkiewicz2023calibrating}.
We introduce a KL-based alternative that captures deviation from uniformity and can be related to the population NPE risk.

\begin{definition}[KL-based miscalibration with respect to $S$] For $S:\Theta \times \X \to \mathbb{R}$, KL-based miscalibration is:
\begin{align*}
    \kl^S_{\mathrm{cal}}(q;P) := \mathbb{E}_{\x \sim P_{\x}}\left [\KL \left((F^Q_{S \mid \x})_{\#} P_{S \mid \x}  \,\middle\|\, (F^Q_{S \mid \x})_{\#} Q_{S \mid \x} \right) \right].
\end{align*}
\end{definition}
This notion is motivated by KL-based miscalibration discrepancies previously used in probabilistic regression \citep{utpala2020quantile,dheur2023large}.
As with $\kl_{\mathrm{cov}}^\alpha$, $\kl^S_{\mathrm{cal}}$ is weaker than the full KL divergence between $P_{\btheta\mid\x}$ and $Q_{\btheta\mid\x}$ by the data-processing inequality. The two nevertheless capture different notions of mismatch: for fixed \(\alpha\), \(\kl^\alpha_{\mathrm{cov}}\) compares only the mass assigned to the HPDR, so $\kl^\alpha_{\mathrm{cov}}(q;P)=0$ does not imply $P_{\btheta\mid\x}=Q_{\btheta\mid\x}$. By contrast, $\kl_{\mathrm{cal}}^S=0$ implies equality of the induced distributions, and under mild conditions on $F^Q_{S\mid\x}$, equality of $P_{S\mid\x}$ and $Q_{S\mid\x}$. If $S$ is sufficiently informative, this further implies $P_{\btheta\mid\x}=Q_{\btheta\mid\x}$. For \(S(\btheta,\x)=q(\btheta\mid\x)\), under which coverage of HPDR is a special case of SBC, \(\kl^S_{\mathrm{cal}}\) upper-bounds \(\kl^\alpha_{\mathrm{cov}}\); see \Cref{theorem:W-DRO-upper-bound-chain}.

\section{Distributionally robust optimisation for NPE}\label{sec:method}

NPE would ideally target the population risk \(\mathcal{L}(q_\phi;P)\), but \(P\) is accessible only through the empirical measure \(P_n\), which can be a poor approximation when \(n\) is not sufficiently large. 
In such settings, this leads to substantial performance degradation \citep{wang2025learning}, and gives rise to the classical \emph{generalisation gap} \citep{vapnik1991principles, shalev2014understanding}.
In particular, minimisation over \(P_n\) yields an overly optimistic estimate of the achievable population risk \citep{kuhn2019wasserstein}: 
$
\mathbb{E}_{\z_{1:n} \overset{\text{i.i.d.}}{\sim} P}[\inf_{\phi}\mathcal{L}(q_\phi; P_n)] \leq  \inf_{\phi}  \mathbb{E}_{\z_{1:n} \overset{\text{i.i.d.}}{\sim} P}[\mathcal{L}(q_\phi; P_n)] =
    \inf_{\phi}\mathcal{L}(q_\phi; P).
$
In NPE, this gap can translate into poor conditional density estimation.
We therefore use distributionally robust optimisation (DRO) to minimise an upper bound on the population risk, thus accounting for distributional uncertainty \citep{kuhn2019wasserstein, rahimian2022frameworks, wang2025learning}.

\paragraph{Motivating DRO-NPE.}\label{sec:WDRO-motivation}

We formalise DRO-NPE as a Wasserstein distributionally robust objective.
Rather than minimising the empirical NPE risk directly, we replace it by its worst-case value over an \emph{ambiguity set} $\mathcal{A}(P_n)$ of distributions centred at $P_n$ \citep{blanchet2024distributionally, kuhn2025distributionally}:
\begin{talign*}
    \inf_{\phi}\,\overbrace{\NPEloss(q_\phi; P_n)}^{\text{empirical risk}}
\;\xrightarrow{\text{risk aversion}}\;
\inf_{\phi}\,\overbrace{\sup_{\tilde P \in \mathcal{A}(P_n)}\NPEloss(q_\phi; \tilde P)}^{\text{DRO-NPE risk}}.
\end{talign*}
We define the ambiguity set to be a Wasserstein ball around $P_n$, i.e., $\mathcal{A}_p(P_n; \varepsilon) := \{\tilde P \in \mathcal{P}_p: W_p(\tilde P, P_n) \leq \varepsilon\}$, where $W_p$ is the $p-$Wasserstein between $\tilde{P}$ and  $P_n$, $\mathcal{P}_p := \{\tilde{P}: \mathbb{E}_{\z \sim \tilde{P}}[\|\z - \z^\prime\|^p] < \infty \quad \forall \z^\prime \in \mathcal{Z}\}$ is the set of probability distributions with finite \(p\)-th moment, and $\|\cdot\|$ is the Euclidean norm.  
The radius $\varepsilon \geq 0$ controls the size of the ambiguity set \citep[see][]{rahimian2022frameworks, kuhn2025distributionally}: $\varepsilon=0$ recovers the empirical risk, while larger values yield a more conservative objective by guarding against perturbations of $P_n$ within the Wasserstein ball \citep{wang2025learning}.
Although many distances and divergences have been considered for defining ambiguity sets in DRO, Wasserstein balls are particularly natural here.
For example, the commonly used $f$-divergence-based ambiguity sets \citep{hu2013kullback, bayraksan2015data, husain2023distributionally} primarily re-weight the support of the empirical distribution. 
By contrast, Wasserstein balls are based on mass transport \citep{kuhn2019wasserstein}, and can contain distributions supported away from the observed simulations, including continuous distributions. 

We first show that the population NPE risk upper bounds both $\kl_{\mathrm{cov}}^\alpha$ and $\kl^S_{\mathrm{cal}}$ by the data-processing inequality for the KL divergence (see \Cref{lemma:DPI-for-KL-div}). Thus, if $P$ were available, the NPE objective would provide a principled criterion for controlling these forms of posterior uncertainty. 
In finite-simulation settings, however, $P$ is unavailable, so we instead construct a high-probability upper bound on the population risk.
To do so, we rely on Wasserstein concentration results \citep{fournier2015rate, kuhn2019wasserstein}, stating that for a suitable radius $\varepsilon$ decreasing with $n$, the true distribution $P$ belongs to the ambiguity set $\mathcal A(P_n)$ with high probability. This directly implies that the DRO-NPE objective upper-bounds the population NPE risk, and therefore also $\kl_{\mathrm{cov}}^\alpha$ and $\kl^S_{\mathrm{cal}}$; see Appendix~\ref{appendix:theorem-W-DRO-upper-bound-chain} for the proof.

\begin{theorem}
\label{theorem:W-DRO-upper-bound-chain}
Let $S(\btheta; \x)=f(q(\btheta \mid \x))$, for some strictly increasing function $f$, then for any \(q_\phi\):
\begin{talign*}
      \sup_{\alpha \in (0,1)} \kl_{\mathrm{cov}}^\alpha(q_\phi; P) \leq \kl_{\mathrm{cal}}^S(q_\phi; P) \overset{+K(P)}{\leq} \NPEloss(q_\phi; P). 
\end{talign*}
    Moreover, suppose $p \in [1, d_{\Z}/2)$, and that there exists  $\alpha > 2p$ such that $M_\alpha := \mathbb{E}_{\z \sim P}[\| \z \|^\alpha] < \infty$. Fix $\delta \in (0,1)$. Then,  for every $\varsigma \in (0,\alpha)$ there exist positive constants $C_1, C_2$ depending only on $p, d_{\Z}, \alpha, M_{\alpha}, \varsigma$ such that for any \(q_\phi\), whenever $\varepsilon \geq \varepsilon(n, \delta)$, where
    \begin{talign*}
    \varepsilon(n, \delta):= \max \left \{ \left(\frac{1}{C_1n} \log \frac{2C_2}{\delta} \right)^{\frac{1}{d_{\Z}}}, \; \left(\frac{2C_2}{\delta} \right)^{\frac{1}{\alpha-\varsigma}} n^{\frac{1}{\alpha-\varsigma} - \frac{1}{p}}  \right \},
    \end{talign*}
 with probability at least $1-\delta$: $ \NPEloss(q_\phi; P) 
      \leq \sup_{\tilde P \in \mathcal{A}_p(P_n;\varepsilon)} \NPEloss(q_\phi; \tilde P).$
\end{theorem}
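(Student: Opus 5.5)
The theorem has two parts, and I would prove them separately. The first part is the chain of inequalities, which holds deterministically. The second part is the high-probability bound, which I would reduce to a Wasserstein concentration result.

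\emph{Chain of inequalities.} For the right-hand inequality, I would use the identity \eqref{eq:NPE-KL-objective}:
\[
\NPEloss(q_\phi;P)=\mathbb{E}_{\x}[\KL(P_{\btheta\mid\x}\|Q_{\btheta\mid\x})]+K(P).
\]
For each $\x$, $(F^Q_{S\mid\x})_\#P_{S\mid\x}$ is the pushforward of $P_{\btheta\mid\x}$ under the measurable map $\btheta\mapsto F^Q_{S\mid\x}(S(\btheta;\x))$, and likewise for $Q$. The data-processing inequality (\Cref{lemma:DPI-for-KL-div}) then gives
\[
\KL\big((F^Q_{S\mid\x})_\#P_{S\mid\x}\,\big\|\,(F^Q_{S\mid\x})_\#Q_{S\mid\x}\big)\le \KL(P_{\btheta\mid\x}\|Q_{\btheta\mid\x}).
\]
Taking expectations over $\x\sim P_{\x}$ yields $\kl^S_{\mathrm{cal}}\le \NPEloss-K(P)$.

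For the left-hand inequality, I would show that for each fixed $\alpha$ and $\x$, the indicator $T_{\HPDR}(\btheta)$ is a deterministic function of $U:=F^Q_{S\mid\x}(S(\btheta;\x))$.
\begin{itemize}
\item Since $f$ is strictly increasing, $\btheta\in\HPDR$ iff $q(\btheta\mid\x)\ge\tau^q_\alpha(\x)$ iff $S(\btheta;\x)\ge f(\tau^q_\alpha(\x))=:s_\alpha$.
\item Because $F^Q_{S\mid\x}$ is nondecreasing, this event should coincide, up to null sets under both $P_{\btheta\mid\x}$ and $Q_{\btheta\mid\x}$, with $\{U\ge u_\alpha\}$ for $u_\alpha:=F^Q_{S\mid\x}(s_\alpha^-)$. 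Equivalently it is $\{U> F^Q_{S\mid\x}(s_\alpha^-)\}$ or a similar threshold set, depending on atoms.
\item A second application of data processing, now through the map $u\mapsto \mathbf 1\{u\ge u_\alpha\}$ from $[0,1]$ to $\{0,1\}$, gives the bound pointwise in $\x$.
\item Integrating over $\x$ and then taking the supremum over $\alpha$ (the right side does not depend on $\alpha$) finishes this part.
\end{itemize}

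The delicate step is the second bullet: matching the HPDR event to a level set of $U$ when $F^Q_{S\mid\x}$ has flat pieces or jumps. Flat pieces of $F^Q$ correspond to $Q$-null ranges of $S$, but not necessarily $P$-null ranges. If a clean identity fails, I would work directly with the composite map $\btheta\mapsto(S,\mathbf 1\{S\ge s_\alpha\})$. I would then argue that $\mathbf 1\{S\ge s_\alpha\}$ is measurable with respect to $\sigma(U)$, using that $U\ge F^Q(s_\alpha^-)$ and $U<F^Q(s_\alpha^-)$ separate $\{S\ge s_\alpha\}$ from $\{S<s_\alpha\}$ whenever $F^Q$ is strictly increasing across $s_\alpha$. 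Strict increase across $s_\alpha$ holds because $\tau^q_\alpha$ is chosen so that $Q(\HPDR)=1-\alpha$ exactly. I would also invoke \Cref{lemma:cov-special-case-cal} here, since it already encodes that coverage is SBC with this choice of $S$.

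\emph{High-probability bound.} I would apply the concentration inequality of \citet{fournier2015rate}, in the form of \citet{kuhn2019wasserstein}, for $p<d_{\Z}/2$ and moment $M_\alpha<\infty$ with $\alpha>2p$:
\[
\Pr(W_p(P,P_n)\ge\varepsilon)\le C_2\big(\exp(-C_1 n\varepsilon^{d_{\Z}})\mathbf 1\{\varepsilon\le1\}+n(n\varepsilon)^{-(\alpha-\varsigma)/p}\mathbf 1\{\varepsilon\le 1\}+n(n\varepsilon)^{-(\alpha-\varsigma)/p}\mathbf 1\{\varepsilon>1\}\big).
\]
I would make each of the two terms at most $\delta/2$.
\begin{itemize}
\item Setting $C_2\exp(-C_1n\varepsilon^{d_{\Z}})\le\delta/2$ gives the first entry of $\varepsilon(n,\delta)$.
\item Setting $C_2\, n\,(n\varepsilon)^{-(\alpha-\varsigma)/p}\le\delta/2$ gives the second entry. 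I expect to need to check its exponent against the stated form $n^{1/(\alpha-\varsigma)-1/p}$, possibly redoing this bookkeeping with $\varepsilon^p$ in place of $\varepsilon$.
\end{itemize}
With $\varepsilon\ge\varepsilon(n,\delta)$, this yields $P\in\mathcal A_p(P_n;\varepsilon)$ with probability at least $1-\delta$; $P\in\mathcal P_p$ holds because $\alpha>p$. On that event, $\NPEloss(q_\phi;P)\le\sup_{\tilde P\in\mathcal A_p}\NPEloss(q_\phi;\tilde P)$ holds trivially. The event does not depend on $\phi$, so the bound holds uniformly for every $q_\phi$.
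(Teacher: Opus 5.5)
Your plan follows the paper's proof. You apply data processing through $\btheta\mapsto F^Q_{S\mid\x}(S(\btheta;\x))$ to get $\kl^S_{\mathrm{cal}}\le\NPEloss$ up to $K(P)$. You factor $T_{\HPDR}$ through $U$ and apply data processing again to get $\kl^\alpha_{\mathrm{cov}}\le\kl^S_{\mathrm{cal}}$. You split the Fournier--Guillin bound into two $\delta/2$ pieces.

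Your suspicion about the exponent is correct. That bound is for $W_p^p$, so with $x=\varepsilon^p$ the polynomial term is $C_2n(n\varepsilon^p)^{-(\alpha-\varsigma)/p}$. Requiring it to be at most $\delta/2$ gives exactly $(2C_2/\delta)^{1/(\alpha-\varsigma)}n^{1/(\alpha-\varsigma)-1/p}$, so your display should have $\varepsilon^p$ in place of $\varepsilon$.

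\textbf{The gap.} It is in the step you flagged as delicate. The paper does not treat that step in general. Its proof goes through \Cref{lem:miscal-upper-bound-miscov}, which additionally assumes $F^Q_{S\mid\x}$ is strictly increasing, so that $\{S\ge s_\alpha\}=\{U\ge F^Q_{S\mid\x}(s_\alpha)\}$ holds exactly. You try to do without that hypothesis, but your justification is false: $Q_{\btheta\mid\x}(\HPDR)=1-\alpha$ does not give the separation you need.

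A counterexample: take $f$ the identity, $q(\cdot\mid\x)=\tfrac12$ on $[0,1]$ and $\tfrac14$ on $(1,3]$, and $\alpha=\tfrac12$.
\begin{itemize}
\item Every valid threshold lies in $(\tfrac14,\tfrac12]$, and $F^Q_{S\mid\x}$ is flat at $\tfrac12$ on $[\tfrac14,\tfrac12)$, so it is not strictly increasing up to $s_\alpha$.
\item Your primary candidate $\{U\ge F^Q_{S\mid\x}(s_\alpha^-)\}=\{U\ge\tfrac12\}$ is all of $[0,3]$, while $\HPDR=[0,1]$.
\item The two sets differ on a set of $Q$-mass $\tfrac12$, because the atom of $Q_{S\mid\x}$ at $\tfrac14$ is sent to the same value $U=\tfrac12$ as the threshold.
\end{itemize}

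\textbf{The repair.} Of your hedged candidates only the strict one works, and absolute continuity closes the argument. Since $Q_{\btheta\mid\x}(S<s_\alpha)=\alpha$, we have $F^Q_{S\mid\x}(s_\alpha^-)=\alpha$.
\begin{itemize}
\item If $S<s_\alpha$, then $U\le\alpha$.
\item If $S\ge s_\alpha$, then $U\ge\alpha$, with equality only on $N:=\{S\ge s_\alpha,\ F^Q_{S\mid\x}(S)=\alpha\}$.
\item $Q_{\btheta\mid\x}(N)=0$. Indeed, $N$ is the preimage of a flat stretch of $F^Q_{S\mid\x}$ starting at $s_\alpha$. Its $Q$-mass is the value of $F^Q_{S\mid\x}$ at the right end minus $F^Q_{S\mid\x}(s_\alpha^-)=\alpha$, which is zero.
\end{itemize}

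Your worry that such sets are $Q$-null but perhaps not $P$-null is not settled by the choice of $\tau$. It is settled by the standing assumption $P_{\btheta\mid\x}\ll Q_{\btheta\mid\x}$ behind \eqref{eq:NPE-KL-objective}. Alternatively, $N\subseteq\{q\ge\tau^q_\alpha(\x)\}$ with $\tau^q_\alpha(\x)>0$, so $N$ is Lebesgue-null and hence $P$-null whenever $P_{\btheta\mid\x}$ has a density.

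Hence $T_{\HPDR}=\mathbf 1\{U>\alpha\}$ almost surely under both conditionals. Data processing through $u\mapsto\mathbf 1\{u>\alpha\}$ then gives $\kl^\alpha_{\mathrm{cov}}\le\kl^S_{\mathrm{cal}}$ without the strict-monotonicity hypothesis, which is slightly more general than the paper's lemma. With this repair, or with the paper's extra hypothesis, the rest of your plan goes through.
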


The moment condition on $P$ is mild: for $p=2$, which is what we use (see \Cref{sec:DRO-NPE-method}), it reduces to  $P$ having at least a finite fourth moment, which is satisfied by many light-tailed distributions.
The dependence of $\varepsilon(n, \delta)$ on $d_{\Z}$
reflects the curse of dimensionality inherent to Wasserstein concentration, with rates that are essentially optimal \citep{kuhn2019wasserstein, weed2019sharp, fournier2015rate}. In practice, however, SBI is often performed on handcrafted or learned summary statistics rather than raw observations \citep{Radev2022, deistler2025simulation}, which can substantially reduce the effective dimension.
We state the results for $p \in [1, d_{\Z}/2)$, the regime most relevant here; other cases are covered by \citet{fournier2015rate} and mainly affect the form of $\varepsilon(n, \delta)$.
Alternative DRO-based upper-bounding strategies may yield faster rates in some settings \citep[see, e.g.,][]{an2021generalization, gao2023finite} but require stronger assumptions, particularly on the loss, whereas our approach accommodates the general losses used in SBI. Next, we discuss the implementation of DRO-NPE.

\paragraph{A tractable objective for DRO-NPE.}\label{sec:DRO-NPE-method}
DRO-NPE admits two closely related formulations: a strong dual form (\Cref{eq:wasserstein-strong-dual}) and a regularised form (\Cref{proposition:W-DRO-upper-bound}).
Both provide upper bounds on the NPE population risk as in \Cref{theorem:W-DRO-upper-bound-chain}; the strong dual formulation is exact but challenging to obtain numerically, while the regularised form is computationally cheaper but approximate. 

We begin with the strong dual view, which replaces the generally intractable primal Wasserstein-DRO objective with an optimisation problem over auxiliary variables. 
Let the class of functions satisfying \textit{$\rho-$growth} for $\rho \geq 1$ be
$
\mathcal G_{\rho}(\Z)
:=
\left\{
f:\Z \to \mathbb R \;:\; \exists\, C>0 \; \mathrm{s.t.} \;
|f(\z)| \leq C(1+\|\z\|^\rho) \; \forall \z \in \Z
\right\}.
$
Under the assumptions that for fixed $q_\phi$, $\z \mapsto \ellNPE(q_\phi;\z)$ is upper-semicontinuous and belongs to $\mathcal G_p(\Z)$, the \emph{strong dual formulation} is exact and finite \citep[][Theorem 1]{gao2023distributionally}:
\begin{align}
\label{eq:wasserstein-strong-dual}
        \sup_{\tilde P \in \mathcal{A}_p(P_n; \varepsilon)
        } \NPEloss(q_\phi; \tilde P) = \inf_{\lambda \geq 0}\! \left \{ \lambda \varepsilon^p + \frac{1}{n}\sum_{i=1}^n \sup_{\z' \in \Z}\! \Big ( \ellNPE(q_\phi; \z') - \lambda \|\z' - \z_i\|^p \Big) \right \}.
    \end{align}
Although the dual formulation retains the upper bound guarantee we seek and is more tractable than the primal Wasserstein-DRO objective, the inner supremum remains computationally demanding.
Focusing on the case $p=2$, we therefore derive in \Cref{proposition:W-DRO-upper-bound} a tractable finite-sample upper bound on the dual objective based on a second-order Taylor argument and growth control of the loss.
This yields a regularised objective well-suited for gradient-based optimisation.
In practice, we omit the $\mathcal{O}(\varepsilon^2)$ remainder, which we expect to be negligible in most cases.
The proof is in Appendix~\ref{appendix:proof-upper-bound}.

\begin{proposition}
Fix $q_\phi \in \Q$. 
Suppose that $\z \mapsto \ellNPE(q_\phi;\z)$ admits a twice continuously differentiable extension to $\mathbb R^{d_\Z}$, and belongs to
$\mathcal G_2(\Z)$.
Then, for every $ \varepsilon \geq 0$,
\begin{talign*}
\sup_{\tilde P \in \mathcal{A}_2(P_n; \epsilon)
}
\NPEloss(q_\phi; \tilde P)
\leq
\underbrace{
\NPEloss(q_\phi; P_n)
+
\varepsilon
\left(
\frac{1}{n}\sum_{i=1}^n \|\nabla_{\z}\ellNPE(q_\phi;\z_i)\|^2
\right)^{\frac{1}{2}}
}_{\mathrm{the\; tractable \; DRO-NPE \; objective}} 
+ \, C_\phi^n \varepsilon^2.
\end{talign*}
Further, if \( \|\nabla_{\z} \ellNPE(q_\phi;\cdot)\| \in \mathcal G_\beta(\Z)\), \(
\|\nabla_{\z}^2 \ellNPE(q_\phi;\cdot)\| \in \mathcal G_\alpha(\Z)
\),
and $P$ is light-tailed, i.e. there exist $\eta>0$, $a>0$, and $M<\infty$
such that \( \mathbb{E}_{\z\sim P}\big[\exp(\eta\|\z\|^a)\big]\le M \),
then, for $m:=\max\{2,2\beta,\alpha\}$ and any sequence
$\varepsilon_n=o((\log n)^{-m/2a})$, the second-order term \(C_\phi^n \varepsilon_n^2\) is negligible in probability. 
\label{proposition:W-DRO-upper-bound}
\end{proposition}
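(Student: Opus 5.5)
The argument starts from the strong dual \eqref{eq:wasserstein-strong-dual} with $p=2$. It is valid because the $C^2$ extension makes $\ellNPE(q_\phi;\cdot)$ continuous, hence upper-semicontinuous, and it lies in $\mathcal G_2(\Z)$. Since the dual is an infimum over $\lambda\ge0$, any particular choice of $\lambda$ gives an upper bound. For fixed $\lambda$ and each $i$, we bound the inner supremum by Taylor expansion around $\z_i$. Write $\z'=\z_i+\Delta$. Taylor's theorem with integral remainder gives
\[
\ellNPE(q_\phi;\z')\le \ellNPE(q_\phi;\z_i)+\nabla_{\z}\ellNPE(q_\phi;\z_i)^\top\Delta+\tfrac12 H_i(\Delta)\|\Delta\|^2,
\]
where $H_i(\Delta)=\sup_{t\in[0,1]}\|\nabla^2_{\z}\ellNPE(q_\phi;\z_i+t\Delta)\|$.

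The plan is to define a local curvature constant $L_i$ controlling the Hessian on the relevant region. The supremum over $\Delta$ of
\[
g_i^\top\Delta-(\lambda-L_i/2)\|\Delta\|^2, \qquad g_i:=\nabla_{\z}\ellNPE(q_\phi;\z_i),
\]
is $\|g_i\|^2/(4(\lambda-L_i/2))$ whenever $\lambda>L_i/2$. Averaging over $i$ and adding $\lambda\varepsilon^2$ gives an objective in $\lambda$ of the form
\[
\lambda\varepsilon^2+\frac1n\sum_i\frac{\|g_i\|^2}{4(\lambda-L_i/2)}.
\]
Next we take $\lambda=\bar L/2+G/(2\varepsilon)$ with $G:=(\frac1n\sum_i\|g_i\|^2)^{1/2}$ and $\bar L\ge\max_i L_i$. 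This yields
\[
\NPEloss(q_\phi;P_n)+\tfrac{\varepsilon G}{2}+\tfrac{\varepsilon G}{2}+\tfrac{\bar L}{2}\varepsilon^2,
\]
i.e.\ the first-order term $\varepsilon G$ plus $C^n_\phi\varepsilon^2$ with $C^n_\phi=\bar L/2$. If $G=0$, we instead let $\lambda\to$ a large value.

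\textbf{Main obstacle.} The Hessian need not be globally bounded, so the supremum over all $\Delta\in\mathbb R^{d_\Z}$ cannot be handled by a single quadratic bound. We split into small and large $\|\Delta\|$.
\begin{itemize}
\item For $\|\Delta\|\le r$, use the local Hessian bound $L_i(r)=\sup_{\|\z-\z_i\|\le r}\|\nabla^2\ellNPE\|$.
\item For $\|\Delta\|>r$, use the $\mathcal G_2$ growth $|\ellNPE(\z')|\le C(1+\|\z'\|^2)$. Since $\|\z'\|^2\le 2\|\z_i\|^2+2\|\Delta\|^2$, choosing $\lambda\ge 2C+1$ makes this branch $\le C(1+2\|\z_i\|^2)-\|\Delta\|^2$. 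That is at most the local value once $r^2\gtrsim 1+\|\z_i\|^2+|\ellNPE(\z_i)|$.
\end{itemize}
Since $\lambda\ge G/(2\varepsilon)$ is large for small $\varepsilon$, and in any case we may enlarge $\lambda$ at the cost of absorbing extra terms into $C^n_\phi\varepsilon^2$, both regimes are handled by a constant $C^n_\phi$ depending on $\phi$, $C$, and $\max_i\|\z_i\|$. This bookkeeping is where I expect most effort.

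For the negligibility claim, we track how $C^n_\phi$ depends on the sample. By the growth assumptions, $L_i(r)\lesssim 1+(\|\z_i\|+r)^\alpha$ and $\|g_i\|\lesssim 1+\|\z_i\|^\beta$. The radius $r$ is quadratic in $\|\z_i\|$ up to gradient terms, since the optimal $\Delta$ scales like $\|g_i\|/\lambda$. So $C^n_\phi\lesssim 1+\max_i\|\z_i\|^{m}$ with $m=\max\{2,2\beta,\alpha\}$. Light tails give
\[
\mathbb P(\max_i\|\z_i\|>t)\le nMe^{-\eta t^a},
\]
so $\max_i\|\z_i\|=O_P((\log n)^{1/a})$ and $C^n_\phi=O_P((\log n)^{m/a})$. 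Hence $C^n_\phi\varepsilon_n^2=o_P(1)$ whenever $\varepsilon_n=o((\log n)^{-m/(2a)})$.
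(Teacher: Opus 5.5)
Your proposal is correct, but its key step differs from the paper's. The paper first proves a $\lambda$-independent global quadratic majorant (\Cref{lem:quad-up}): $\ellNPE(q_\phi;\z)\le\ellNPE(q_\phi;\z_i)+g_i^\top(\z-\z_i)+C_\phi(\z_i)\|\z-\z_i\|^2$ for all $\z$, with a \emph{fixed} unit radius. Inside $\bar B(\z_i,1)$ it uses Taylor plus a Hessian bound. Outside, it uses the $\mathcal G_2$ growth, Young's inequality $-g_i^\top\Delta\le\frac12\|g_i\|^2+\frac12\|\Delta\|^2$, and absorbs every constant into the quadratic coefficient via $\|\Delta\|\ge 1$. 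It then solves the concave quadratic supremum for $\lambda>C^n_\phi$ and minimises over $t=\lambda-C^n_\phi$.

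You instead split the \emph{penalised} supremum directly, with a data-dependent radius $r_i$. You use $\lambda\ge 2C+1$ to make the far branch at most $\ellNPE(q_\phi;\z_i)$, which the local bound dominates. This is valid once you set $\bar L:=\max\{\max_i L_i(r_i),\,4C+2\}$, so that $\lambda=\bar L/2+G/(2\varepsilon)\ge 2C+1$ for every $\varepsilon>0$. Appealing to $G/(2\varepsilon)$ being large does not work, because it is not uniform in $\varepsilon$. The paper's lemma is more modular. Your route gives a sharper remainder: the linear term never enters the far branch, so no $\|g_i\|^2$ (and no explicit $\|\z_i\|^2$) appears in $C^n_\phi$. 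For instance, with a globally bounded Hessian you get an $O(1)$ constant, as in the smooth case, whereas the paper's $C_\phi(\z_i)$ carries $\|\z_i\|^2$ and $\|g_i\|^2$ terms regardless.

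Your exponent count, however, is muddled. It is $r_i^2$, not $r_i$, that is quadratic: you can take $r_i^2=C(1+2\|\z_i\|^2)+|\ellNPE(q_\phi;\z_i)|\le 3C(1+\|\z_i\|^2)$, so $r_i=O(1+\|\z_i\|)$. The gradient plays no role in $r_i$. The local branch is bounded by the unconstrained supremum $\sup_{\Delta\in\mathbb R^{d_\Z}}\{g_i^\top\Delta-(\lambda-L_i/2)\|\Delta\|^2\}=\|g_i\|^2/(4(\lambda-L_i/2))$, whether or not the maximiser lies in the ball. Were $r_i$ genuinely quadratic in $\|\z_i\|$, the Hessian growth would only yield exponent $2\alpha$, which can exceed $m$.

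With the correct count, $L_i(r_i)\lesssim 1+\|\z_i\|^{\alpha}$. Hence $C^n_\phi\lesssim 1+\max_i\|\z_i\|^{\alpha}\lesssim 1+\max_i\|\z_i\|^{m}$, and your light-tail step (the same union bound as the paper's) finishes the proof. The $2\beta$ in $m$ stems from the paper's Young step, and your argument does not need it. Your route in fact yields negligibility under the weaker condition $\varepsilon_n=o((\log n)^{-\alpha/2a})$, without the gradient-growth assumption.
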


\Cref{proposition:W-DRO-upper-bound} assumes twice continuous differentiability and growth control to upper-bound the 2-Wasserstein DRO objective by the tractable DRO-NPE objective with remainder of order $\mathcal{O}(\varepsilon^2)$.
Under further light-tail and derivative-growth conditions, this remainder vanishes in probability for any \(\varepsilon_n\) decaying slightly faster than logarithmically, a mild requirement satisfied in particular by polynomial rates.
Importantly, as shown in \Cref{lem:flows-assumptions} below, these assumptions hold for Masked Autoregressive Flows (MAFs) \citep{NIPS2017_6828} and coupling flows, the default density estimators in the \textit{sbi} \citep{tejero2020sbi} and \textit{BayesFlow} \citep{kuhmichel2026bayesflow} libraries; see Appendix~\ref{appendix:verify-cond-norm-flows} for a proof.
By contrast, existing upper bounds for \(p=2\) Wasserstein DRO typically require global smoothness assumptions on the loss, such as Lipschitz gradients \citep[e.g.,][]{gao2023finite}, and are therefore not directly suited to NPE with normalising flows.
\begin{lemma}\label{lem:flows-assumptions}
    Suppose $q_\phi$ is a MAF or coupling flow (formally defined in Appendix~\ref{appendix:verify-cond-norm-flows}), then $\ellNPE$ satisfies the assumptions of \Cref{proposition:W-DRO-upper-bound}.
\end{lemma}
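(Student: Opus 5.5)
The plan is to write the flow loss explicitly and check each assumption of \Cref{proposition:W-DRO-upper-bound} directly. For a MAF or coupling flow with base density $\varphi$ (standard Gaussian) and conditional bijection $T_\phi(\cdot;\x):\Theta\to\mathbb{R}^{d_\Theta}$, the change of variables formula gives
\begin{talign*}
\ellNPE(q_\phi;\z) = -\log \varphi\big(T_\phi(\btheta;\x)\big) - \log\big|\det \nabla_{\btheta} T_\phi(\btheta;\x)\big| = \tfrac12\|T_\phi(\btheta;\x)\|^2 + \tfrac{d_\Theta}{2}\log(2\pi) - \sum_{j=1}^{d_\Theta}\log \partial_{\theta_j} T_{\phi,j}(\btheta;\x),
\end{talign*}
where the triangular Jacobian structure shared by autoregressive and coupling layers makes the log-determinant a sum of log-diagonal terms. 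The standard parametrisations are affine, $T_{\phi,j} = \theta_j\exp(s_j)+ m_j$, with $s_j,m_j$ produced by an MLP (MADE or coupler) with smooth activations (tanh, softplus, or similar) taking $(\btheta_{<j},\x)$ or the conditioning half as input, composed over $K$ layers. I will take this as the formal definition, and I will either assume the scales are bounded (e.g.\ via a $\tanh$-clamp on $s_j$, as in \textit{nflows}/\textit{sbi}) or note that this bound is needed.

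First, the $C^2$ extension. Each layer is a composition of affine maps and smooth activations, so it is $C^\infty$ on $\mathbb{R}^{d_\Z}$. The log-Jacobian term reduces to $\sum_j s_j$, which is also smooth, and compositions preserve smoothness. Hence $\ellNPE$ extends to a $C^\infty$ function on $\mathbb{R}^{d_\Z}$.

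Second, the growth conditions. The key structural fact is that a smooth-activation MLP with bounded activations (tanh) is bounded with bounded derivatives of all orders. With ReLU-type activations such as softplus, it instead grows at most linearly and has bounded derivatives. From this I will show by induction over layers:
\begin{itemize}
\item each layer map $(\btheta,\x)\mapsto(T(\btheta;\x),\x)$ is globally Lipschitz with linear growth, and has first and second derivatives of polynomial growth, because of products like $\theta_j e^{s_j}$ with $s_j$ bounded and $\nabla s_j$ bounded;
\item for the composite $T_\phi$, this yields $\|T_\phi(\z)\|\le C(1+\|\z\|)$, $\|\nabla T_\phi\|\le C(1+\|\z\|^{a})$ and $\|\nabla^2 T_\phi\|\le C(1+\|\z\|^{b})$ for some finite $a,b$, via the chain rule.
\end{itemize}
It follows that $\tfrac12\|T_\phi\|^2\le C(1+\|\z\|^2)$. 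The log-det term $\sum_j s_j$ is bounded, or of at most linear growth. This gives $\ellNPE\in\mathcal G_2(\Z)$. The gradient $T_\phi^\top\nabla T_\phi + \nabla\sum s_j$ and the Hessian $\nabla T_\phi^\top\nabla T_\phi + T_\phi\cdot\nabla^2T_\phi + \nabla^2\sum s_j$ then lie in $\mathcal G_\beta$ and $\mathcal G_\alpha$ for explicit finite $\beta,\alpha$. Since the proposition only requires some finite exponents, this suffices. The light-tail condition is on $P$, not on the flow, so nothing more needs checking.

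The main obstacle is the multiplicative coupling across layers. The term $\theta_j e^{s_j(\cdot)}$ makes the derivatives of later layers depend on earlier outputs through both the scale and its gradient. Without a bound on $s_j$, the map $e^{s_j}$ could grow exponentially and break polynomial growth, and even $\mathcal G_2$ for the loss. I would first try the bounded-scale assumption, which is standard in implementations. I would then carry out a careful induction showing that each layer has a Jacobian bounded by a constant, which is actually stronger than polynomial. This would give linear growth of $T_\phi$, a bounded first derivative, and a second derivative of linear growth, so $\beta=1$ and $\alpha=2$. If the formal definition instead allows unbounded $s_j$ with linear-growth activations, I would restrict to the bounded case and remark that this is necessary.
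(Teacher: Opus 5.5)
Your overall route is the paper's. You write $\ellNPE$ via the change-of-variables formula, and use that $\tanh$ hidden layers with affine read-outs make the shift and scale networks bounded (scale bounded away from zero) with bounded derivatives. From this you get $C^2$ by composition and $\ellNPE\in\mathcal G_2(\Z)$, then propagate polynomial growth of the gradient and Hessian through the layers by the chain rule. The paper treats a single MAF transform directly and $M$ coupling blocks by induction, obtaining $\|\nabla_{\z}\ellNPE\|\in\mathcal G_{M+1}(\Z)$ and $\|\nabla^2_{\z}\ellNPE\|\in\mathcal G_{2M}(\Z)$. You do not need an extra bounded-scale hypothesis. In the architectures as defined, every hidden activation is $\tanh$, and the scale is $\sigma=\softplus(s)$ or $\softplus(\sinh^{-1}(\tilde\sigma)+\log(e-1))$ applied to a bounded pre-activation, so boundedness is automatic and the softplus-hidden-layer case never arises.

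However, the sharpening you propose for the ``main obstacle'' is false, and the induction you describe would fail at its first step. The layer maps are not globally Lipschitz and their Jacobians are not bounded. Because the scale depends on the conditioning input, $\partial_{\x}T_j=\theta_j e^{s_j}\partial_{\x}s_j+\partial_{\x}m_j$ grows linearly in $|\theta_j|$, and likewise $\partial_{\theta_k}T_j$ for $k<j$. For instance, take $d_\Theta=d_\X=1$ and the single layer $T(\theta,x)=\theta e^{\tanh x}$. Then $\partial_x\ellNPE=(\theta^2e^{2\tanh x}-1)\sech^2 x$, which equals $\theta^2-1$ at $x=0$, so $\beta=1$ is impossible even for one layer.

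The repair is to use only what is true per layer: linear growth of the map, and linear growth of its first and second derivatives. Iterating the first bound gives $\|T_\phi(\z)\|\le C(1+\|\z\|)$ with no Lipschitz property needed, so your $\mathcal G_2$ bound on the loss stands. The chain rule then gives $\|\nabla T_\phi\|\lesssim(1+\|\z\|)^K$ and $\|\nabla^2T_\phi\|\lesssim(1+\|\z\|)^{2K-1}$, hence $\beta=K+1$ and $\alpha=2K$, matching the paper. The proposition only requires some finite $\beta,\alpha$, so this suffices, and you should simply drop the ``$\beta=1$, $\alpha=2$'' claim.
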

Following \Cref{proposition:W-DRO-upper-bound}, in practice, we minimise the tractable DRO-NPE objective:
\begin{align*}
\mathcal{L}_{\text{NPE}}(q_\phi;P_n)+ \varepsilon \Omega(\phi; P_n), \quad \text{where} \quad  \Omega(\phi; P_n) := \left(
\frac{1}{n}\sum_{i=1}^n \|\nabla_{\z}\ellNPE(q_\phi;\z_i)\|^2
\right)^{\frac{1}{2}}.
\end{align*}
As $\varepsilon \to 0$, this reduces to standard NPE.
As $\varepsilon$ increases, the regularisation term dominates, and the optimisation problem approaches
$\inf_\phi \Omega(\phi; P_n) \geq 0$.
If this infimum is attained at zero, then any minimiser $\phi_{\text{min}}$ satisfies
\(
\nabla_{\z} \log q_{\phi_{\text{min}}}(\btheta_i\mid \x_i)=0
\; \forall i=1,\dots,n.
\)
Provided $q_{\phi_{\text{min}}}(\btheta_i\mid \x_i)>0$, this implies \( \nabla_{\z} q_{\phi_{\text{min}}}(\btheta_i\mid \x_i)=0 \quad \forall i=1,\dots,n\).
Thus, for large \(\varepsilon\), the objective favours conditional densities that vary less sharply in $\z$ at the simulated pairs.
This is consistent with improving expected coverage of HPDR, since overconfident densities tend to allocate excess mass around training pairs.

The DRO-NPE objective also requires the selection of $\varepsilon$.
The theoretical choice of $\varepsilon$ in \Cref{theorem:W-DRO-upper-bound-chain} is a high-probability concentration radius ensuring $P\in\mathcal A_p(P_n; \varepsilon)$, but it depends on unknown constants and is not directly usable.
We therefore follow common DRO practice and treat \(\varepsilon\) as a hyperparameter selected using a validation set \citep{mohajerin2018data}.
In our setting, a natural criterion is KL-based miscalibration. 
Indeed, \Cref{theorem:W-DRO-upper-bound-chain} implies that miscalibration upper-bounds miscoverage, so reducing the former can also improve the latter when the bound is sufficiently tight. 
As noted in \Cref{sec:miscov-miscal}, miscalibration is also linked to entropy maximisation, making it a principled and interpretable criterion for choosing $\varepsilon$. 
We estimate \(\kl_{\mathrm{cal}}^S\) by density-ratio estimation, using its representation as the KL divergence between the dependent joint \(P_{\tilde U,\x}\), where \(\tilde U\mid \x \sim (F^Q_{S\mid\x})_{\#}P_{S\mid\x}\), and the independent reference \(P_{U,\x}=P_U\otimes P_{\x}\), where $\otimes$ denotes the product distribution, i.e. $P_{U,\x}$ is the joint distribution obtained when \(U\sim\mathcal{U}[0,1]\) is independent of \(\x\). This yields an \(\varepsilon\)-dependent objective, optimised using standard training-validation splits and a one-dimensional search via Bayesian optimisation \citep{garnett2023bayesian}; see Appendix~\ref{appendix:hyperparam-selection} for details.

\paragraph{Further statistical guarantees for DRO-NPE.} We derive statistical guarantees on the excess risk, miscalibration and miscoverage of both the exact and tractable formulations of DRO-NPE and NPE relative to the oracle estimator $\phi^\star := \arg\min_{\phi} \NPEloss(q_\phi;P)$ (see Appendix~\ref{appendix:excess-miscov-miscal}). To the best of our knowledge, comparable guarantees are not available for existing methods. In particular, under a stronger Lipschitz gradient assumption that could be satisfied in the NPE setting for bounded \(\Theta\), we derive a DRO-NPE excess risk bound that depends on $\Omega(\phi^\star; P_n)$ along with other $n-$dependent constants. In contrast, the equivalent excess risk bound for NPE additionally depends on the regulariser $\Omega(\hat{\phi}_n; P_n)$, which relies on the empirical risk minimiser. As a result, DRO-NPE can yield sharper bounds compared to NPE, when $q_{\hat{\phi}_n}$ exhibits high functional variation relative to $q_{\phi^\star}$.

\section{Experiments}\label{sec:experiments}

We evaluate DRO-NPE on a range of SBI tasks in \Cref{sec:benchmarking}, comparing it with existing NPE methods that encourage conservative posteriors. In \Cref{sec:analysis}, we study the effect of $\varepsilon$, compare alternative hyperparameter-selection metrics, and quantify the training overhead of DRO-NPE. Finally, we apply DRO-NPE to a real-world cosmology problem in \Cref{sec:experiments_cosmology}. The
code to reproduce our experiments is available at \url{https://github.com/yugahikida/dro-npe}.

\subsection{Benchmarking DRO-NPE}
\label{sec:benchmarking}

\begin{figure}
\centering
\begin{subfigure}{0.69\linewidth}
    \centering
    \includegraphics[width=\linewidth]{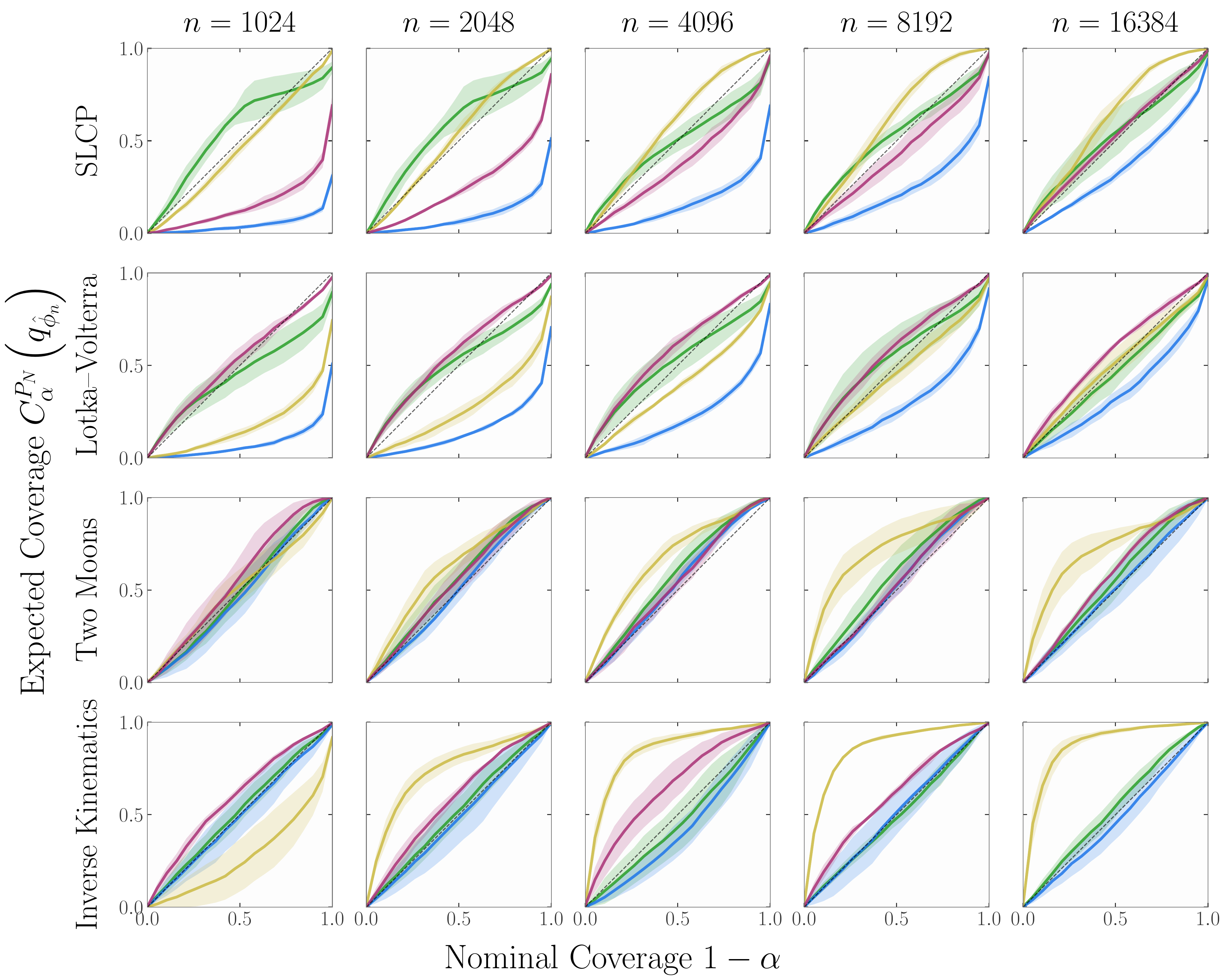}
    \caption{Coverage curves}
\end{subfigure}
\hfill
\begin{subfigure}{0.13\linewidth}
    \centering
    \raisebox{-2mm}{\includegraphics[width=\linewidth]{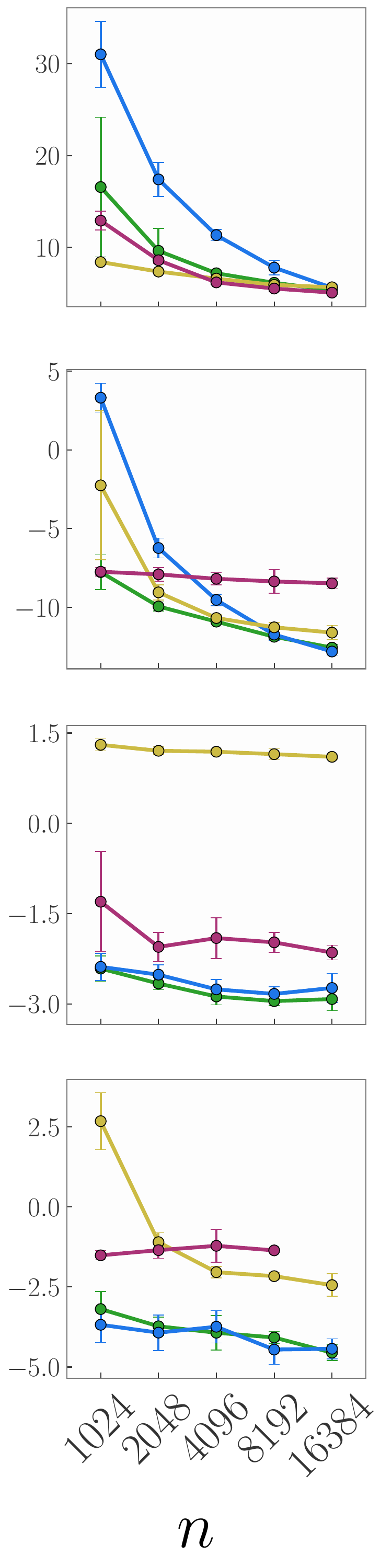}}
    \caption{NLPD}
\end{subfigure}
\hfill
\begin{subfigure}{0.13\linewidth}
    \centering
    \raisebox{-2mm}{\includegraphics[width=\linewidth]{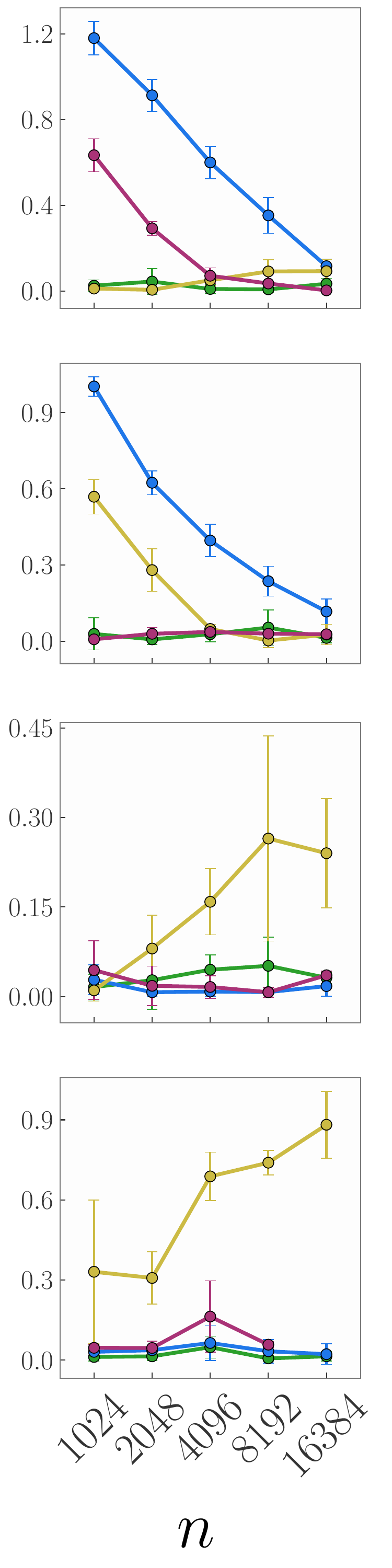}}
    \caption{$\kl_\mathrm{cal}^q$}
\end{subfigure}
\caption{Benchmarking DRO-NPE (\legendbox{drocolor}), CR-NPE (\legendbox{calnpecolor}), Bal-NPE (\legendbox{balancecolor}) and standard NPE (\legendbox{npecolor}) across four simulators and five simulation budgets. Means and standard deviations are shown over five random seeds. (a) Expected coverage curves of HPDR; the diagonal denotes perfect coverage, with curves above indicating conservativeness and curves below overconfidence. Coverage is computed at $18$ nominal levels using $N = 500$ test pairs
and $1000$ posterior samples per test point. 
(b) Negative log predictive density (NLPD) on held-out test data.
(c) KL-based calibration metric $\kl_\mathrm{cal}^q$ used for selecting $\varepsilon$. Some CR-NPE results are missing due to numerical instabilities during training.}
\label{fig:benchmarking-main}
\end{figure}

We benchmark DRO-NPE against standard NPE, coverage-regularised NPE \citep[CR-NPE;][]{falkiewicz2023calibrating}, and balanced NPE \citep[Bal-NPE;][]{delaunoy2023balancing} on the following common SBI tasks from \citet{Lueckmann2021, kuhmichel2026bayesflow}: SLCP ($d_\X = 8$, $d_\Theta = 5$), Lotka--Volterra ($d_\X = 20$, $d_\Theta = 4$), Two Moons ($d_\X = 2$, $d_\Theta = 2$), and Inverse Kinematics ($d_\X = 2$, $d_\Theta = 4$). 
Evaluation is based on three metrics: (i) coverage plots, which show empirical versus nominal coverage, (ii) negative log predictive density (NLPD), which is the average negative log-density assigned by the learned posterior to the true parameter on held-out test points, and (iii) our $\kl_\mathrm{cal}^q$ metric. Details of the baselines, tasks, and implementation are in Appendix~\ref{appendix:implementation-details}.
Following \citet{falkiewicz2023calibrating, delaunoy2023balancing}, we report results averaged over five random seeds; see \Cref{fig:benchmarking-main}.

NPE yields overconfident posteriors for SLCP and Lotka--Volterra. Although this effect is more pronounced in low-sample regimes, it persists for larger $n$. For Two Moons and Inverse Kinematics, the averaged NPE coverage curve is closer to the diagonal, though in some cases it shows substantial variability across runs.
Bal-NPE yields conservative posteriors for SLCP and Two Moons overall, but is overconfident for Lotka--Volterra and Inverse Kinematics in low-sample regimes. A possible explanation is that the approximation $q_\phi(\btheta \mid \x) \approx p(\btheta \mid \x)$ used to derive the regularisation term in Bal-NPE is inaccurate early in training, which may hinder optimisation. Interestingly, Bal-NPE yields overly conservative posteriors for Inverse Kinematics and Two Moons when the simulation budget is large.
CR-NPE is conservative across almost all experiments, except for SLCP.  
We also observe numerical instabilities at larger sample sizes, leading to missing results in some cases (such as Inverse Kinematics with $n=16384$); see Appendix~\ref{app:arch-hyperparam} for details.

DRO-NPE yields conservative posteriors for SLCP and near-perfect coverage for Two Moons and Inverse Kinematics. In the latter two cases, standard NPE already attains near-perfect coverage, but DRO-NPE does so with lower variance across runs. The only notable failure is Lotka--Volterra at $n = 1024$, where the posterior remains slightly overconfident. This may be mitigated by selecting a larger $\varepsilon$.
In terms of NLPD, DRO-NPE achieves the lowest values overall. This is consistent with the fact that the baselines primarily encourage conservativeness, often at the expense of predictive performance. DRO-NPE also often improves on standard NPE, which we attribute to reduced generalisation error induced by distributional robustness.
Finally, DRO-NPE consistently yields low $\kl_\mathrm{cal}^q$ across simulation budgets and tasks, as expected because $\varepsilon$ is selected to minimise this criterion. 

\begin{figure}
\centering
\begin{subfigure}{0.24\linewidth}
    \centering
    \includegraphics[width=\linewidth]{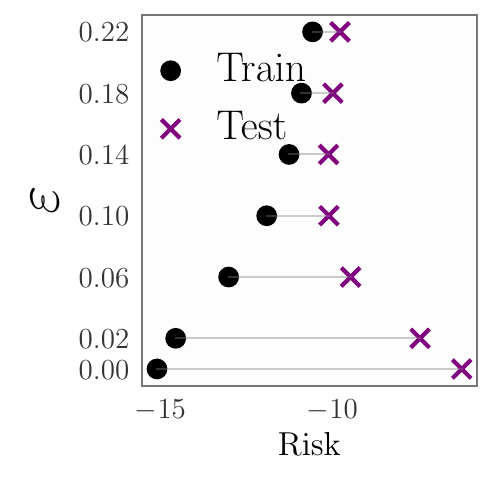}
    \caption{}
\end{subfigure}
\hfill
\begin{subfigure}{0.24\linewidth}
    \centering
    \includegraphics[width=\linewidth]{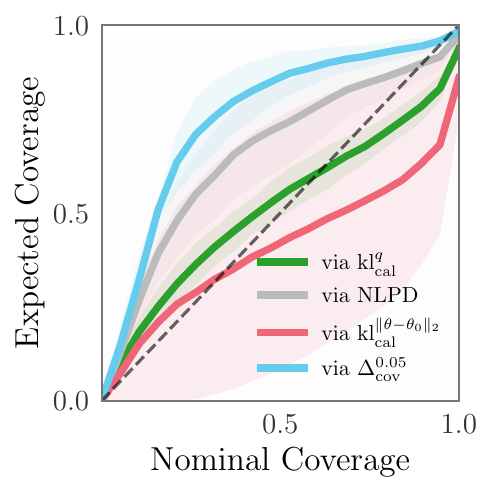}
    \caption{}
\end{subfigure}
\hfill
\begin{subfigure}{0.24\linewidth}
    \centering
    \includegraphics[width=\linewidth]{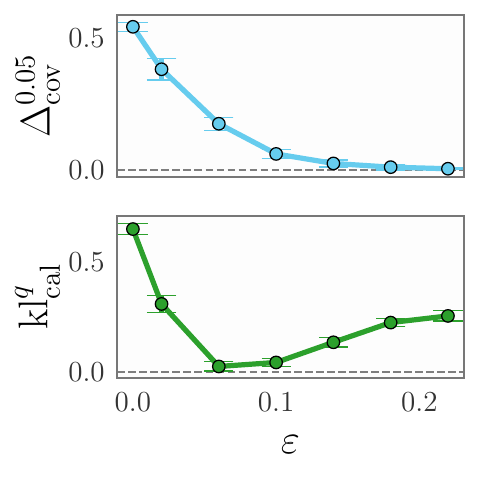}
    \caption{}
\end{subfigure}
\begin{subfigure}{0.24\linewidth}
    \centering
    \includegraphics[width=\linewidth]{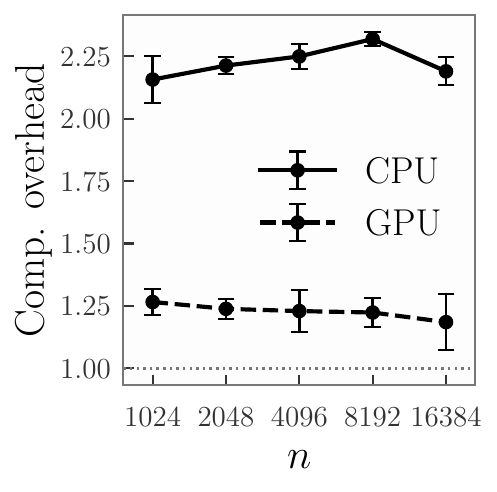}
    \caption{}
\end{subfigure}
\caption{ Analysis of DRO-NPE on Lotka--Volterra. Means and standard deviations are shown over five random seeds. (a) training and test risk across $\varepsilon$. 
(b) Coverage curves for $\varepsilon$ selected by $\kl_\mathrm{cal}^q$ (\legendbox{drocolor}), NLPD (\legendbox{dronlpdcolor}), $\kl_\mathrm{cal}^{\| \theta- \theta_0\|_2}$ with $\theta_0$ drawn from the prior (\legendbox{drokldistcolor}), and absolute miscoverage at $\alpha = 0.05$ (\legendbox{drocoverror}). (c) Absolute miscoverage at $\alpha = 0.05$ (\legendbox{drocoverror}) and $\kl_\mathrm{cal}^q$ (\legendbox{drocolor}) across $\varepsilon$.  (d) Relative wall-clock runtime of DRO-NPE versus standard NPE on GPU and CPU.}
\label{fig:empirical-anal}
\end{figure}

\subsection{Analysing DRO-NPE}
\label{sec:analysis}

We now study the behaviour of DRO-NPE with respect to the key design choices involved in its implementation. All experiments are conducted using the Lotka–Volterra simulator with $n = 2048$.

\paragraph{Generalisation gap versus $\varepsilon$.} 
We first examine how $\varepsilon$ affects the generalisation gap by comparing empirical risk on training and test data. As shown in \Cref{fig:empirical-anal}(a), small values of $\varepsilon$ lead to a large generalisation gap and poor test performance, reflecting the optimistic bias of empirical risk minimisation in this setting.
As $\varepsilon$ increases, the gap narrows and test performance improves, attaining its minimum at $\varepsilon = 0.14$. Further increasing $\varepsilon$ continues to reduce the gap but worsens test performance, as the posterior estimator becomes excessively conservative.

\paragraph{Choice of hyperparameter-selection metric.}
In \Cref{sec:benchmarking}, we select $\varepsilon$ using the KL-based miscalibration metric $\kl_\mathrm{cal}^q$, corresponding to $S(\btheta,\x)=q(\btheta\mid\x)$. 
We compare this choice with three alternatives: NLPD, absolute miscoverage \(\Delta_\mathrm{cov}^{0.05}\), and \(\kl_\mathrm{cal}^{\|\btheta-\btheta_0\|_2}\), computed with \(S(\btheta,\x)=\|\btheta-\btheta_0\|_2\) for a fixed reference point \(\btheta_0\).
NLPD is a natural validation metric because it equals the NPE loss and upper-bounds \(\kl_\mathrm{cal}^q\), while absolute miscoverage directly measures deviation from nominal coverage. 
As shown in \Cref{fig:empirical-anal}(b), selecting \(\varepsilon\) via \(\kl_\mathrm{cal}^q\) yields a coverage curve closest to the diagonal.
Selection via NLPD and \(\Delta_\mathrm{cov}^{0.05}\) produces more conservative posteriors, whereas \(\kl_\mathrm{cal}^{\|\btheta-\btheta_0\|_2}\) leads to undercoverage and higher variability across runs. Results for SLCP are in Appendix~\ref{app:additional-results}.

To understand the behaviour of absolute miscoverage, \Cref{fig:empirical-anal}(c) plots \(\Delta_\mathrm{cov}^{0.05}\) as a function of \(\varepsilon\), together with \(\kl_\mathrm{cal}^q\). 
We observe that $\Delta_\mathrm{cov}^{0.05}$ decreases nearly monotonically with $\varepsilon$, as larger values produce wider, more conservative posteriors and hence better coverage. This explains why selecting \(\varepsilon\) via absolute miscoverage can favour overly conservative posterior approximations.
By contrast, \(\kl_\mathrm{cal}^q\) is minimised at an intermediate value of \(\varepsilon\), making it more informative for selecting \(\varepsilon\) in DRO-NPE.

\paragraph{Computational cost.}
DRO-NPE is more expensive than standard NPE, but only by a moderate constant factor. The main overhead comes from higher-order automatic differentiation through \(\nabla_{\z}\log q_\phi(\btheta\mid\x)\), so the computational cost of this regularisation term grows with the joint parameter--data dimension \(d_{\Z}\). 
As shown in \Cref{fig:empirical-anal}(d), DRO-NPE takes roughly \(2\times\) the wall-clock time of standard NPE on CPU, but only about \(1.25\times\) on GPU, as the additional operations are highly parallelisable. This relative overhead varies little with \(n\) over the range considered, though it may increase once GPU resources are saturated.

 \subsection{Application to cosmology}\label{sec:experiments_cosmology}

\begin{wrapfigure}{r}{0.5\textwidth}
    \centering
    \vspace{-1.5em}

    \begin{minipage}{0.54\linewidth}
        \centering
        \includegraphics[width=\linewidth]{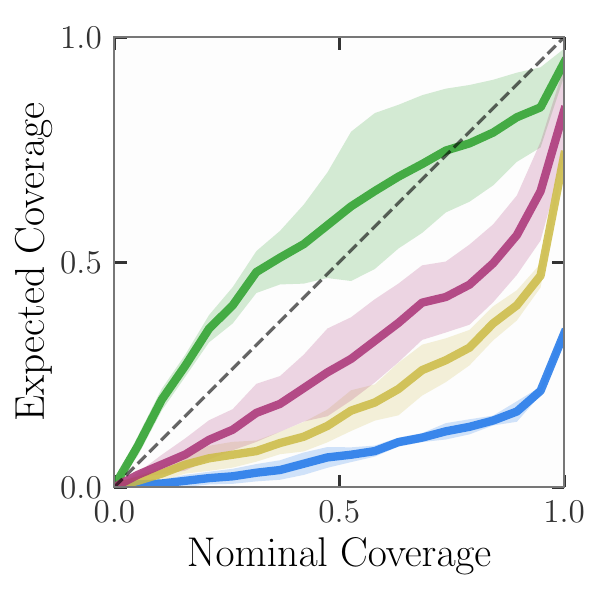}
    \end{minipage}
    \hfill
    \begin{minipage}{0.44\linewidth}
    \vspace{-0.8em}
        \centering
        \includegraphics[width=\linewidth]{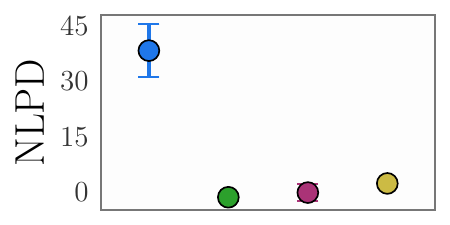}

        \vspace{0.3em}

        \includegraphics[width=\linewidth]{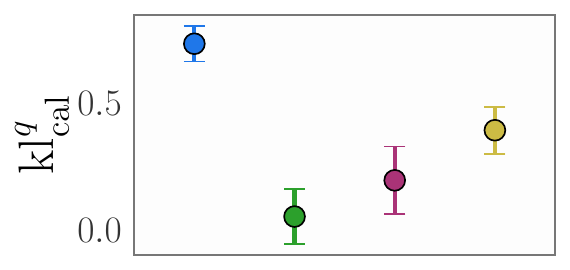}
    \end{minipage}
    \caption{Cosmology results: expected coverage, NLPD, and \(\kl_{\mathrm{cal}}^q\) for DRO-NPE (\legendbox{drocolor}), CR-NPE (\legendbox{calnpecolor}), Bal-NPE (\legendbox{balancecolor}), and standard NPE (\legendbox{npecolor}).}
    \label{fig:cosmo}
\end{wrapfigure}

Finally, we evaluate DRO-NPE on the CAMELS suite \citep{CAMELS_presentation, CAMELS_DR1}, a state-of-the-art cosmology dataset ($d_\X = 39$) for machine learning on simulated universes, used previously by \citet{Hikida2025}. One of the goals of CAMELS is to enable accurate constraints on $d_\Theta = 2$ cosmological parameters: the matter density $\Omega_m$ and the amplitude of matter fluctuations $\sigma_8$. 
We use \(1{,}000\) high-fidelity baryonic hydrodynamic simulations available to us, reserving \(900\) for training and \(100\) for testing. 
Conservative inference is particularly important in this setting as simulations are scarce, and the observables depend not only on \(\Omega_m\) and \(\sigma_8\), but also on nuisance astrophysical processes that can produce similar simulated data \citep{CAMELS_presentation}. Overconfident posteriors can therefore lead to overstated cosmological constraints and misleading scientific conclusions.

\Cref{fig:cosmo} reports expected coverage, NLPD, and $\kl_{\mathrm{cal}}^q$ across five independent training runs for each method on the same fixed train--test split. Standard NPE (\legendbox{npecolor}) is severely overconfident and has much worse NLPD than all conservative methods, while Bal-NPE (\legendbox{balancecolor}) improves NLPD but remains strongly undercovered. CR-NPE (\legendbox{calnpecolor}) improves upon Bal-NPE across all metrics, yet its coverage remains below the diagonal for all nominal levels. DRO-NPE (\legendbox{drocolor}) performs best overall, with the closest coverage to the diagonal, the lowest NLPD, and the lowest $\kl_{\mathrm{cal}}^q$. These results suggest that DRO-NPE can provide conservative yet efficient posterior inference in challenging low-simulation settings.

\section{Conclusion}\label{sec:conclusion}

We proposed DRO-NPE, a conservative formulation of NPE, to address finite-simulation failure modes including overconfidence, poor calibration, and large generalisation gaps. Our approach yields an objective that upper bounds the population NPE loss and KL-based metrics for uncertainty-quantification error, admits a tractable regularised objective, and enjoys statistical guarantees on excess risk, miscoverage, and miscalibration. 
Importantly, it remains compatible with standard normalising-flow architectures commonly used in SBI. 
However, DRO-NPE inherits limitations from DRO: it is more expensive than standard NPE, and there remains a theory--practice gap in selecting $\varepsilon$, since computing a theoretical radius guaranteeing the derived upper bounds is typically infeasible.

Although we focused on NPE, DRO-NPE could be extended to other conditional density estimation problems, variational inference, amortised inference tasks, and losses beyond NPE via analogous DRO formulations.
It could also be combined with complementary post-hoc calibration methods \citep{Masserano2023_Waldo, patel2023variational, cabezas2025cp4sbi} and ensembles \citep{Hermans2022}.
More broadly, DRO has provided one route to upper-bounding the population NPE objective; future work could investigate whether alternative frameworks, such as PAC-Bayes \citep{alquier2024user, haddouche2021pac}, offer analogous guarantees that can yield practical training objectives.

\subsection*{Acknowledgments}
The authors thank Niall Jeffrey for help with using the CAMELS dataset.
WL was supported by UCL’s Center for Doctoral Training in Data-Intensive Science and by the Alan Turing Institute.
YH and AB were supported by the Research Council of Finland grant no. 362534.
CD and FXB were supported by the EPSRC grant [EP/Y022300/1]. CD was
additionally supported by the project ‘Reliable insights from scientific simulations’ funded from
the EPSRC grant [UKRI3030].
The authors acknowledge the computational resources provided by the Aalto Science-IT project.

\bibliography{bibliography}
\bibliographystyle{apalike}


\newpage
\appendix

\begin{center}
    \LARGE \textbf{Appendix}
\end{center}
\vspace{1em}
We start by presenting the theoretical results on calibration, expected coverage of HPDR, miscalibration, and miscoverage in Appendix~\ref{appendix:theory-calib-cov}.
Then, in Appendix~\ref{app:theory-wasserstein-DRO}, we present theoretical results relating to the DRO-NPE method.
Finally, we present implementation details and additional experiments in Appendix~\ref{appendix:implementation-details} and Appendix~\ref{app:additional-results}, respectively.

\section{Theory: calibration and expected coverage of HPDR }
\label{appendix:theory-calib-cov}

In this section, we first show that the expected coverage of HPDR is a special case of calibration. Second, we present the data-processing inequality (DPI) and the chain rule for the Kullback-Leibler (KL) divergence, which lets us compare \(P\) and \(Q\) after applying pushforwards. Third, we prove that KL-based miscoverage upper-bounds its absolute value notion. Finally, we establish that KL-based miscalibration upper-bounds miscoverage.

We first prove that the expected coverage of HPDR is a special case of calibration, a result that was briefly shown in \citet{delaunoy2022towards}. 
This will require a statement more mathematical than \Cref{def:cal}, presented below. 
We say that $Q_{\btheta \mid \x}$ (or equivalently $q$) is calibrated w.r.t. to $S$ if:
\begin{talign}
     \mathbb{E}_{\x \sim P_{\x}} \left [P_{S \mid \x}\left(F^Q_{S \mid \x}(S_{\x}^P) \leq u \right) \right] = u \qquad \forall u \in [0,1]. \label{eq:calib-cdf}
 \end{talign}
 This is equivalent to stating that $F^Q_{S\mid\x}(S_{\x}^P)$ is distributed as $\mathcal U[0,1]$ under $\x\sim P_{\x}$. 

\begin{lemma}[Expected coverage of HPDR as a special case of calibration]
\label{lemma:cov-special-case-cal}
    Fix $q$. Let $S(\btheta; \x):= q(\btheta \mid \x)$, and assume that the CDF of $Q_{S \mid \x}$ is continuous and that $F^Q_{S\mid \x}$ is strictly increasing. Then, for every $\alpha \in (0,1)$,
    \[
    C_\alpha^P(q)
    =
    1 - \mathbb{E}_{\x \sim P_{\x}}\left[ P_{S \mid \x}\left ( F^Q_{S\mid \x}(S_{\x}^P) < \alpha \right) \right].
    \]
\end{lemma}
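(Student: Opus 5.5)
The plan is to rewrite the event $\{\btheta \in \HPDR\}$ as an event about the random variable $F^Q_{S\mid\x}(S(\btheta;\x))$, with $S(\btheta;\x)=q(\btheta\mid\x)$. Once this is done, the identity follows by conditioning on $\x$. First, by the tower property, we write
\[
C_\alpha^P(q)=\mathbb{E}_{\x\sim P_{\x}}\Big[P_{\btheta\mid\x}\big(q(\btheta\mid\x)\ge \tau_\alpha^q(\x)\big)\Big]
=1-\mathbb{E}_{\x\sim P_{\x}}\Big[P_{S\mid\x}\big(S^P_{\x}<\tau_\alpha^q(\x)\big)\Big].
\]
This uses only the definition of the HPDR and the fact that $S^P_{\x}\sim P_{S\mid\x}$ is the law of $q(\btheta\mid\x)$ under $\btheta\sim P_{\btheta\mid\x}$.

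Second, we identify the threshold through the CDF. By definition, $Q_{\btheta\mid\x}(q(\btheta\mid\x)\ge\tau_\alpha^q(\x))=1-\alpha$, that is, $Q_{S\mid\x}(S\ge \tau_\alpha^q(\x))=1-\alpha$. Since $F^Q_{S\mid\x}$ is continuous, $Q_{S\mid\x}(S<\tau)=F^Q_{S\mid\x}(\tau)$, and hence $F^Q_{S\mid\x}(\tau_\alpha^q(\x))=\alpha$. Strict monotonicity of $F^Q_{S\mid\x}$ then gives the pointwise equivalence
\[
s<\tau_\alpha^q(\x)\iff F^Q_{S\mid\x}(s)<F^Q_{S\mid\x}(\tau_\alpha^q(\x))=\alpha ,
\]
valid for every real $s$. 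Substituting $s=S^P_{\x}$ turns $P_{S\mid\x}(S^P_{\x}<\tau_\alpha^q(\x))$ into $P_{S\mid\x}(F^Q_{S\mid\x}(S^P_{\x})<\alpha)$ for each $\x$. Plugging this into the first display yields the claim.

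The main obstacle is the care needed in the second step. It relies on continuity, so that no atom sits at $\tau_\alpha^q(\x)$ and the threshold exists with $F^Q(\tau)=\alpha$ exactly. It also relies on strict monotonicity, so that the equivalence holds for all $s$, including values of $S^P_{\x}$ lying outside the support of $Q_{S\mid\x}$. The latter matters because $P_{S\mid\x}$ need not be absolutely continuous with respect to $Q_{S\mid\x}$. Both points are supplied exactly by the lemma's assumptions, so I expect this to go through, with only a brief remark needed on measurability of $\x\mapsto\tau_\alpha^q(\x)$ to justify the outer expectation.
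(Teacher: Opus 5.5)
Your proposal is correct and follows essentially the same route as the paper's proof. Both arguments first show $F^Q_{S\mid\x}(\tau_\alpha^q(\x))=\alpha$ from continuity, then use strict monotonicity to turn the threshold event on $S^P_{\x}$ into an event on $F^Q_{S\mid\x}(S^P_{\x})$, and finally average over $\x\sim P_{\x}$. The differences are cosmetic: you pass to the complement $\{S^P_{\x}<\tau_\alpha^q(\x)\}$ at the start rather than at the end, and you flag the measurability of $\x\mapsto\tau_\alpha^q(\x)$, which the paper leaves implicit.
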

\begin{proof}
 Suppose  $S(\btheta;\x) = q(\btheta \mid \x)$. Then, notice that
 \begin{align*}
 F^Q_{S\mid \x}(\tau^q_{\alpha}(\x)) &= Q_{S \mid \x}( (-\infty, \tau_\alpha^q(\x)]) \\
 &= Q_{\btheta \mid \x}(\{S(\btheta ; \x) \leq \tau_{\alpha}^q(\x)  \})
 \\
 &= Q_{\btheta \mid \x}(\{q(\btheta \mid \x) \leq \tau_{\alpha}^q(\x)  \})
 = \alpha,
 \end{align*}
 where the first equality is by definition of the cumulative distribution function, the second by that of the pushforward, the third by that of $S$, and the last by that of $\tau_{\alpha}^q(\x)$ together with continuity of the CDF of $Q_{S \mid \x}$.
 Therefore, denoting $S_{\x}^P = S(\btheta;\x)$ whenever $\btheta \sim P_{\btheta \mid \x}$, we have that for any $\x \in \X$,
\begin{align*}
P_{\btheta\mid\x} \left(\HPDR \right)
&= P_{\btheta\mid\x} \left(q(\btheta\mid\x)\ge \tau^q_\alpha(\x)\right)\\[0.4em]
&= P_{S\mid\x}\left(S_{\x}^P \ge \tau^q_\alpha(\x)\right)\\[0.4em]
&= P_{S\mid\x}\left(F^Q_{S\mid \x}(S_{\x}^P)\ge F^Q_{S\mid\x}(\tau^q_\alpha(\x))\right)\\[0.4em]
&= P_{S\mid\x}\left(F^Q_{S\mid \x}(S_{\x}^P)\ge \alpha\right).
\end{align*}
The first equality is the definition of the HPDR at level $\alpha$. 
The second follows from the choice $S(\btheta;\x)=q(\btheta\mid \x)$.
The third uses that $F_{S\mid\x}^Q$ is continuous and strictly increasing.
The last equality uses that $F_{S\mid\x}^Q(\tau_\alpha^q(\x))=\alpha$, established above.
Finally, we get that:
\begin{align*}
    C_\alpha^P(q) &= \mathbb{E}_{\x \sim P_{\x}}\left [ P_{\btheta \mid \x}(\HPDR) \right] \\[0.4em]
    &= \mathbb{E}_{\x \sim P_{\x}}\left[ P_{S \mid \x}\left ( F^Q_{S\mid \x}(S_{\x}^P) \geq \alpha \right) \right] \\[0.4em]
    &= 1 - \mathbb{E}_{\x \sim P_{\x}}\left[ P_{S \mid \x}\left ( F^Q_{S\mid \x}(S_{\x}^P) < \alpha \right) \right].
\end{align*}
Since this identity holds for every \(\alpha\in(0,1)\), it shows that expected HPDR coverage is exactly the special case of \eqref{eq:calib-cdf} obtained by taking \(S(\btheta;\x)=q(\btheta\mid\x)\) and \(u=\alpha\).
\end{proof}

We now define the expected Kullback-Leibler (KL) divergence, which will be used frequently in this paper. Suppose $P_{\btheta \mid \x}$ is absolutely continuous w.r.t. $Q_{\btheta \mid \x}$, and for simplicity, that $P_{\btheta \mid \x}$ and $Q_{\btheta \mid \x}$ admit densities $p(\btheta \mid \x), q(\btheta \mid \x)$. Then, the KL divergence is:
\begin{align*}
    \mathbb{E}_{\x \sim P_{\x}}\left [\KL \Big(P_{\btheta \mid \x} \;\|\; Q_{\btheta \mid \x} \Big)\right] := \mathbb{E}_{\x \sim P_{\x}}\left[ \mathbb{E}_{\btheta \sim P_{\btheta \mid \x}}\left[\log\frac{p(\btheta \mid \x)}{q(\btheta \mid \x)}\right] \right].
\end{align*}
We can then state the data-processing inequality and the chain rule for the KL divergence.
\begin{lemma}[Theorem 2.15 \& Corollary 2.18 from \citet{polyanskiy2025information}. Chain rule and data-processing inequality for KL divergence]
\label{lemma:DPI-for-KL-div}
Suppose that for some measurable space $\Y$, $f: \Theta \to \Y $ is a measurable function. Then, for any fixed $\x \in \X$ and any two distributions $P_{\btheta \mid \x}, Q_{\btheta \mid \x}$, such that $P_{\btheta \mid \x} \ll Q_{\btheta \mid \x}$, i.e. $P_{\btheta \mid \x}$ is absolutely continuous w.r.t. $Q_{\btheta \mid \x}$,
\begin{align}
    \KL (P_{\btheta \mid \x} \| Q_{\btheta \mid \x}) = \KL(f_{\#} P_{\btheta \mid \x}  \| f_{\#}Q_{\btheta \mid \x}) + \mathbb{E}_{t \sim f_{\#} P_{\btheta \mid \x}}\left[\KL(P_{\btheta \mid  f(\btheta)=t, \x} \| Q_{\btheta \mid  f(\btheta)=t, \x})\right].
\end{align}
In particular,
\[
\KL (f_{\#} P_{\btheta \mid \x} \| f_{\#} Q_{\btheta \mid \x}) \leq \KL (P_{\btheta \mid \x} \| Q_{\btheta \mid \x}),
\]
and equality holds if $f$ is one-to-one, or when $f$ is a sufficient statistic for testing $P_{\btheta \mid \x}$ against $Q_{\btheta \mid \x}$.
\label{lemma:chain-rule-and-DPI-for-KL}
\end{lemma}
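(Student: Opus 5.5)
The plan is to prove the chain rule by disintegrating both measures along $f$ and factorising the Radon--Nikodym derivative. The data-processing inequality and the equality cases then follow from nonnegativity of the conditional KL term.

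Fix $\x$ and write $P:=P_{\btheta\mid\x}$, $Q:=P$'s comparison measure $Q_{\btheta\mid\x}$, $P_f:=f_{\#}P$ and $Q_f:=f_{\#}Q$. First, $P\ll Q$ implies $P_f\ll Q_f$: if $Q_f(B)=0$ then $Q(f^{-1}(B))=0$, hence $P(f^{-1}(B))=0$.

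Since $\Theta\subseteq\mathbb{R}^{d_\Theta}$ is standard Borel (I would also assume $\Y$ is standard Borel, or replace it by the image $\sigma$-algebra), regular conditional distributions $P_{\btheta\mid f(\btheta)=t}$ and $Q_{\btheta\mid f(\btheta)=t}$ exist. They satisfy $P(A)=\int P_{\btheta\mid f=t}(A)\,P_f(dt)$, and likewise for $Q$.

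The key step is the factorisation, for $P$-almost every $\btheta$,
\[
\frac{dP}{dQ}(\btheta)=\frac{dP_f}{dQ_f}\big(f(\btheta)\big)\cdot\frac{dP_{\btheta\mid f=f(\btheta)}}{dQ_{\btheta\mid f=f(\btheta)}}(\btheta).
\]
I would obtain it as follows. Set $g:=dP/dQ$ and $h(t):=\mathbb{E}_Q[g\mid f=t]$.
\begin{itemize}
\item For measurable $B\subseteq\Y$, $P_f(B)=\int_{f^{-1}(B)}g\,dQ=\int_B h\,dQ_f$, so $h=dP_f/dQ_f$ holds $Q_f$-almost everywhere.
\item On $\{h>0\}$ define $g/h(f(\cdot))$. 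Integrating against $Q_{\btheta\mid f=t}$ and checking the defining disintegration identity shows that, for $P_f$-a.e. $t$, $P_{\btheta\mid f=t}\ll Q_{\btheta\mid f=t}$ with density $g/h(t)$.
\item The set $\{h=0\}$ has $P_f$-measure zero, so it does not matter.
\end{itemize}
Taking logarithms, integrating under $P$, and applying the tower property $\mathbb{E}_P[\cdot]=\mathbb{E}_{t\sim P_f}\mathbb{E}_{P_{\btheta\mid f=t}}[\cdot]$ yields the stated identity.

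The main obstacle is justifying the splitting of the expectation of $\log g$ into two expectations when the terms may be infinite. I would handle it by noting two facts:
\begin{itemize}
\item the negative parts of $\log(dP/dQ)$, $\log(dP_f/dQ_f)$ and the conditional log-ratio are all $P$-integrable, via the standard bound $\mathbb{E}_P[(\log r)^-]\le \mathbb{E}_Q[r\log^- r]\le e^{-1}$;
\item each of the three KL terms is therefore well defined in $[0,\infty]$.
\end{itemize}
Additivity then holds in $[0,\infty]$, using Tonelli on the positive parts and finiteness of the negative parts.

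The inequality follows because the conditional term $\mathbb{E}_{t\sim P_f}[\KL(P_{\btheta\mid f=t}\,\|\,Q_{\btheta\mid f=t})]$ is nonnegative, by Gibbs' inequality applied pointwise in $t$.

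For the equality cases there are two situations.
\begin{itemize}
\item If $f$ is one-to-one (and bimeasurable, which holds automatically between standard Borel spaces by Kuratowski's theorem), both conditional laws equal $\delta_{f^{-1}(t)}$, so the conditional KL is zero.
\item If $f$ is sufficient, then $dP/dQ=\tilde h\circ f$ for some $\tilde h$. Then $h=\tilde h$, the conditional density $g/h(f)$ equals $1$ almost surely, $P_{\btheta\mid f=t}=Q_{\btheta\mid f=t}$ for $P_f$-a.e. $t$, and again the conditional term vanishes.
\end{itemize}
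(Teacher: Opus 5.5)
Your proof is correct. Note that the paper gives no argument of its own here: it simply imports the result from Polyanskiy--Wu.

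The usual textbook route behind that citation states the chain rule for joint laws. It then obtains the inequality by expanding the $\KL$ of the pair $(\btheta,f(\btheta))$ in both orders; conditioning on $\btheta$ contributes nothing, because $f(\btheta)$ is then deterministic. You instead disintegrate $\Theta$ directly along $f$ and write the conditional density explicitly as $g/h(t)$, with $h=\mathbb{E}_Q[g\mid f=t]$. This is the same Radon--Nikodym factorisation, but your version is self-contained and handles infinite values cleanly via the $e^{-1}$ bound on negative parts.

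It also exposes hypotheses that the paper's statement leaves implicit. In particular, your bimeasurability caveat is genuinely needed. Take $\Y=\Theta$ with the trivial $\sigma$-algebra and $f$ the identity: $f$ is measurable and one-to-one, yet $\KL(f_{\#}P\,\|\,f_{\#}Q)=0$ while $\KL(P\|Q)$ may be positive. So the ``one-to-one'' equality clause fails for an arbitrary measurable space $\Y$.

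Conversely, the chain rule and the inequality themselves do not need $\Y$ to be standard Borel. Conditional laws indexed by $t\in\Y$ already exist once $\Theta$ is standard Borel. In the paper's own applications ($\Y=\{0,1\}$ or $[0,1]$), none of this matters.
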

\vspace{1em}

We now show that KL-based miscoverage controls the usual absolute coverage error. 
In particular, by Pinsker's inequality, small KL-based miscoverage implies small absolute miscoverage. 
When this upper bound is reasonably tight, reducing the former also reduces the latter.
The proof uses the following elementary identity for Bernoulli distributions. 
Let $P_1$ and $P_2$ be Bernoulli distributions with probability parameters $p_1$ and $p_2$, respectively. Then
\[
\mathrm{TV}(P_1, P_2)
:= \frac{1}{2}\sum_{s \in \{0,1\}} |P_1(s) - P_2(s)|
= |p_1 - p_2|.
\]

\begin{lemma}[KL-based miscoverage upper-bounds its absolute value notion] 
\label{lemma:kl-based-miscov-upper-bounds-abs-val} 
Define $\kl_{\mathrm{cov}}^\alpha$ as in \Cref{def:kl-miscoverage}, and $\Delta_{\mathrm{cov}}^\alpha(q; P) := \left | C_\alpha^P(q) - (1 - \alpha) \right |$. Then, for any $\alpha \in (0,1)$ and $q(\btheta \mid \x)$,
\[
\Delta_{\mathrm{cov}}^\alpha(q; P) \leq \sqrt{\frac{1}{2}\kl_{\mathrm{cov}}^\alpha(q; P)}.
\]
\end{lemma}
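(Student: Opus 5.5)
The plan is to reduce the statement to a pointwise-in-$\x$ comparison of two Bernoulli distributions. We will combine the total-variation identity for Bernoullis stated above with Pinsker's inequality, and then pass the resulting bound through the outer expectation over $\x \sim P_{\x}$ using Jensen's inequality.

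First we rewrite the two quantities. For fixed $\x$, let $p_1(\x) := P_{\btheta\mid\x}(\HPDR)$ and $p_2(\x) := Q_{\btheta\mid\x}(\HPDR)$. Then $(T_{\HPDR})_{\#}P_{\btheta\mid\x} = \mathrm{Bern}(p_1(\x))$ and $(T_{\HPDR})_{\#}Q_{\btheta\mid\x} = \mathrm{Bern}(p_2(\x))$. By the definition of the threshold $\tau_\alpha^q(\x)$ in \Cref{def:hpdr-coverage}, $p_2(\x) = 1-\alpha$ for every $\x$. Conditioning on $\x$ in \eqref{eq:expected-cov-def} gives $C_\alpha^P(q) = \mathbb{E}_{\x\sim P_{\x}}[p_1(\x)]$. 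Hence
\[
\Delta_{\mathrm{cov}}^\alpha(q;P) = \left|\mathbb{E}_{\x\sim P_{\x}}\left[p_1(\x) - p_2(\x)\right]\right| \le \mathbb{E}_{\x\sim P_{\x}}\left[|p_1(\x)-p_2(\x)|\right].
\]

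Next we apply the Bernoulli identity $|p_1(\x)-p_2(\x)| = \mathrm{TV}\bigl(\mathrm{Bern}(p_1(\x)),\mathrm{Bern}(p_2(\x))\bigr)$. Pinsker's inequality then bounds this total variation by $\sqrt{\tfrac12 \KL\bigl(\mathrm{Bern}(p_1(\x))\,\|\,\mathrm{Bern}(p_2(\x))\bigr)}$ for each $\x$. Taking expectations and using concavity of $t\mapsto\sqrt{t}$ (Jensen), we pull the square root outside the expectation, which yields exactly $\sqrt{\tfrac12 \kl_{\mathrm{cov}}^\alpha(q;P)}$.

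The only step needing care is the identity $p_2(\x)=1-\alpha$, i.e.\ that a threshold with exact mass $1-\alpha$ exists. This is assumed in \Cref{def:hpdr-coverage}. If that assumption failed, one would define $C_\alpha^P$ relative to $\mathbb{E}_{\x}[p_2(\x)]$ instead. We would also remark that the case of infinite KL is trivial, and that measurability of $\x\mapsto p_i(\x)$ follows from the joint measurability of $q$.
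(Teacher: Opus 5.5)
Your proposal is correct and follows the paper's proof essentially step for step. You use the triangle inequality to bring the absolute value inside the expectation, identify $|P_{\btheta\mid\x}(\HPDR)-(1-\alpha)|$ as a Bernoulli total variation using $Q_{\btheta\mid\x}(\HPDR)=1-\alpha$, apply Pinsker pointwise in $\x$, and finish with Jensen for the concave square root; your side remarks on the exact-mass threshold, infinite KL, and measurability only make explicit what the paper leaves implicit.
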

\begin{proof}
    By definition of $C_\alpha^P$, we have that
    \begin{align*}
        \Delta_{\mathrm{cov}}^\alpha(q; P) =
         | \mathbb{E}_{\x \sim P_{\x}}[P_{\btheta \mid \x}(\HPDR) - (1-\alpha)]| 
        \leq \mathbb{E}_{\x \sim P_{\x}}[|P_{\btheta \mid \x}(\HPDR) - (1-\alpha)|].
    \end{align*}
    Then, for each fixed $\x$,
    \[
    |P_{\btheta \mid \x}(\HPDR) - (1-\alpha)| = \mathrm{TV}\left((T_{\HPDR})_{\#}P_{\btheta \mid \x}, \, (T_{\HPDR})_{\#}Q_{\btheta \mid \x}\right),
    \]
    since $(T_{\HPDR})_{\#}P_{\btheta \mid \x}$ and $(T_{\HPDR})_{\#}Q_{\btheta \mid \x}$ are Bernoulli laws with parameters $P_{\btheta \mid \x}(\HPDR)$ and $Q_{\btheta \mid \x}(\HPDR)=1-\alpha$. Therefore,  applying Pinsker's inequality pointwise in $\x$ yields:
    \begin{align*}
    \Delta_{\mathrm{cov}}^\alpha(q; P) &\leq \mathbb{E}_{\x \sim P_{\x}}\left[\mathrm{TV}\left((T_{\HPDR})_{\#}P_{\btheta \mid \x}, \, (T_{\HPDR})_{\#}Q_{\btheta \mid \x}\right)\right]  \\
    &\leq \mathbb{E}_{\x \sim P_{\x}}\left[\sqrt{\frac{1}{2}\KL \left((T_{\HPDR})_{\#}P_{\btheta \mid \x}\, \| \, (T_{\HPDR})_{\#}Q_{\btheta \mid \x} \right)} \right].
    \end{align*}
    Finally, Jensen's inequality gives:
    \begin{align*}
     &\mathbb{E}_{\x \sim P_{\x}}\left[\sqrt{\frac{1}{2}\KL \left((T_{\HPDR})_{\#}P_{\btheta \mid \x}\, \| \, (T_{\HPDR})_{\#}Q_{\btheta \mid \x} \right)} \right] \\
     &\leq \sqrt{\frac{1}{2} \mathbb{E}_{\x \sim P_{\x}}\left[\KL \left((T_{\HPDR})_{\#}P_{\btheta \mid \x}\, \| \, (T_{\HPDR})_{\#}Q_{\btheta \mid \x} \right)\right]}  = \sqrt{\frac{1}{2}\kl_{\mathrm{cov}}^\alpha(q; P)}.
    \end{align*}
\end{proof}
We end this section by showing that miscalibration is an upper-bound to miscoverage. This result will be used to prove \Cref{theorem:W-DRO-upper-bound-chain} in the next section.

\begin{lemma}[Miscalibration upper-bounds miscoverage] 
\label{lem:miscal-upper-bound-miscov}
Define 
\[
\kl_{\mathrm{NPE}}(q; P) := \mathbb{E}_{\x \sim P_{\x}}\left[\KL(P_{\btheta \mid \x}  \; \| \; Q_{\btheta \mid \x} )\right] = \NPEloss(q; P) + K(P).
\]
Then, for any distribution $Q$ with density $q$, 
\[
\sup_{\alpha \in (0,1)} \kl_{\mathrm{cov}}^\alpha(q; P) \leq  \kl_{\mathrm{NPE}}(q;P) \qquad \text{and} \qquad \kl_{\mathrm{cal}}^S(q;P) \leq  \kl_{\mathrm{NPE}}(q;P).
\]
Further, suppose that $S(\btheta;\x)=f(q(\btheta\mid\x))$ for some strictly increasing $f:[0, \infty) \to\mathbb{R}$, and $F^Q_{S\mid\x}$ is also strictly increasing. Then
\[
\sup_{\alpha \in (0,1)} \kl_{\mathrm{cov}}^\alpha(q; P) \le \kl_{\mathrm{cal}}^S(q;P) \leq \kl_{\mathrm{NPE}}(q;P).
\]
\end{lemma}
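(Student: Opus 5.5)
The first two inequalities follow directly from the data-processing inequality in \Cref{lemma:DPI-for-KL-div}, applied pointwise in $\x$ and then integrated against $P_{\x}$. For fixed $\alpha$ and $\x$, take the measurable map $T_{\HPDR}:\Theta\to\{0,1\}$. The lemma gives
\[
\KL\bigl((T_{\HPDR})_{\#}P_{\btheta\mid\x}\,\|\,(T_{\HPDR})_{\#}Q_{\btheta\mid\x}\bigr)\le \KL(P_{\btheta\mid\x}\,\|\,Q_{\btheta\mid\x}).
\]
Taking expectations over $\x\sim P_{\x}$ yields $\kl_{\mathrm{cov}}^\alpha(q;P)\le \kl_{\mathrm{NPE}}(q;P)$. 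The right-hand side does not depend on $\alpha$, so we may take the supremum over $\alpha\in(0,1)$.

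For calibration, we use the composite map $\btheta\mapsto F^Q_{S\mid\x}(S(\btheta;\x))$, which is measurable for fixed $\x$. Its pushforwards of $P_{\btheta\mid\x}$ and $Q_{\btheta\mid\x}$ are exactly $(F^Q_{S\mid\x})_{\#}P_{S\mid\x}$ and $(F^Q_{S\mid\x})_{\#}Q_{S\mid\x}$, because pushforwards compose. The same DPI argument then gives $\kl^S_{\mathrm{cal}}\le\kl_{\mathrm{NPE}}$.

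For the refined chain we again apply the DPI, but now between the two pushforwards. Set $G_\x:=F^Q_{S\mid\x}\circ S(\cdot;\x)$. Then $\tilde P_\x:=(G_\x)_{\#}P_{\btheta\mid\x}$ and $\tilde Q_\x:=(G_\x)_{\#}Q_{\btheta\mid\x}$ are laws on $[0,1]$. The key claim is that, for each $\alpha$ and $\x$, the indicator $T_{\HPDR}(\btheta)$ factors through $G_\x$, with $T_{\HPDR}(\btheta)=\mathbf 1\{G_\x(\btheta)\ge \alpha\}$ at least $P_{\btheta\mid\x}$- and $Q_{\btheta\mid\x}$-almost surely. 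To see this, note that $f$ is strictly increasing, so $q(\btheta\mid\x)\ge\tau^q_\alpha(\x)$ holds iff $S(\btheta;\x)\ge f(\tau^q_\alpha(\x))$. Since $F^Q_{S\mid\x}$ is strictly increasing, this holds iff $G_\x(\btheta)\ge F^Q_{S\mid\x}(f(\tau^q_\alpha(\x)))$. Finally, $F^Q_{S\mid\x}(f(\tau^q_\alpha(\x)))=Q_{\btheta\mid\x}(q\le\tau^q_\alpha)=\alpha$, by the same computation as in the proof of \Cref{lemma:cov-special-case-cal}. With $h_\alpha(u):=\mathbf 1\{u\ge\alpha\}$ we therefore get
\[
(T_{\HPDR})_{\#}P_{\btheta\mid\x}=(h_\alpha)_{\#}\tilde P_\x \quad\text{and}\quad (T_{\HPDR})_{\#}Q_{\btheta\mid\x}=(h_\alpha)_{\#}\tilde Q_\x.
\]
The DPI applied to $h_\alpha$ gives $\KL\bigl((T_{\HPDR})_{\#}P_{\btheta\mid\x}\,\|\,(T_{\HPDR})_{\#}Q_{\btheta\mid\x}\bigr)\le\KL(\tilde P_\x\,\|\,\tilde Q_\x)$. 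Integrating over $\x$ and taking the supremum over $\alpha$ gives $\sup_\alpha\kl^\alpha_{\mathrm{cov}}\le\kl^S_{\mathrm{cal}}$. Combining this with the second inequality already proved completes the chain.

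The main obstacle is the identity $F^Q_{S\mid\x}(f(\tau^q_\alpha(\x)))=\alpha$, that is, the boundary behaviour at the threshold. It needs $Q_{\btheta\mid\x}(q(\btheta\mid\x)=\tau^q_\alpha(\x))=0$, which amounts to continuity of the CDF of $Q_{S\mid\x}$; this is implicit in the definition of $\tau^q_\alpha$ through $Q(\HPDR)=1-\alpha$ and in \Cref{lemma:cov-special-case-cal}. I would state this continuity explicitly as a standing assumption.

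If atoms cannot be excluded, the threshold set can be handled differently: the threshold value $c=F^Q_{S\mid\x}(f(\tau^q_\alpha(\x)))$ may differ from $\alpha$, but the DPI argument only needs $T_{\HPDR}$ to be a measurable function of $G_\x$. Using $\mathbf 1\{u\ge c\}$ in place of $h_\alpha$ still works, because strict monotonicity makes the event $\{G_\x\ge c\}$ coincide exactly with the HPDR, so the bound survives without requiring $c=\alpha$.
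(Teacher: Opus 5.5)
Your proposal is correct and follows essentially the paper's argument. The first two bounds come from the DPI applied pointwise in $\x$ and then integrated. For the refined chain, you use the fact that the HPDR indicator factors through $F^Q_{S\mid\x}\circ S(\cdot;\x)$ as $\mathbf{1}\{u\ge F^Q_{S\mid\x}(f(\tau^q_\alpha(\x)))\}$, and then apply the DPI once more. The paper takes exactly your fallback threshold $c=F^Q_{S\mid\x}(f(\tau^q_\alpha(\x)))$ and never identifies it with $\alpha$, so the continuity assumption you propose to add is not needed.
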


\begin{proof}
Showing that $\kl_{\mathrm{cov}}^\alpha(q; P) \leq \kl_{\mathrm{NPE}}(q; P)$ and  $\kl_{\mathrm{cal}}^S(q; P) \leq \kl_{\mathrm{NPE}}(q; P)$ is a direct application of the data-processing inequality (DPI) from \Cref{lemma:chain-rule-and-DPI-for-KL}. 
In particular, since $\kl_{\mathrm{cov}}^\alpha(q; P) \leq \kl_{\mathrm{NPE}}(q; P)$ holds for any $\alpha \in (0,1)$, we can further conclude that $\sup_{\alpha \in (0,1)} \kl_{\mathrm{cov}}^\alpha(q; P) \leq \kl_{\mathrm{NPE}}(q; P)$.
We then focus on showing $\kl_{\mathrm{cov}}^\alpha(q; P) \leq \kl_{\mathrm{cal}}^S(q; P)$.

First, whenever $S(\btheta; \x)=f(q(\btheta \mid \x))$ and $f$ is strictly increasing, then 
\begin{align*}
(T_{\HPDR})_{\#} P_{\btheta \mid \x}(1) &= P_{\btheta \mid \x}(\HPDR) \\
&=  P_{\btheta \mid \x}(\{q(\btheta \mid \x) \geq \tau_{\alpha}^q(\x) \}) \\
&=P_{\btheta \mid \x}(\{ f(q(\btheta \mid \x)) \geq f(\tau_{\alpha}^q(\x)) \}) \quad \text{since }f \text{ is strictly increasing,} \\
&= P_{S \mid \x} (\{ s \geq f(\tau_{\alpha}^q(\x))\}) \quad \text{using that } S(\btheta; \x) = f(q(\btheta \mid \x)), \\
&= (T_{A_\alpha^q(\x)} )_{\#}P_{S \mid \x}(1),
\end{align*}
where $A_\alpha^q(\x) := \{ s \geq f(\tau_\alpha^q(\x)) \}$, and similarly,
\[
(T_{\HPDR})_{\#} P_{\btheta \mid \x}(0) = P_{S \mid \x} (\{ s < f(\tau_{\alpha}^q(\x))\}) = (T_{A_\alpha^q(\x)} )_{\#}P_{S \mid \x}(0)
\]
Therefore, the laws $(T_{\HPDR})_{\#} P_{\btheta \mid \x}$ and $(T_{A_\alpha^q(\x)} )_{\#}P_{S \mid \x}$ coincide. This also holds for $(T_{\HPDR})_{\#} Q_{\btheta \mid \x}$ and $(T_{A_{\alpha}^{q}(\x)})_{\#} Q_{S \mid \x}$. 
Hence, we can rewrite miscoverage as follows:
\begin{align*}
    \kl_{\mathrm{cov}}^\alpha(q; P) =\mathbb{E}_{\x \sim P_{\x}}\left [\KL \left ( (T_{A_\alpha^q(\x)} )_{\#}P_{S \mid \x} \;\Big \| \; (T_{A_\alpha^q(\x)} )_{\#}Q_{S \mid \x} \right) \right].
\end{align*}
But,
\begin{align*}
    T_{A_{\alpha}^q(\x)}(s) &= \mathbf{1}\{ s \geq f(\tau^q_\alpha(\x)) \} \\
    &= \mathbf{1}\{F^Q_{S \mid \x}(s) \geq F^Q_{S \mid \x}(f(\tau^q_{\alpha}(\x)))\} \quad \text{since } F_{S \mid \x}^Q \text{ is strictly increasing}, \\
    &= T_{B^q_{\alpha}(\x)}(F^Q_{S \mid \x}(s))
\end{align*}
for $B_\alpha^q(\x) := [F^Q_{S \mid \x}(f(\tau^q_\alpha(\x))), 1]$. Therefore,
\[
(T_{A_\alpha^q(\x)} )_{\#}P_{S \mid \x} = (T_{B_{\alpha}^q(\x)})_{\#} \Big( (F^Q_{S \mid \x})_{\#} P_{S \mid \x} \Big ),
\]
and similarly for $Q_{S \mid \x}$. 
Thus, 
\begin{align*}
    \kl_{\mathrm{cov}}^\alpha(q; P) &= \mathbb{E}_{\x \sim P_{\x}}\left [ \KL\left( (T_{B_{\alpha}^q(\x)})_{\#} \Big( (F^Q_{S \mid \x})_{\#} P_{S \mid \x} \Big )  \Big\| (T_{B_{\alpha}^q(\x)})_{\#} \Big( (F^Q_{S \mid \x})_{\#} Q_{S \mid \x} \Big )  \right) \right] \\[0.5em]
    & \leq \mathbb{E}_{\x \sim P_{\x}}\left [ \KL\left(  (F^Q_{S \mid \x})_{\#} P_{S \mid \x}  \Big\|  (F^Q_{S \mid \x})_{\#} Q_{S \mid \x}   \right) \right] \quad \text{by the DPI,} \\[0.5em]
    &= \kl_{\mathrm{cal}}^S(q; P).
\end{align*}
Since this line of reasoning holds for any $\alpha \in (0,1)$, we further conclude that
\[
\sup_{\alpha \in (0,1)}  \kl_{\mathrm{cov}}^\alpha(q; P) \leq  \kl_{\mathrm{cal}}^S(q; P).
\]
\end{proof}

\section{Theory: DRO-NPE method}
\label{app:theory-wasserstein-DRO}
In \Cref{appendix:theorem-W-DRO-upper-bound-chain}, we adapt a Wasserstein concentration result and use it to prove \Cref{theorem:W-DRO-upper-bound-chain}.
Then, in \Cref{appendix:proof-upper-bound}, we specialise to the Wasserstein distance with $p=2$, and derive a local quadratic upper bound under a growth condition and a continuous differentiability assumption, which in turn yields \Cref{proposition:W-DRO-upper-bound}.
In \Cref{appendix:excess-miscov-miscal}, we further prove an excess risk type of result for miscoverage and miscalibration under stricter conditions.
Finally, we verify these conditions for normalising flows \citep{papamakarios2021normalizing, kobyzev2020normalizing}. Recall the Wasserstein distance is defined as:
\begin{talign*}
    W_p(P,Q) := \left( 
    \inf_{\pi \in \Pi(P,Q)} \int_{\mathcal{Z} \times \mathcal{Z}} \|\z - \z^\prime\|^p \pi(d\z, d\z^\prime)
    \right)^{\frac{1}{p}}
\end{talign*}
where $\Pi(P,Q)$ denotes the set of all joint probability distributions of $\z$ and $\z^\prime$ with marginals $P$ and $Q$ respectively.
\subsection{Wasserstein concentration results and proof of \Cref{theorem:W-DRO-upper-bound-chain}} \label{appendix:theorem-W-DRO-upper-bound-chain}
In this subsection, we first adapt a result from \citet{fournier2015rate} to our setting.
Then, we use this result, as well as the data-processing inequality (\Cref{lemma:DPI-for-KL-div}), to prove \Cref{theorem:W-DRO-upper-bound-chain}.
\begin{lemma}[Adapting Theorem 2 from \citet{fournier2015rate}]
\phantomsection \label{lemma:wasserstein-concentration-ineq}
    Suppose that $p \in [1, d_{\Z}/2)$, and that there exists  $\alpha > 2p$ such that $M_\alpha := \mathbb{E}_{\z \sim P}[\| \z \|^\alpha] < \infty$. Then,  $\forall \varsigma \in (0,\alpha)$ there exists positive constants $C_1, C_2$ depending only on $p, d_{\Z}, \alpha, M_{\alpha}, \varsigma$ such that for any $\delta >0$,
    \[
    \mathbb{P}(W_p(P_n, P) \leq \varepsilon) \geq 1-\delta,
    \]
    whenever $\varepsilon \geq \varepsilon(n, \delta)$, where:
    \[
    \varepsilon(n, \delta):= \max \left \{ \left(\frac{1}{C_1 n} \log \frac{2C_2}{\delta} \right)^{\frac{1}{d_{\Z}}}, \; \left(\frac{2C_2}{\delta} \right)^{\frac{1}{\alpha-\varsigma}} n^{\frac{1}{(\alpha-\varsigma)} - \frac{1}{p}}  \right \}
    \]
\end{lemma}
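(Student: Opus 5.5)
The plan is to take the concentration inequality of \citet{fournier2015rate} (their Theorem 2) and invert it into a radius, as is done in \citet{kuhn2019wasserstein}. Under $p \in [1, d_{\Z}/2)$ and $M_\alpha < \infty$ for some $\alpha > 2p$, Fournier and Guillin's result, specialised to this regime, gives the following. For every $\varsigma \in (0,\alpha)$ there are constants $C_1, C_2$, depending only on $p, d_{\Z}, \alpha, M_\alpha, \varsigma$, such that for all $n \geq 1$ and $x \in (0,\infty)$,
\[
\mathbb{P}\big(W_p^p(P_n,P) \geq x\big) \leq C_2\Big( \exp\!\big(-C_1 n x^{d_{\Z}/p}\big)\mathbf{1}\{x \leq 1\} + n\,(nx)^{-(\alpha-\varsigma)/p}\Big).
\]
Here the middle term $\exp(-cnx^2)$ of the general statement is absorbed, because $p<d_{\Z}/2$ makes the $x^{d_{\Z}/p}$ rate dominant. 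I will state this specialised form carefully, since it is the form from which the two branches of $\varepsilon(n,\delta)$ come. Because $W_p(P_n,P) > \varepsilon$ is the same event as $W_p^p(P_n,P) > \varepsilon^p$, I substitute $x = \varepsilon^p$.

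The core step is to split the target failure probability $\delta$ between the two terms, making each at most $\delta/2$. For the exponential term I need $C_2 \exp(-C_1 n \varepsilon^{d_{\Z}}) \leq \delta/2$, which is equivalent to $\varepsilon \geq \big(\frac{1}{C_1 n}\log\frac{2C_2}{\delta}\big)^{1/d_{\Z}}$. When $\varepsilon^p > 1$ the indicator vanishes and the condition holds trivially. For the polynomial term I need $C_2\, n^{1-(\alpha-\varsigma)/p}\varepsilon^{-(\alpha-\varsigma)} \leq \delta/2$. Solving for $\varepsilon$ gives $\varepsilon \geq (2C_2/\delta)^{1/(\alpha-\varsigma)}\, n^{\frac{1}{\alpha-\varsigma}-\frac1p}$, which is exactly the second branch. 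If $\varepsilon$ is at least the maximum of the two thresholds, the failure probability is at most $\delta$ by a union bound, which gives $\mathbb{P}(W_p(P_n,P)\le\varepsilon)\ge 1-\delta$. Both right-hand sides are decreasing in $\varepsilon$, so the inequality then holds for every $\varepsilon \geq \varepsilon(n,\delta)$.

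The main obstacle is bookkeeping rather than any new idea. I have to match the exact form of Fournier and Guillin's polynomial term, including the powers of $n$ and the $\varsigma$-loss, and the treatment of the case $x>1$. If their constants are parametrised differently, I would rename constants so the dependence stays on $(p,d_{\Z},\alpha,M_\alpha,\varsigma)$ only, possibly enlarging $C_2$ to absorb the factor from the discarded $\exp(-cnx^2)$ term. For $\delta \geq 1$ the claim is trivial, and when $\log(2C_2/\delta)<0$ the first branch is vacuous; I would handle both remarks briefly.
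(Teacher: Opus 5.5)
Your proposal is correct and follows the paper's proof essentially step for step. You invoke Theorem 2 of \citet{fournier2015rate} in the regime $p<d_{\Z}/2$ with a finite $\alpha$-moment, substitute $x=\varepsilon^p$, and require both the exponential and the polynomial terms to be at most $\delta/2$, which gives exactly the two branches of $\varepsilon(n,\delta)$. One small correction: in Fournier--Guillin the case $p<d_{\Z}/2$ is stated with $a(n,x)=C_2\exp(-C_1 n x^{d_{\Z}/p})$ alone, so there is no $\exp(-cnx^2)$ term to absorb.
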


\begin{proof}
From Theorem 2 of \citet{fournier2015rate}, whenever $P$ is probability distribution on $\mathbb{R}^{d_{\Z}}$, if there exists $\alpha > 2p$ such that \( M_\alpha:=\mathbb{E}_{\z \sim P}[\| \z \|^\alpha ] < \infty \), then for all $n\geq 1$, $\varepsilon>0$,
\[
\mathbb{P}(W_p^p(P_n, P) \geq \varepsilon) \leq a(n, \varepsilon) \mathbf{1}_{\varepsilon \leq 1} + b(n,\varepsilon),
\]
where 
\[
a(n, \varepsilon) = C_2\begin{cases}
    \exp(-C_1 n \varepsilon^2), & \text{ if } \;p > \frac{d_{\Z}}{2} \\
    \exp\left(-C_1 n \left(\frac{\varepsilon}{\log \left(2 + \frac{1}{\varepsilon}\right)}\right)^2\right) & \text{ if }\; p=\frac{d_{\Z}}{2} \\
    \exp\left(-C_1 n \varepsilon^{\frac{d_{\Z}}{p}}\right) & \text{ if }\; p \in [1, \frac{d_{\Z}}{2})
\end{cases}
\]
and 
\[
    b(n, \varepsilon) = C_2 n (n \varepsilon)^{-\frac{\alpha - \varsigma}{p}} \quad \forall \varsigma \in (0, \alpha).
\]
The positive constants $C_1,C_2$ depend only on $p, d_{\Z}$ and on $\alpha, M_\alpha, \varsigma$.
Therefore,
\[
\mathbb{P}(W_p(P_n, P) \leq \varepsilon)  = 1 - \mathbb{P}(W_p(P_n, P) \geq \varepsilon) \geq 1 - a(n, \varepsilon^p) \mathbf{1}_{\varepsilon^p \leq 1} - b(n, \varepsilon^p)
\]
For simplicity, constrain $p\in [1, d_{\Z}/2)$, which is most relevant in our setting. The other cases follow the same logic.

Fix $\delta >0$. We wish to find conditions for $\varepsilon$ in terms of $\delta, n$ and other constants such that:
\[
\mathbb{P}(W_p(P_n, P) \leq \varepsilon)  \geq 1 - \delta.
\]
We do so by selecting $\varepsilon$ such that it satisfies both $a(n, \varepsilon^p) \leq \delta /2, \; b(n, \varepsilon^p) \leq \delta / 2$, so that:
\[
1 - a(n, \varepsilon^p) \mathbf{1}_{\varepsilon^p \leq 1} - b(n, \varepsilon^p) \geq 1 - \delta \implies \mathbb{P}(W_p(P_n, P) \leq \varepsilon)  \geq 1 - \delta.
\]
Therefore,
\[
a(n, \varepsilon^p) = C_2\exp(-C_1n \varepsilon^{d_{\Z}}) \leq \frac{\delta}{2} \implies \varepsilon_a = \left(\frac{1}{C_1n} \log \frac{2C_2}{\delta} \right)^{\frac{1}{d_{\Z}}},
\]
and similarly
\[
b(n, \varepsilon^p)=C_2 n (n \varepsilon)^{-\frac{\alpha - \varsigma}{p}} \leq \frac{\delta}{2} \implies \varepsilon_b = \left(\frac{2C_2}{\delta} \right)^{\frac{1}{q-\varsigma}} n^{\frac{1}{\alpha-\varsigma} - \frac{1}{p}}.
\]
We then select:
\[
\varepsilon(n, \delta):= \max \left \{ \left(\frac{1}{C_1n} \log \frac{2C_2}{\delta} \right)^{\frac{1}{d_{\Z}}}, \; \left(\frac{2C_2}{\delta} \right)^{\frac{1}{\alpha-\varsigma}} n^{\frac{1}{\alpha-\varsigma} - \frac{1}{p}}  \right  \}.
\]
\end{proof}
\vspace{2em}
We are now ready to prove the Theorem.
\begin{proof}[Proof of \Cref{theorem:W-DRO-upper-bound-chain}]
    By \Cref{lem:miscal-upper-bound-miscov}, we have that for any $q_\phi$:
\[
\kl_{\mathrm{cov}}^\alpha(q_\phi; P) \leq \kl_{\mathrm{cal}}^S(q_\phi; P) \leq \kl_{\mathrm{NPE}}(q_\phi; P) \equiv \NPEloss(q_\phi; P) + K(P).
\]
Further, since the assumptions of \Cref{lemma:wasserstein-concentration-ineq} are satisfied, we obtain that whenever $\varepsilon\geq \varepsilon(n,\delta)$, where $\varepsilon(n, \delta)$ is given in \Cref{lemma:wasserstein-concentration-ineq}, 
\[
\mathbb{P}(W_p (P, P_n) \leq \varepsilon) \geq 1 - \delta,
\]
which implies that with probability $1-\delta$, $P  \in\mathcal{A}_p(P_n; \varepsilon)$. This gives that with probability $1-\delta$, for any $q_\phi$,
\[
\NPEloss(q_\phi; P) \leq \sup_{\tilde P \in\mathcal{A}_p(P_n;  \varepsilon)} \NPEloss(q_\phi; \tilde{P}).
\]
Hence, putting the inequalities together, with probability $1-\delta$, for any $q_\phi$,
\[
\kl_{\mathrm{cov}}^\alpha(q_\phi; P) \leq \kl_{\mathrm{cal}}^S(q_\phi; P) \leq \NPEloss(q_\phi; P) + K(P) \leq \sup_{\tilde P \in\mathcal{A}_p(P_n; \varepsilon)} \NPEloss(q_\phi; \tilde{P}) + K(P). 
\]
\end{proof}

\subsection{Proof of \Cref{proposition:W-DRO-upper-bound}} \label{appendix:proof-upper-bound}

We first show that under the assumption of $2-$growth, we can prove a local quadratic upper bound, so long as the function class is also twice continuously differentiable on $\mathbb{R}^{d_{\Z}}$.
\begin{lemma}
\phantomsection\label{lem:quad-up}
    Fix $q_\phi \in Q$. Suppose that $\z \mapsto \ell(q_\phi; \z) \in \mathcal{G}_{2}(\Z)$, and that it is twice continuously differentiable  on $\mathbb{R}^{d_\Z}$. Then, for any fixed $\z_0 \in \Z$, there exists a $C_\phi(\z_0)<\infty$ which is constant in $\z$ such that
    \[
    \ell(q_\phi; \z) \leq \ell(q_\phi; \z_0) + \nabla_{\z} \ell(q_\phi; \z_0)^\top (\z - \z_0) + C_{\phi}(\z_0) \|\z - \z_0\|^2, \quad \forall \z \in \Z.
    \]
\end{lemma}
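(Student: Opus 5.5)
We split $\Z$ into a compact neighbourhood of $\z_0$ and its complement. On each piece we bound the remainder
\[
r(\z) := \ell(q_\phi;\z) - \ell(q_\phi;\z_0) - \nabla_{\z}\ell(q_\phi;\z_0)^\top(\z-\z_0)
\]
by a multiple of $\|\z-\z_0\|^2$, and then take $C_\phi(\z_0)$ to be the larger of the two constants.

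\textbf{Near $\z_0$.} Fix the radius $1$ and let $B:=\{\z\in\mathbb{R}^{d_\Z}:\|\z-\z_0\|\le 1\}$. The loss is twice continuously differentiable on $\mathbb{R}^{d_\Z}$, so $\z\mapsto\|\nabla^2_{\z}\ell(q_\phi;\z)\|$ is continuous on the compact set $B$. It therefore attains a finite maximum $H(\z_0)$ there. Since $B$ is convex, Taylor's theorem with integral (or Lagrange) remainder along the segment from $\z_0$ to $\z$ gives
\[
r(\z)\le \tfrac12 H(\z_0)\|\z-\z_0\|^2 \quad \text{for all } \z\in B.
\]
We state this for all of $B$, not just $B\cap\Z$, because $\Z$ itself need not be convex; this is exactly why the extension to $\mathbb{R}^{d_\Z}$ is assumed.

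\textbf{Far from $\z_0$.} For $\z\in\Z$ with $\|\z-\z_0\|>1$, we use $2$-growth together with the triangle inequality. Growth gives $|\ell(q_\phi;\z)|\le C(1+\|\z\|^2)$, and $\|\z\|^2\le 2\|\z-\z_0\|^2+2\|\z_0\|^2$. Combining these with Cauchy--Schwarz on the linear term,
\[
|r(\z)| \le C\bigl(1+2\|\z_0\|^2+2\|\z-\z_0\|^2\bigr) + |\ell(q_\phi;\z_0)| + \|\nabla_{\z}\ell(q_\phi;\z_0)\|\,\|\z-\z_0\|.
\]
Since $\|\z-\z_0\|>1$, every constant term is at most that constant times $\|\z-\z_0\|^2$, and likewise $\|\z-\z_0\|\le\|\z-\z_0\|^2$. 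Hence $r(\z)\le K(\z_0)\|\z-\z_0\|^2$, where
\[
K(\z_0) := C(1+2\|\z_0\|^2) + 2C + |\ell(q_\phi;\z_0)| + \|\nabla_{\z}\ell(q_\phi;\z_0)\|.
\]
This is finite because $\ell$ and its gradient are finite at $\z_0$.

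Setting $C_\phi(\z_0):=\max\{\tfrac12 H(\z_0),K(\z_0)\}$ gives the claimed inequality for all $\z\in\Z$.

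The argument has no deep obstacle. The step needing most care is the near-region Taylor bound: it requires a convex compact set on which the Hessian is bounded. That is why we work in $\mathbb{R}^{d_\Z}$ via the extension and use the unit ball, which also gives a clean threshold, $\|\z-\z_0\|>1$, for absorbing lower-order terms in the far region. It is also worth recording for later use in \Cref{proposition:W-DRO-upper-bound} that $C_\phi(\z_0)$ depends on $\z_0$ only through $\|\z_0\|$, $\ell(q_\phi;\z_0)$, $\nabla_{\z}\ell(q_\phi;\z_0)$ and local Hessian bounds.
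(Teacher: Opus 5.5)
Your proposal is correct and follows essentially the same route as the paper: a unit-radius split around $\z_0$, a Taylor bound using the Hessian norm maximised over the closed unit ball in the near region, and in the far region the $2$-growth condition with $\|a+b\|^2\le 2\|a\|^2+2\|b\|^2$ and Cauchy--Schwarz, with $C_\phi(\z_0)$ taken as the maximum of the two constants. The only cosmetic difference is that you absorb the linear term via $\|\z-\z_0\|\le\|\z-\z_0\|^2$, whereas the paper uses $ab\le \tfrac12 a^2+\tfrac12 b^2$; this changes nothing.
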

\begin{proof}
We proceed by considering two cases: when $\| \z - \z_0\| < 1$ and $\| \z - \z_0\| \geq 1$. We denote $B(\z_0, 1):= \{\z : \| \z - \z_0 \| < 1 \}$, and $\bar{B}(\z_0, 1)$ its closure.

\textbf{Case 1: $\| \z - \z_0\| < 1$}: Since $B(\z_0, 1)\subset \mathbb{R}^{d_{\Z}}$, and the mapping $\z \mapsto \ell(q_\phi; \z)$ is continuously differentiable on $\mathbb{R}^{d_{\Z}}$, which is open and convex, Taylor's theorem (Theorem 2.68 of \citet{folland2024advanced}) applies at $\z_0$ for every $\z \in B(\z_0, 1)$:
\[
\ell(q_\phi; \z) = \ell(q_\phi; \z_0) + \nabla_{\z} \ell(q_\phi; \z_0) ^\top (\z - \z_0) + R_1(\z_0, \z - \z_0), \quad \forall \z \in B(\z_0, 1)
\]
where $R_1(\z_0, \z- \z_0)$ is the Taylor expansion remainder term. 
Further, since $\bar{B}(\z_0, 1)$ is compact and the Hessian $\nabla_{\z}^2 \ell(q_\phi; \z)$ is continuous by assumption, the Hessian has finite norm: $\| \nabla_{\z}^2 \ell(q_\phi; \z) \| \leq M(\z_0) \; \forall \z \in \bar{B}(\z_0, 1)$ for some constant $M(\z_0)$ independent of $\z$.
Therefore, with Corollary 2.75 of \citet{folland2024advanced}, we obtain:
\[
|R_1(\z_0, \z - \z_0)| \leq \frac {M(\z_0)} 2 \| \z - \z_0 \|^2, \quad \forall \z \in B(\z_0, 1).
\]
We then conclude:
\[
\ell (q_\phi; \z) \leq \ell (q_\phi; \z_0) + \nabla_{\z} \ell (q_\phi; \z_0)^\top (\z - \z_0) + \frac {M(\z_0)}{2}  \| \z - \z_0 \|^2, \quad \forall \z \in B(\z_0, 1).
\]

\textbf{Case 2: $\| \z - \z_0\| \geq 1$}: Notice that 
\[
- \nabla_{\z} \ell(q_\phi; \z_0)^\top  (\z - \z_0) \leq \| \nabla_{\z} \ell(q_\phi; \z_0) \| \|  \z - \z_0 \| \leq \frac{1}{2} \| \nabla_{\z} \ell(q_\phi; \z_0)  \|^2 + \frac{1}{2} \| \z - \z_0 \|^2,
\]
where the first inequality is by Cauchy-Schwarz, and the second holds because 
whenever $a, b>0$, $ab \leq \frac12 a^2 + \frac12 b^2$. 
Further, using the growth condition of the map $\z \mapsto \ell(q_\phi; \z)$ for $\rho=2$, for some constant $\tilde C_\phi$,
\[
\ell(q_\phi; \z) \leq \tilde C_\phi(1 +   \| \z_0 + \z - \z_0\|^2)  \leq \tilde C_\phi(1 + 2 \| \z_0\|^2 + 2 \| \z - \z_0\|^2),
\]
where the second inequality is using that $\|a + b\|^2 \leq 2 \| a \|^2 + 2 \|b\|^2$. Together, these results imply:
\begin{align*}
 \ell(q_\phi; \z) - \nabla_{\z} \ell(q_\phi; \z_0)^\top  (\z - \z_0) &\leq \tilde C_\phi(1 + 2 \| \z_0\|^2 + 2 \| \z - \z_0\|^2) \\
 &+ \frac{1}{2} \| \nabla_{\z} \ell(q_\phi; \z_0)  \|^2 + \frac{1}{2} \| \z - \z_0 \|^2 + \ell(q_\phi; \z_0) - \ell(q_\phi; \z_0) \\
 &:= K_\phi(\z_0) + \ell(q_\phi; \z_0) +  \left(2 \tilde C_\phi + \frac12\right) \| \z - \z_0\|^2,
\end{align*}
for some constant $K_\phi(\z_0)$ independent of $\z$, defined as:
\[
K_\phi(\z_0):= \tilde C_\phi + 2 \tilde C_\phi \| \z_0 \|^2 + \frac12 \| \nabla_{\z} \ell(q_\phi; \z_0) \|^2 - \ell(q_\phi; \z_0).
\]
Finally, since $\|z - \z_0\| \geq 1$, we have that
\[
K_\phi(\z_0) + \left(2 \tilde C_\phi + \frac12\right) \| \z - \z_0\|^2 \leq  \left(|K_\phi(\z_0)| +2 \tilde C_\phi + \frac12\right) \| \z - \z_0\|^2 := A_\phi(\z_0) \| \z - \z_0\|^2,
\]
implying
\[
\ell(q_\phi; \z)  \leq \ell(q_\phi; \z_0) +  \nabla_{\z} \ell(q_\phi; \z_0)^\top  (\z - \z_0) + A_\phi(\z_0) \| \z - \z_0\|^2, \quad \forall \z \in \{ \z \in \Z : \|\z - \z_0\| \geq 1\}.
\]

Finally, taking $C_\phi(\z_0) := \max \left\{\frac{M(\z_0)}{2}, A_\phi (\z_0) \right\}$, we get that
\[
\ell(q_\phi; \z)  \leq \ell(q_\phi; \z_0) +  \nabla_{\z} \ell(q_\phi; \z_0)^\top  (\z - \z_0) + C_\phi(\z_0) \| \z - \z_0\|^2 \quad \forall \z \in \Z.
\]
\end{proof}

This quadratic upper bound can then be used to solve the inner supremum in the strong dual of the Wasserstein DRO objective and prove \Cref{proposition:W-DRO-upper-bound}. 
Note that the proof keeps the loss function more general, with $\ellNPE$ recovered by setting $\ell \equiv \ellNPE$.

\begin{proof}[Proof of \Cref{proposition:W-DRO-upper-bound}] We divide this proof into two parts. First, we derive the upper bound in terms of the gradient regulariser and the $C_\phi^n$ remainder term. Second, we show that the remainder term remains negligible under the assumptions considered in the Proposition.
We keep the proof at a general level of loss function $\ell$, and do not use $\ellNPE$ specifically.

First, recall that, as in \Cref{eq:wasserstein-strong-dual}, under the upper semi-continuity and $2-$growth assumptions on $\ell$, which hold in this setting, we have \citep{gao2023distributionally}:
\[
\sup_{\tilde P \in\mathcal{A}_p(P_n; \varepsilon)} \mathbb{E}_{\z \sim \tilde P}[\ell(q_\phi; \z)] = \inf_{\lambda \geq 0}\; \Bigg \{ \lambda \varepsilon^2 + \frac{1}{n}\sum_{i=1}^n \sup_{\z' \in \Z}\; \Big ( \ell(q_\phi; \z') - \lambda \|\z' - \z_i\|^2 \Big) \Bigg \}.
\]
Further, by \Cref{lem:quad-up}, for each $i$ and every $\z' \in \Z$,
    \[
    \ell(q_\phi; \z') \leq \ell(q_\phi; \z_i) + \nabla_{\z} \ell(q_\phi; \z_i)^\top (\z' - \z_i) + C_\phi(\z_i) \| \z' - \z_i\|^2.
    \]
    Hence, for every $\lambda \geq 0$,
    \begin{align*}
    \sup_{\z' \in \Z}\; \Big ( \ell(q_\phi; \z') - \lambda \|\z' - \z_i\|^2 \Big)  &\leq \ell(q_\phi; \z_i) + \sup_{\z' \in \Z}\; \Big \{ \nabla_{\z} \ell(q_\phi; \z_i)^\top (\z' - \z_i) \\[-1em]
    &\qquad\qquad\qquad\qquad\qquad\qquad+ (C_{\phi}(\z_i) - \lambda) \|\z' - \z_i\|^2 \Big \} \\
    & \leq \ell(q_\phi; \z_i) + \sup_{\z' \in \mathbb{R}^{d_{\Z}}}\; \Big \{ \nabla_{\z} \ell(q_\phi; \z_i)^\top (\z' - \z_i) \\[-1em]
    &\qquad\qquad\qquad\qquad\qquad\qquad+ (C_{\phi}(\z_i) - \lambda) \|\z' - \z_i\|^2 \Big \}.
    \end{align*}
    Now, restrict $\lambda$ such that $\lambda > C_\phi(\z_i), \forall i=1,\dots, n$. Then, since the objective inside the supremum is strictly concave and over $\mathbb{R}^{d_{\Z}}$, we get that the solution for every $i$ to the following is the unique maximiser $\z^\star_i$: 
    \begin{align*}
        0&=\nabla_{\z'}\Big ( \nabla_{\z} \ell(q_\phi; \z_i)^\top (\z' - \z_i) + (C_{\phi}(\z_i) - \lambda) \|\z' - \z_i\|^2 \Big) \\
        &= \nabla_{\z} \ell(q_\phi; \z_i) + 2(C_\phi(\z_i) - \lambda )( \z' - \z_i) \implies \z^\star_i = \z_i +  \frac{\nabla_{\z} \ell(q_\phi; \z_i)}{2(\lambda - C_\phi(\z_i))}.
    \end{align*}
    Fix $C^n_\phi:=\max_{1\leq i\leq n} C_\phi(\z_i)$. Then, substituting back, we get that for every $\lambda > C^n_\phi$,
    \begin{align*}
        &\lambda \varepsilon^2 + \frac{1}{n} \sum_{i=1}^n \sup_{\z' \in \Z}\; \Big ( \ell(q_\phi; \z') - \lambda \|\z' - \z_i\|^2 \Big) \\
        &\leq \frac{1}{n}\sum_{i=1}^n \ell(q_\phi; \z_i) + \lambda \varepsilon^2  +\frac{1}{4(\lambda - C^n_\phi)} \cdot \frac{1}{n}\sum_{i=1}^n \| \nabla_{\z} \ell(q_\phi; \z_i) \|^2  \\
        &= \frac{1}{n}\sum_{i=1}^n \ell(q_\phi; \z_i) + C^n_\phi \varepsilon^2 + t \varepsilon^2 + \frac{1}{4 t} \cdot \frac{1}{n}\sum_{i=1}^n \| \nabla_{\z} \ell(q_\phi; \z_i) \|^2 ,
    \end{align*}
    where in the last line we substitute $t = \lambda - C^n_\phi$. Now,
    \begin{align*}
        &\inf_{\lambda \geq 0}\; \Bigg \{ \lambda \varepsilon^2 + \frac{1}{n}\sum_{i=1}^n \sup_{\z' \in \Z}\; \Big ( \ell(q_\phi; \z') - \lambda \|\z' - \z_i\|^2 \Big) \Bigg \} \\
        &\leq \inf_{\lambda - C^n_\phi > 0}\; \Bigg \{ \lambda \varepsilon^2 + \frac{1}{n}\sum_{i=1}^n \sup_{\z' \in \Z}\; \Big ( \ell(q_\phi; \z') - \lambda \|\z' - \z_i\|^2 \Big) \Bigg \} \\
        &\leq \inf_{t > 0}  \Bigg \{ \frac{1}{n}\sum_{i=1}^n \ell(q_\phi; \z_i) + C^n_\phi \varepsilon^2 + t \varepsilon^2 + \frac{1}{4 t} \cdot \frac{1}{n}\sum_{i=1}^n \| \nabla_{\z} \ell(q_\phi; \z_i) \|^2 \Bigg \}\\
        &= \frac{1}{n}\sum_{i=1}^n \ell(q_\phi; \z_i) + \varepsilon \left (\frac{1}{n} \sum_{i=1}^n \| \nabla_{\z} \ell(q_\phi; \z_i)\|^2 \right)^{1/2} + C^n_\phi \varepsilon^2,
    \end{align*} 
    where in the first line we restrict the infimum to $\lambda > C^n_\phi$, in the second line we use the previously derived upper-bound and replace $t=\lambda - C^n_\phi$, and in the last line, we use that for some $A, \varepsilon$ constant in $t$,
    \[
    \inf_{t > 0} \left(t \varepsilon^2 + \frac{A}{4t}\right) = \varepsilon \sqrt{A}.
    \]

For the remainder term, recall that 
    \[
    C_\phi^n := \max_{1 \leq i \leq n} C_\phi(\z_i), \quad C_\phi(\z_i):= \max \left\{\frac{1}{2} \sup_{\z \in \bar{B}(\z_i, 1)} \| \nabla_{\z}^2 \ell(q_\phi; \z)\|, \; 2 \tilde{C}_\phi + \frac{1}{2} + |K_{\phi}(\z_i)| \right\},
    \]
    with
    \[
    K_\phi(\z_i)= \tilde C_\phi + 2 \tilde C_\phi \| \z_i \|^2 + \frac12 \| \nabla_{\z} \ell(q_\phi; \z_i) \|^2 - \ell(q_\phi; \z_i).
    \]
    Then, using the growth condition for $\ell, \nabla_{\z} \ell$, there exist $C_{0,\phi} < \infty$ and $C_{1,\phi} < \infty$ such that 
    \[
    K_\phi(\z_i) \leq \tilde C_\phi + 2 \tilde C_\phi \|\z_i\|^2 + \frac12 (C_{1,\phi} + C_{1, \phi}\|\z_i\|^\beta)^2 + C_{0, \phi}(1 + \|\z_i\|^2) \leq C ( 1 + \| \z_i \|^{\max \{2, 2\beta \}})
    \]
    for some $C < \infty$ independent of $\z_i$.
    Further, from the growth assumption for $\nabla^2 \ell$ there exists $C_{2,\phi} < \infty$ such that:
    \[
    \frac{1}{2} \sup_{\z \in \bar{B}(\z_i, 1)} \| \nabla^2 \ell (q_\phi; \z)\| \leq \frac{1}{2} \sup_{\z \in \bar{B}(\z_i, 1)} C_{2, \phi}(1 + \| \z \|^\alpha) \leq \frac{C_{2, \phi}} {2}(1 + ( 1 + \|\z_i \|)^\alpha) \leq C' (1 + \|\z_i\|^\alpha)
    \]
    for some constant $C'$ independent of $\z_i$.
    Hence, for any $\z_i$, there exists some $A_\phi < \infty$, such that 
    \[
    C_\phi(\z_i) \leq A_\phi (1 + \| \z_i \|^m), \quad m:=\max \{2, 2\beta , \alpha \}. 
    \]
    Therefore, we obtain:
    \[
    C_\phi^n =\max_{1 \leq i \leq n} C_\phi(\z_i) \leq A_\phi (1 + \max_{1 \leq i \leq n}\|\z_i\|^m) := \bar C_\phi^n. 
    \]
    Now, we turn to the light-tail assumption. 
    Recall that \( \mathbb{E}_{\z \sim P}[\exp(\eta \| \z \|^a)] \leq M \).
    By Markov's inequality, for $t > 0$,
    \[
    \mathbb{P}(\| \z \| > t) =\mathbb{P}(\exp(\eta \| \z \|^{a}) > \exp(\eta t^{a})) \leq \frac{\mathbb{E}_{\z \sim P}[\exp(\eta \| \z \|^a)]}{\exp(\eta t^a)} \leq M \exp(- \eta t^a).
    \]
    Hence,
    \begin{align*}
    \mathbb{P}(\max_{1 \leq i \leq n}\| \z_i \| \leq t) &= \mathbb{P}(\|\z_1 \| \leq t, \dots, \|\z_n \| \leq t) \\
    &=1 - \mathbb{P}(\{\|\z_1\| > t\} \cup \dots \cup \{\|\z_n\| > t\}) \\
    &\geq 1 - \sum_{i=1}^n \mathbb{P}(\|\z_i\| > t) \geq 1 - nM \exp(-\eta t^a).
    \end{align*}
    Now, since $m > 0$, $\|\z \|^m$ is increasing, so we have that $\max_i \| \z_i \|^m = (\max_i \| \z_i \|)^m$. Hence, 
    \[
    \mathbb{P}\left(\max_{1 \leq i \leq n} \| \z_i \| \leq t \right) = \mathbb{P}\left((\max_{1 \leq i \leq n} \| \z_i \|)^m \leq t^m \right) = \mathbb{P}\left(\max_{1 \leq i \leq n} \|\z_i \|^m \leq t^m \right) = \mathbb{P}\left(\bar C_\phi^n  \leq A_\phi(1 + t^m)\right)
    \]
    so that
    \[\mathbb{P}\left(\bar C_\phi^n  \leq A_\phi(1 + t^m)\right) \geq 1 - n M \exp(-\eta t^a).
    \]
    For some fixed $\delta > 0$, we then have that with probability at least $1-\delta$
    \[
    C_\phi^n \leq \bar C_\phi^n  \leq A_\phi \left(1 + \left( \frac1\eta \log \frac{M n}{\delta}\right)^{m/a} \right).
    \]
    Selecting $\varepsilon_n = o((\log n)^{-m / 2a})$ then yields the desired result:
    \[
    C_\phi^n \varepsilon_n^2 \to 0 \quad \text{in probability under }P.
    \]
\end{proof}

\subsection{Excess risk, miscoverage, and miscalibration} \label{appendix:excess-miscov-miscal}
In this section, we investigate the excess risk, miscoverage and miscalibration rates of the density estimator obtained by DRO-NPE. To obtain such theoretical results, we require a stricter assumption on the pointwise loss function $\ell$: Lipschitz gradient, as stated in \Cref{assump:lip-gradient}.

\begin{assumption}[Lipschitz gradient. \citet{gao2023finite}, Assumption 2]
For every $q_\phi \in \Q$, the map $\z \mapsto \ell(q_\phi;\z)$ is differentiable on $\Z$, and there exists $h>0$ such that \(
\|\nabla_{\z}\ell(q_\phi;\z_1)-\nabla_{\z}\ell(q_\phi;\z_2)\|
\le h\|\z_1-\z_2\|,
\; \forall \z_1,\z_2\in\Z,\ \forall q_\phi\in\Q. \)
\label{assump:lip-gradient}
\end{assumption}

The first result, \Cref{theorem:DRO-surrogate-vs-oracle}, shows the excess miscoverage and miscalibration error rates of the DRO-NPE estimator when the objective is in its exact (or equivalently, strong dual) form relative to the oracle minimiser, which corresponds to the density estimator that minimises the population risk.
The proof of \Cref{theorem:DRO-surrogate-vs-oracle} relies on the excess DRO risk (or optimality gap), a quantity commonly studied in DRO \citep[see e.g.][]{zeng2022generalization, gao2023finite}, and uses a hybrid of Wasserstein concentration results from \citet{fournier2015rate} and generalisation bounds from \citet{gao2023finite}.

\begin{theorem}[DRO-NPE miscoverage and miscalibration rate]
\phantomsection \label{theorem:DRO-surrogate-vs-oracle}
Define the oracle NPE and DRO minimisers as follows:
\[
\phi^\star := \arg \min_{\phi} \NPEloss(q_\phi; P), \quad \phi^{\mathrm{DRO}}_n := \arg \min_{\phi} \sup_{\tilde P \in\mathcal{A}_2(P_n; \varepsilon)} \NPEloss(q_\phi; \tilde P).
\]
Also let
\begin{align*}
   G_{\mathrm{cov}}^\alpha(q_\phi; P) :&= \kl_{\mathrm{NPE}}(q_\phi; P) - \kl_{\mathrm{cov}}^\alpha(q_\phi; P) \\
   &= \mathbb{E}_{\x \sim P_{\x}}\left[ \mathbb{E}_{t \sim (T_{\HPDR})_{\#} P_{\btheta \mid \x}}\left[\KL(P_{\btheta \mid  T_{\HPDR}(\btheta)=t, \x} \| Q_{\btheta \mid  T_{\HPDR}(\btheta)=t, \x})\right] \right] \geq 0, \\
  G_{\mathrm{cal}}^S(q_\phi; P) &:= \kl_{\mathrm{NPE}}(q_\phi; P) - \kl_{\mathrm{cal}}^S(q_\phi; P) \\
  &= \mathbb{E}_{\x \sim P_{\x}}\left[
\mathbb{E}_{u \sim (F^Q_{S \mid \x})_{\#} P_{S \mid \x}}\left[
\KL\!\left(
P_{\btheta \mid F^Q_{S \mid \x}(S(\btheta;\x))=u, \x}
\,\middle\|\,
Q_{\btheta \mid F^Q_{S \mid \x}(S(\btheta;\x))=u, \x}
\right)
\right]\right] \geq 0.
\end{align*}
Suppose the assumptions from \Cref{theorem:W-DRO-upper-bound-chain} hold and pick $\varepsilon \geq \varepsilon(n, \delta)$ accordingly, for some $\delta>0$.
Further, suppose \Cref{assump:lip-gradient} holds for $\ellNPE(q_\phi; \z) = - \log q_\phi(\btheta \mid \x)$ .
Then,  with probability at least $1-\delta$, for any $\alpha \in (0,1)$,
\begin{equation}
    \kl_{\mathrm{cov}}^\alpha(q_{\phi^{\mathrm{DRO}}_n}; P) - \kl_{\mathrm{cov}}^\alpha(q_{\phi^\star}; P) \leq 2 \varepsilon\; \mathbb{E}_{\z \sim P}[\| \nabla \ellNPE (q_{\phi^\star}; \z)\|^2]^{\frac{1}{2} }+ 4 h \varepsilon^2  + G_{\mathrm{cov}}^\alpha(q_{\phi^\star}; P)
\end{equation}
with $h$ as in \Cref{assump:lip-gradient}, and also:
\begin{equation}
    \kl_{\mathrm{cal}}^S(q_{\phi^{\mathrm{DRO}}_n}; P) - \kl_{\mathrm{cal}}^S(q_{\phi^\star}; P) \leq 2 \varepsilon\; \mathbb{E}_{\z \sim P}[\| \nabla \ellNPE (q_{\phi^\star}; \z)\|^2]^{\frac{1}{2} }+ 4 h \varepsilon^2  + G_{\mathrm{cal}}^S(q_{\phi^\star}; P)
\end{equation}
\end{theorem}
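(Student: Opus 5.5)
The plan is to reduce both inequalities to a single excess-risk bound on the population NPE loss, and then convert it into statements about $\kl_{\mathrm{cov}}^\alpha$ and $\kl_{\mathrm{cal}}^S$ using the decompositions defining $G_{\mathrm{cov}}^\alpha$ and $G_{\mathrm{cal}}^S$.

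The conversion is the easy direction. The chain rule in \Cref{lemma:chain-rule-and-DPI-for-KL}, applied with $f=T_{\HPDR}$ (respectively $f=F^Q_{S\mid\x}\circ S(\cdot;\x)$) and averaged over $\x\sim P_{\x}$, gives the identity $\kl_{\mathrm{NPE}}=\kl_{\mathrm{cov}}^\alpha+G_{\mathrm{cov}}^\alpha$ with $G_{\mathrm{cov}}^\alpha\ge 0$, and the same for calibration. For the DRO estimator I only need $\kl_{\mathrm{cov}}^\alpha(q_{\phi_n^{\mathrm{DRO}}};P)\le \kl_{\mathrm{NPE}}(q_{\phi_n^{\mathrm{DRO}}};P)$, which is \Cref{lem:miscal-upper-bound-miscov}. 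For the oracle I use the exact identity $\kl_{\mathrm{cov}}^\alpha(q_{\phi^\star};P)=\kl_{\mathrm{NPE}}(q_{\phi^\star};P)-G_{\mathrm{cov}}^\alpha(q_{\phi^\star};P)$. Subtracting, the constant $K(P)$ cancels and
\[
\kl_{\mathrm{cov}}^\alpha(q_{\phi_n^{\mathrm{DRO}}};P)-\kl_{\mathrm{cov}}^\alpha(q_{\phi^\star};P)\le \NPEloss(q_{\phi_n^{\mathrm{DRO}}};P)-\NPEloss(q_{\phi^\star};P)+G_{\mathrm{cov}}^\alpha(q_{\phi^\star};P).
\]
The same argument works verbatim for $\kl_{\mathrm{cal}}^S$. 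This step is uniform in $\alpha$ and does not involve the randomness of $P_n$.

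The core is the excess-risk bound, to be shown on the event $\{W_2(P_n,P)\le\varepsilon\}$, which has probability at least $1-\delta$ by \Cref{lemma:wasserstein-concentration-ineq}. Write $R_\varepsilon(\phi;\mu):=\sup_{\tilde P\in\mathcal A_2(\mu;\varepsilon)}\NPEloss(q_\phi;\tilde P)$. On the event, $P\in\mathcal A_2(P_n;\varepsilon)$, so:
\begin{itemize}
\item $\NPEloss(q_{\phi_n^{\mathrm{DRO}}};P)\le R_\varepsilon(\phi_n^{\mathrm{DRO}};P_n)$, as in \Cref{theorem:W-DRO-upper-bound-chain};
\item $R_\varepsilon(\phi_n^{\mathrm{DRO}};P_n)\le R_\varepsilon(\phi^\star;P_n)$, by optimality of $\phi_n^{\mathrm{DRO}}$;
\item by the triangle inequality, $\mathcal A_2(P_n;\varepsilon)\subseteq\mathcal A_2(P;2\varepsilon)$, so $R_\varepsilon(\phi^\star;P_n)\le R_{2\varepsilon}(\phi^\star;P)$.
\end{itemize}
The last quantity is centred at the true $P$ and involves only the fixed oracle $\phi^\star$.

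It remains to bound $R_{2\varepsilon}(\phi^\star;P)-\NPEloss(q_{\phi^\star};P)$ using \Cref{assump:lip-gradient}. Take any $\tilde P$ with $W_2(\tilde P,P)\le 2\varepsilon$ and an optimal coupling $(\z,\z')$. The descent lemma gives
\[
\ell(\z')\le\ell(\z)+\nabla\ell(\z)^\top(\z'-\z)+\tfrac h2\|\z'-\z\|^2 .
\]
Taking expectations and applying Cauchy--Schwarz,
\[
\NPEloss(q_{\phi^\star};\tilde P)-\NPEloss(q_{\phi^\star};P)\le 2\varepsilon\,\mathbb E_P[\|\nabla\ell(q_{\phi^\star};\z)\|^2]^{1/2}+\tfrac h2(2\varepsilon)^2 .
\]
This yields $2\varepsilon\,\mathbb E_P[\|\nabla\ellNPE(q_{\phi^\star};\z)\|^2]^{1/2}+2h\varepsilon^2$, which is within the claimed $4h\varepsilon^2$; the constant $4h$ presumably comes from invoking the Gao et al.\ bound directly, and either version suffices. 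Chaining these steps with the conversion step gives both displayed inequalities, simultaneously for all $\alpha$ on the same event.

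The main obstacle I expect is technical: the measurability and existence issues behind the $\arg\min$'s, and ensuring $\mathbb E_P\|\nabla\ell\|^2<\infty$. Lipschitz gradients give linear growth of $\nabla\ell$, and the moment assumption with $\alpha>4$ then handles integrability. I would also check that the chain-rule identity for $G$ holds for $q_{\phi^\star}$, which requires $P_{\btheta\mid\x}\ll Q_{\btheta\mid\x}$ and finiteness of $\kl_{\mathrm{NPE}}(q_{\phi^\star};P)$.
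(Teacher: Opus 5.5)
Your proposal is correct and follows the paper's proof. It uses the same conversion $\kl^\alpha_{\mathrm{cov}}(q_{\phi_n^{\mathrm{DRO}}};P)\le\kl_{\mathrm{NPE}}(q_{\phi_n^{\mathrm{DRO}}};P)$ together with the definition of $G$ at $\phi^\star$. It also uses the same chain: $P\in\mathcal A_2(P_n;\varepsilon)$, then optimality of $\phi_n^{\mathrm{DRO}}$, then enlargement to the $2\varepsilon$-ball centred at $P$ (this is the inclusion the paper's argument actually needs, although the paper writes $\mathcal A_2(P_n;2\varepsilon)$). The only difference is that you prove the final perturbation bound yourself, via the descent lemma and Cauchy--Schwarz under an optimal coupling, which gives $2h\varepsilon^2$. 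The paper instead cites Lemma 2 of \citet{gao2023finite} at radius $2\varepsilon$, which gives $h(2\varepsilon)^2=4h\varepsilon^2$; either yields the stated bounds.
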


\begin{proof}
We pick $\varepsilon \geq \varepsilon(n, \delta)$, as stated in \Cref{theorem:W-DRO-upper-bound-chain}.
Then, with probability at least $1-\delta$,
\begin{align*}
    \NPEloss(q_{\phi^{\mathrm{DRO}}_n}; P) \leq \sup_{\tilde P \in\mathcal{A}_2(P_n; \varepsilon)} \NPEloss(q_{\phi^{\mathrm{DRO}}_n}; \tilde P) \leq \sup_{\tilde P \in\mathcal{A}_2(P_n; \varepsilon)} \NPEloss(q_{\phi^\star}; \tilde P),
\end{align*}
where the first inequality holds because the concentration result ensures that $P \in\mathcal{A}_2(P_n; \varepsilon)$, and the second holds by the definition of the DRO minimiser $\phi^{\mathrm{DRO}}_n$.
We further conclude that with probability $1-\delta$, 
\[
\sup_{\tilde P \in\mathcal{A}_2(P_n; \varepsilon)} \NPEloss(q_{\phi^\star}; \tilde P) \leq \sup_{\tilde P \in\mathcal{A}_2(P_n; 2\varepsilon)} \NPEloss(q_{\phi^\star}; \tilde P),
\]
since $W_2(\tilde P, P) \leq W_2(\tilde P, P_n) + W_2( P_n, P) \leq 2 \varepsilon$ by the triangle inequality, implying $\mathcal{A}_2(P_n; \varepsilon) \subseteq \mathcal{A}_2(P_n; 2\varepsilon)$.

Now, we may use Lemma 2 of \cite{gao2023finite}, which states that whenever $\z \mapsto \ell(q_\phi; \z)$ has an $h-$Lipschitz gradient, $\forall \varepsilon\geq 0$, for any distribution $Q$ on $\Z$,
\[
\left|\sup_{\tilde P \in\mathcal{A}_2(P_n; \varepsilon)} \mathbb{E}_{\z \sim \tilde P}[\ell(q_\phi; \z)] - \mathbb{E}_{\z \sim Q}[\ell(q_\phi; \z)] -  \varepsilon \,\mathbb{E}_{\z \sim Q}\left[\| \nabla_{\z} \ell(q_\phi; \z)\|^2 \right]^{\frac{1}{2}} \right| \leq \varepsilon^2 h.
\]
We use this result with $\ellNPE(q_\phi; \z):= -\log q_\phi(\btheta \mid \x)$.
Then, with probability at least $1-\delta$,
\[
\sup_{\tilde P \in\mathcal{A}_2(P_n;  2\varepsilon)} \NPEloss(q_{\phi^\star}; \tilde P) - \NPEloss(q_{\phi^\star}; P) \leq 2\varepsilon \,\mathbb{E}_{\z \sim P}\left[\| \nabla_{\z} \ellNPE(q_{\phi^\star}; \z)\|^2 \right]^{\frac{1}{2}} + 4\varepsilon^2 h.
\]
We then obtain the \emph{DRO excess risk} as follows:
\begin{align}
  \NPEloss(q_{\phi^{\mathrm{DRO}}_n}; P) -  \NPEloss(q_{\phi^\star}; P) \leq 2\varepsilon \,\mathbb{E}_{\z \sim P}\left[\| \nabla_{\z} \ellNPE(q_{\phi^\star}; \z)\|^2 \right]^{\frac{1}{2}} + 4\varepsilon^2 h.
  \label{eq:dro-excess-risk}
\end{align}

Now, recall that
\[
\kl_{\mathrm{NPE}}(q_\phi; P) :=  \mathbb{E}_{\x \sim P_{\x}}\left [\KL \Big(P_{\btheta \mid \x} \;\|\; Q_{\btheta \mid \x}\Big)\right]  \equiv \NPEloss(q_\phi; P) + K(P).
\]
By \Cref{lem:miscal-upper-bound-miscov}, for any $\alpha \in (0,1)$ and $q_\phi$, $G_{\mathrm{cov}}^\alpha(q_\phi; P) \geq 0$. Similarly, for any choice of $S$ and $q_\phi$, $G_{\mathrm{cal}}^S(q_\phi; P) \geq 0$.
Therefore,
\begin{align*}
&\kl_{\mathrm{cov}}^\alpha(q_{\phi^{\mathrm{DRO}}_n}; P) - \kl_{\mathrm{cov}}^\alpha(q_{\phi^\star}; P) \\
&\leq \kl_{\mathrm{NPE}}(q_{\phi^{\mathrm{DRO}}_n}; P) - \kl_{\mathrm{cov}}^\alpha(q_{\phi^\star}; P) \\[0.5em]
&= \NPEloss(q_{\phi^{\mathrm{DRO}}_n}; P) - \NPEloss(q_{\phi^{\star}}; P) +\; \kl_{\mathrm{NPE}}(q_{\phi^\star}; P)  - \kl_{\mathrm{cov}}^\alpha(q_{\phi^\star}; P) \\[0.5em]
    &\leq 2\varepsilon \,\mathbb{E}_{\z \sim P}\left[\| \nabla_{\z} \ellNPE(q_{\phi^\star}; \z)\|^2 \right]^{\frac{1}{2}} + 4\varepsilon^2 h + G_{\mathrm{cov}}^\alpha(q_{\phi^\star}; P),
\end{align*}
and similarly for calibration,
\[
\kl_{\mathrm{cal}}^S(q_{\phi^{\mathrm{DRO}}_n}; P) - \kl_{\mathrm{cal}}^S(q_{\phi^\star}; P) \leq 2\varepsilon \,\mathbb{E}_{\z \sim P}\left[\| \nabla_{\z} \ellNPE(q_{\phi^\star}; \z)\|^2 \right]^{\frac{1}{2}} + 4\varepsilon^2 h + G_{\mathrm{cal}}^S(q_{\phi^\star}; P).
\]
\end{proof}

Next, in \Cref{lemma:NPE-vs-DRO-excess-risk}, we derive an excess risk bound under additional assumptions for the NPE empirical risk minimiser as well as for the regularised, first-order DRO-NPE objective, so that they can be compared.
The proof pattern for \Cref{lemma:NPE-vs-DRO-excess-risk} is almost the same for NPE and DRO-NPE, both using the generalisation bound in Corollary 3 of \citet{gao2023finite} and sharing the same remainder terms and sample-dependent DRO radius.
Note that although all of the below results are in terms of $\ellNPE, \NPEloss$, similar proof patterns can be used for more general risks and pointwise losses $\mathcal L, \ell$.

Before proving \Cref{lemma:NPE-vs-DRO-excess-risk}, we first define the following NPE and DRO-NPE minimisers, and establish the additional \Cref{as:cor-2-gao} needed for \Cref{lemma:NPE-vs-DRO-excess-risk}.
\begin{align*}
    &\hat{\phi}_n:= \arg \min_{\phi} \NPEloss(q_\phi; P_n) \\
    &\phi_{n}^{\mathrm{DRO-NPE}}:= \arg \min_{\phi}\; \NPEloss(q_\phi; P_n) + \varepsilon
\left(
\frac{1}{n}\sum_{i=1}^n \|\nabla_{\z}\ellNPE(q_\phi;\z_i)\|^2
\right)^{\frac{1}{2}} .
\end{align*}
We now state the additional assumptions needed for \Cref{lemma:NPE-vs-DRO-excess-risk}:
\begin{assumption}[Assumptions of Corollary 3 \cite{gao2023finite}] \label{as:cor-2-gao}Suppose that:
\begin{enumerate}
    \item[(i)]$P$ satisfies the transportation-information inequality $T_2(\tau)$ i.e. there exists $\tau > 0$ such that $$W_2(P,Q) \leq \sqrt{\tau \KL(Q||P)} \quad \forall Q \in \mathcal{P}_2.$$ 

    \item[(ii)] 
    There exists measurable functions $\kappa: \mathcal{Z} \rightarrow \mathbb{R}_{+}$ and $ \kappa_2: \Z \to \mathbb{R}_+$ and constants $\kappa_M, \kappa_L, \kappa_{2,M}, \kappa_{2, L} \geq 0$ satisfying $\kappa(\z) \leq \kappa_M + \kappa_L \|\z\|^2$ and $\kappa_2(\z) \leq \kappa_{2,M} + \kappa_{2,L} \|\z\|^2$ for all $\z \in \mathcal{Z}$, such that: 
    \begin{align*}
        |\ell(q_{\tilde{\phi}};\z) - \ell(q_{\phi};\z)| \leq \kappa(\z) \| \tilde{\phi} - \phi \| \quad \mathrm{for \,all}\;  \phi,\tilde{\phi} \in \Phi,\quad P-a.e. \quad \z \in \mathcal{Z} \\
        |\nabla_{\z} \ell(q_{\tilde{\phi}};\z) - \nabla_{\z} \ell(q_{\phi};\z)| \leq \kappa_2(\z) \| \tilde{\phi} - \phi \| \quad \mathrm{for \,all}\;  \phi,\tilde{\phi} \in \Phi,\quad P-a.e. \quad \z \in \mathcal{Z}
    \end{align*}
    \item[(iii)]  Let $N(\epsilon; \Phi, \|\cdot\|)$ denote the covering number of set $\Phi$ equipped with norm $\|\cdot\|$, defined as the smallest cardinality of an $\epsilon$-cover of $\Phi$, where $\Phi_{\epsilon}$ is an $\epsilon$-cover of $\Phi$ if for each $\phi \in \Phi$, there exists $\tilde{\phi} \in \Phi_\epsilon$ such that $\|\tilde{\phi} - \phi \| \leq \epsilon$. We assume that the parameter space admits a finite covering number at scale $\frac{1}{n}$, i.e. $\log N(\frac{1}{n}; \Phi, \|\cdot\|) < \infty$.
    \item[(iv)] Assume $\sigma = \sup_{\phi \in \Phi} \frac{ \left( \mathbb{E}_{\z \sim P} \left[ \|\nabla_{\z}\,\ell(q_\phi;\z)\|_2^4 \right] \right)^{\frac{1}{2}} }{ \mathbb{E}_{\z \sim P} \left[ \|\nabla_{\z}\,\ell(q_\phi;\z)\|_2^2 \right]} < \infty$.
\end{enumerate}
\end{assumption}
As noted by \citet{gao2023finite}, Assumption~\ref{as:cor-2-gao}(i) is implied by a log-Sobolev inequality.
This condition holds, for example, for Gaussian and strongly log-concave distributions, and more generally for many light-tailed distributions with Gaussian-type concentration. 
Assumption~\ref{as:cor-2-gao}(ii) is a Lipschitz continuity assumption in $\phi$ for the loss and its gradient, with a Lipschitz constant allowed to grow at most quadratically in $z$. 
Thus, changes in the parameter $\phi$ induce controlled changes in the loss and its gradient, up to a quadratic envelope in the data variable.
Finally, Assumption~\ref{as:cor-2-gao}(iii) controls the complexity of the parameter space, and hence, together with Assumption~\ref{as:cor-2-gao}(ii), the induced loss class $\{ \ell(q_\phi; \cdot) : \phi \in \Phi \}$. 
Since $\Phi \subseteq \mathbb{R}^{d_{\Phi}}$, the assumption is satisfied, for example, whenever $\Phi$ is bounded. 
Finally, Assumption~\ref{as:cor-2-gao}(iv) is a uniform moment condition on the input-gradient norm, and is satisfied when \(\|\nabla_{\z}\ell(q_\phi;\z)\|_2\) has uniformly controlled fourth moment relative to its second moment under \(P\), over \(\phi\in\Phi\).

Having now stated the assumptions, before presenting \Cref{lemma:NPE-vs-DRO-excess-risk}, we define a few quantities that will appear in the Lemma.
Let \(t>0\) and \(n>8\sigma^2t\). Define the sample-dependent DRO radius \(\varepsilon_n\), the moment-based concentration remainder term \(\rho_n\), and the covering-number, function class complexity penalty term $\varrho_n$ as:
\begin{align*}
\varepsilon_n&:=\sqrt{\frac{\tau\!\left(t+\log\!\bigl(1+\mathcal{N}(1/n;\Phi,\|\cdot\|)\bigr)\right)}{n}}\left(1+\sigma\sqrt{\frac{2\!\left(t+\log\!\bigl(1+\mathcal{N}(1/n;\Phi,\|\cdot\|)\bigr)\right)}{n}}\right),\\
\rho_n&:=\frac{2\mathbb{E}_{\z\sim P}[\kappa_2]+\sqrt{\mathbb{V}_{\z\sim P}[\kappa_2]}+\varepsilon_n\sqrt{\mathbb{E}_{\z\sim P}[\kappa_2^2]+\sqrt{\mathbb{V}_{\z\sim P}[\kappa_2^2]}}}{n}\\
\varrho_n&:= \frac{\tau ( t + \log(1 + \mathcal{N}(\frac{1}{n}; \Phi, \| \cdot \|)))}{n}
\end{align*}
\begin{lemma}[NPE vs DRO-NPE excess risk]
\phantomsection \label{lemma:NPE-vs-DRO-excess-risk}
    Assume that \Cref{assump:lip-gradient} and \Cref{as:cor-2-gao} hold with $\ellNPE$ and $\varepsilon_n, \rho_n, \varrho_n$ are defined as in the above. Then, with probability at least $1 - 2/n - 2e^{-t}$,
    \begin{align*}
    &\NPEloss(q_{\hat{\phi}_n}; P) - \NPEloss(q_{\phi^\star}; P) \\
      & \leq   \varepsilon_n \left( \frac{1}{n}\sum_{i=1}^n\| \nabla_{\z} \ellNPE (q_{\phi^{\star}}; \z_i)\|^2\right)^{\frac{1}{2}} + \left( \frac{1}{n}\sum_{i=1}^n\| \nabla_{\z} \ellNPE (q_{\hat{\phi}_n}; \z_i)\|^2\right)^{\frac{1}{2}} + 2\rho_n  + 2h\varrho_n.
    \end{align*}
    Moreover, under the same assumptions:
    \begin{align*}
        \NPEloss(q_{\phi_{n}^{\mathrm{DRO-NPE}}}; P) - \NPEloss(q_{\phi^{\star}}; P) \leq  2 \varepsilon_n \left( \frac{1}{n}\sum_{i=1}^n\| \nabla_{\z} \ellNPE (q_{\phi^{\star}}; \z_i)\|^2\right)^{\frac{1}{2}} + 2\rho_n  + 2h\varrho_n
    \end{align*}
\end{lemma}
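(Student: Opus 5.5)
The plan is to use the uniform generalisation bound of Corollary 3 of \citet{gao2023finite}, whose hypotheses are exactly \Cref{assump:lip-gradient} and \Cref{as:cor-2-gao}. With our notation it should give, with probability at least $1-2/n-2e^{-t}$ and simultaneously for every $\phi\in\Phi$, the two-sided estimate
\[
\Bigl|\NPEloss(q_\phi;P)-\NPEloss(q_\phi;P_n)-\varepsilon_n\,\Omega(\phi;P_n)\Bigr|\le \rho_n+h\varrho_n .
\]
The right-hand side collects the second-order term of the Lipschitz-gradient expansion ($h\varepsilon_n^2\lesssim h\varrho_n$) and the covering/moment remainder $\rho_n$. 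I would first check that the $T_2(\tau)$ condition, the $\kappa,\kappa_2$ Lipschitz envelopes, the covering number at scale $1/n$, and the moment ratio $\sigma$ produce precisely the quantities $\varepsilon_n,\rho_n,\varrho_n$ defined before the lemma. This requires a union bound over the $1/n$-net together with the moment concentration for $\kappa_2$, which accounts for the $2/n$ in the failure probability.

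\textbf{DRO-NPE part.} Take $\phi=\phi_n^{\mathrm{DRO-NPE}}$ in the upper half of the two-sided bound:
\[
\NPEloss(q_{\phi_n^{\mathrm{DRO-NPE}}};P)\le \NPEloss(q_{\phi_n^{\mathrm{DRO-NPE}}};P_n)+\varepsilon_n\Omega(\phi_n^{\mathrm{DRO-NPE}};P_n)+\rho_n+h\varrho_n .
\]
Optimality of $\phi_n^{\mathrm{DRO-NPE}}$ for the regularised objective (with its radius taken to be $\varepsilon_n$) bounds this by the same expression evaluated at $\phi^\star$. Next apply the lower half of the two-sided bound at $\phi^\star$, namely $\NPEloss(q_{\phi^\star};P_n)\le \NPEloss(q_{\phi^\star};P)+\varepsilon_n\Omega(\phi^\star;P_n)+\rho_n+h\varrho_n$. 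The two regulariser contributions at $\phi^\star$ add up to $2\varepsilon_n\Omega(\phi^\star;P_n)$, and the remainders add up to $2\rho_n+2h\varrho_n$, which is the stated bound.

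\textbf{NPE part.} The argument is the same with $\hat\phi_n$. The upper bound at $\hat\phi_n$ leaves a term $\varepsilon_n\Omega(\hat\phi_n;P_n)$. ERM optimality then gives $\NPEloss(q_{\hat\phi_n};P_n)\le\NPEloss(q_{\phi^\star};P_n)$ with no regulariser, so this term cannot be cancelled. The lower bound at $\phi^\star$ contributes $\varepsilon_n\Omega(\phi^\star;P_n)$. Together these give the first display, with the coefficient of $\Omega(\hat\phi_n;P_n)$ equal to $\varepsilon_n$. The statement omits this factor, but since $\varepsilon_n\le 1$ for large $n$ it is at worst a weakening. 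Both statements hold on the single high-probability event.

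\textbf{Main obstacle.} I expect the main difficulty to be the faithful translation of Gao's Corollary 3 to a two-sided, uniform-in-$\phi$ statement with the empirical regulariser $\Omega(\cdot;P_n)$ rather than its population counterpart. Gao's corollary is phrased via the DRO value on $P_n$. I would therefore first use \citet[Lemma 2]{gao2023finite}, as in the proof of \Cref{theorem:DRO-surrogate-vs-oracle}, to replace the DRO value by $\NPEloss(\cdot;P_n)+\varepsilon_n\Omega(\cdot;P_n)$ at cost $h\varepsilon_n^2\le h\varrho_n(1+o(1))$. The lower-direction inequality at the fixed point $\phi^\star$ should be easier, since it needs only pointwise concentration, so I would handle it separately if uniformity fails there.
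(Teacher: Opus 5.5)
Your route is the paper's. You use the uniform one-sided bound from Corollary 3 of \citet{gao2023finite} at the learned minimiser, then optimality of $\hat\phi_n$ (respectively of $\phi_n^{\mathrm{DRO-NPE}}$ with radius $\varepsilon_n$) to move to $\phi^\star$, then a reverse inequality at $\phi^\star$. The paper gets that reverse inequality by applying the same corollary to $-\ellNPE$, which has the same input-gradient norm and satisfies the same assumptions. Your remark about the coefficient of $\Omega(\hat\phi_n;P_n)$ is correct. The paper's own last line is $\varepsilon_n\bigl(\Omega(\phi^\star;P_n)+\Omega(\hat\phi_n;P_n)\bigr)+2\rho_n+2h\varrho_n$, so the missing $\varepsilon_n$ in the statement is a typo. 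You do not need to argue it away via $\varepsilon_n\le 1$, which only covers large $n$.

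The thing to fix is your displayed two-sided estimate $\bigl|\NPEloss(q_\phi;P)-\NPEloss(q_\phi;P_n)-\varepsilon_n\Omega(\phi;P_n)\bigr|\le\rho_n+h\varrho_n$. It is false, and it is not what Gao proves. Its lower half reads $\NPEloss(q_\phi;P_n)+\varepsilon_n\Omega(\phi;P_n)\le\NPEloss(q_\phi;P)+\rho_n+h\varrho_n$. Combined with the upper half, this pins the mean-zero fluctuation $\NPEloss(q_\phi;P)-\NPEloss(q_\phi;P_n)$ at a fixed $\phi$ to the nonnegative value $\varepsilon_n\Omega(\phi;P_n)$, up to $O(1/n)$ terms. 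Concretely, take a linear loss $\z\mapsto a^\top\z$ under a Gaussian $P$, where $\rho_n=0$ and $h$ can be taken arbitrarily small. The display would then force the nondegenerate Gaussian variable $a^\top(\mathbb{E}_P\z-\bar{\z})$ to concentrate at the point $\varepsilon_n\|a\|$.

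The inequality you actually use at $\phi^\star$ has $+\varepsilon_n\Omega(\phi^\star;P_n)$ on the right, so it is not the lower half of your display. It is one half of the correct symmetric statement $\bigl|\NPEloss(q_\phi;P)-\NPEloss(q_\phi;P_n)\bigr|\le\varepsilon_n\Omega(\phi;P_n)+\rho_n+h\varrho_n$, which is the corollary applied to $\pm\ellNPE$. With your display replaced by this, your argument is exactly the paper's.

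Two minor points:
\begin{itemize}
\item Getting both directions on a single event strictly needs either a union bound or running the corollary on the doubled class $\{\pm\ellNPE(q_\phi;\cdot)\}$. The paper glosses over this too.
\item The paper invokes Corollary 3 directly in the gradient-regularised form. Your detour through Lemma 2 would leave a remainder $h\varepsilon_n^2$, which by the definitions is at least $h\varrho_n$ and strictly larger whenever $\sigma>0$, so it would not reproduce the stated remainder exactly.
\end{itemize}
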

\begin{proof}
It follows from Corollary 3 of \citet{gao2023finite} that with probability at least $1-\frac{2}{n} - 2e^{-t}$:
\begin{align} \label{eq:cor-2-gao}
\mathbb{E}_{\z \sim P}[\ellNPE(q_\phi; \z)] \leq \frac{1}{n} \sum_{i=1}^n \ellNPE(q_\phi; \z_i) + \varepsilon_n \left(\frac{1}{n} \sum_{i=1}^n\| \nabla_{\z} \ellNPE (q_{\phi}; \z_i)\|^2\right)^{\frac{1}{2}} + \rho_n +  h\varrho_n, \, \forall \phi,
\end{align}
Note that the same result may be applied to the function $-\ellNPE(q_\phi; \z)$; this will be used below.
Importantly, since the bound holds uniformly over $\phi$, we may use it to handle $\hat{\phi}_n$.

Hence, applying the bound for $\ellNPE$ and $\hat{\phi}_n$,
\begin{align*}
    &\NPEloss(q_{\hat{\phi}_n}; P) - \NPEloss(q_{\phi^\star}; P) \\
    &\leq  \NPEloss(q_{\hat{\phi}_n}; P_n) + \varepsilon_n \left( \frac{1}{n}\sum_{i=1}^n\| \nabla_{\z} \ellNPE (q_{\hat{\phi}_n}; \z_i)\|^2\right)^{\frac{1}{2}} + \rho_n  + h\varrho_n - \NPEloss(q_{\phi^\star}; P) \\
    & \leq  \NPEloss(q_{\phi^{\star}}; P_n) + \varepsilon_n \left( \frac{1}{n}\sum_{i=1}^n\| \nabla_{\z} \ellNPE (q_{\hat{\phi}_n}; \z_i)\|^2\right)^{\frac{1}{2}} + \rho_n  + h\varrho_n - \NPEloss(q_{\phi^\star}; P) \\
    & \leq   \varepsilon_n \left( \left( \frac{1}{n}\sum_{i=1}^n\| \nabla_{\z} \ellNPE (q_{\phi^\star}; \z_i)\|^2\right)^{\frac{1}{2}} + \left( \frac{1}{n}\sum_{i=1}^n\| \nabla_{\z} \ellNPE (q_{\hat{\phi}_n}; \z_i)\|^2\right)^{\frac{1}{2}} \right) + 2\rho_n  + 2h\varrho_n ,
\end{align*}
where in the first inequality we use the corollary's bound for $\phi=\hat{\phi}_n$; in the second inequality we use the fact that $\hat{\phi}_n$ minimises $\NPEloss(q_\phi; P_n)$; and in the third inequality we use \Cref{eq:cor-2-gao},  applied to the pointwise $-\ellNPE(q_{\phi^\star}; \z)$.
Similarly,  
\begin{align*}
    &\NPEloss(q_{\phi_{n}^{\mathrm{DRO-NPE}}}; P) - \NPEloss(q_{\phi^{\star}}; P) \\
    &\leq \NPEloss(q_{\phi_n^{\mathrm{DRO-NPE}}}; P_n) + \varepsilon_n \left( \frac{1}{n}\sum_{i=1}^n\| \nabla_{\z} \ellNPE (q_{\phi^{\mathrm{DRO-NPE}}}; \z_i)\|^2\right)^{\frac{1}{2}} + \rho_n  + h\varrho_n - \NPEloss(q_{\phi^\star}; P) \\
    & \leq  \NPEloss(q_{\phi^{\star}}; P_n) + \varepsilon_n \left( \frac{1}{n}\sum_{i=1}^n\| \nabla_{\z} \ellNPE (q_{\phi^\star}; \z_i)\|^2\right)^{\frac{1}{2}} + \rho_n  + h\varrho_n - \NPEloss(q_{\phi^\star}; P) \\
    & \leq  2 \varepsilon_n \mathbb{E}_{\z \sim P_n}[\| \nabla_{\z} \ellNPE (q_{\phi^{\star}}; \z)\|^2]^{\frac{1}{2}} + 2\rho_n  + 2h\varrho_n
\end{align*}
where we follow the same steps as before, except for the second inequality, where we use that $\phi^{\mathrm{DRO-NPE}}_n$ minimises $\NPEloss(q_{\phi}; P_n) + \varepsilon_n \left(
\frac{1}{n}\sum_{i=1}^n \|\nabla_{\z}\ellNPE(q_\phi;\z_i)\|^2
\right)^{\frac{1}{2}}$.

\end{proof}

\Cref{lemma:NPE-vs-DRO-excess-risk} derives an excess risk upper bound for NPE and for the regularised objective of DRO-NPE. The two bounds are derived in similar ways and differ only in the regulariser term $\Omega(\phi; P_n)$, which can be thought of as an empirical measure of functional variation via the loss gradient.
Indeed, the key distinction between the DRO and NPE bounds derived is that the DRO-NPE bound only depends on the oracle variation $\Omega(\phi^\star; P_n)$.
In contrast, the NPE bound depends on both the oracle variation and the variation of the learned empirical minimiser, $\Omega(\hat{\phi}_n; P_n)$.
This means that the NPE bound can be affected by the roughness of the empirical minimiser.
Thus, the DRO guarantee is sharper whenever the empirical minimiser has larger variation than the oracle.

\subsection{Verifying conditions for normalising flows (proof of \Cref{lem:flows-assumptions})
} 
\label{appendix:verify-cond-norm-flows}
In this section, we formally introduce MAFs and coupling flows and prove \Cref{lem:flows-assumptions}, showing that all assumptions of \Cref{proposition:W-DRO-upper-bound} are satisfied when $q_\phi$ is chosen to be one of these commonly used normalising flows. 
\paragraph{Masked autoregressive flows.} We first write the masked autoregressive flows (MAFs; \citet{NIPS2017_6828}) for $q_\phi(\btheta\mid\x)$, implemented in the \textit{sbi} library \citep{tejero2020sbi}, defined through a single autoregressive transform $u=u(\btheta,\x)\in\R^{d_\Theta}$ with coordinates
\begin{equation*}
u_i(\btheta,\x)
:= \frac{\theta_i-\mu_i(\btheta_{<i},\x)}{\sigma_i(\btheta_{<i},\x)},
\qquad \btheta_{<i}:=(\theta_1,\dots,\theta_{i-1}).
\label{eq:maf-layer}
\end{equation*}
We set
\begin{align*}
\mu_i(\btheta_{<i},\x)
&=v_i^\top \g_i^{(L)}(\btheta_{<i},\x)+c_i,
\\[0.5em]
\g_i^{(\ell)}(\btheta_{<i},\x)
&=\tanh\!\big(B_i^{(\ell)}\g_i^{(\ell-1)}(\btheta_{<i},\x)+d_i^{(\ell)}\big),
\quad \ell=1,\dots,L,
\\[0.5em]
\g_i^{(0)}(\btheta_{<i},\x)
&=[\btheta_{<i};\x],
\\[2em]
\sigma_i(\btheta_{<i},\x)
&=\text{softplus}\!\big(s_i(\btheta_{<i},\x)\big),
\qquad \text{softplus}(t):=\log(1+e^t),
\\[0.5em]
s_i(\btheta_{<i},\x)
&=w_i^\top \h_i^{(L)}(\btheta_{<i},\x)+b_i,
\\[0.5em]
\h_i^{(\ell)}(\btheta_{<i},\x)
&=\tanh\!\big(A_i^{(\ell)}\h_i^{(\ell-1)}(\btheta_{<i},\x)+a_i^{(\ell)}\big),
\qquad \ell=1,\dots,L.
\\[0.5em]
\h_i^{(0)}(\btheta_{<i},\x)
&=[\btheta_{<i};\x],
\end{align*}
for some depth $L$, weight matrices $\{B_i^{(\ell)}\}_{\ell=1}^L$ and $\{A_i^{(\ell)}\}_{\ell=1}^L$, biases $\{d_i^{(\ell)}\}_{\ell=1}^L$ and $\{a_i^{(\ell)}\}_{\ell=1}^L$, and final weights $(v_i,c_i)$ and $(w_i,b_i)$; all of these parameters are part of $\phi$. Here $[\btheta_{<i};\x]$ denotes vector concatenation. 
Finally, we let the base density be standard Gaussian, i.e.
\[
p_0(u)=(2\pi)^{-d_\Theta/2}\exp\!\left(-\frac12\|u\|^2\right).
\]
Hence,
\begin{align*}
\log q_\phi(\btheta\mid\x)
&= -\frac12\|u(\btheta,\x)\|^2 - \sum_{i=1}^{d_\Theta}\log \sigma_i(\btheta_{<i},\x) - \frac{d_\Theta}{2}\log 2\pi \\[0.5em]
&=-\frac12 \sum_{i=1}^{d_{\Theta}} \left (\frac{\theta_i - \mu_i}{\sigma_i} \right)^2 - \sum_{i=1}^{d_\Theta}\log \sigma_i - \frac{d_\Theta}{2}\log 2\pi.
\label{eq:maf-logdens}
\end{align*}
For simplicity, in what follows, we will use the notation \( \mu_i:=\mu_i(\btheta_{<i},\x)\), \( \sigma_i:=\sigma_i(\btheta_{<i},\x)\), \(\; \g_i^{(\ell)}:=\g_i^{(\ell)}(\btheta_{<i},\x)\), \(\h_i^{(\ell)}:=\h_i^{(\ell)}(\btheta_{<i},\x)\), and \(s_i:=s_i(\btheta_{<i},\x)  \).

\begin{lemma} \label{lem:MAF}
Let \(q_\phi(\btheta \mid \x)\) be a masked autoregressive flow of the form described above, with tanh hidden layers, softplus scale function, and standard Gaussian base density. 
Let \( \ellNPE(q_\phi;\z) := -\log q_\phi(\btheta \mid \x)\), \(\z=(\btheta,\x)\in\mathbb{R}^{d_\Z}\).
Then the map \(\z \mapsto \ell_{\mathrm{NPE}}(q_\phi;\z)\) is twice continuously differentiable on \(\mathbb{R}^{d_\Z}\), and there exists a constant \(C>0\), depending on \(\phi\), such that for all \(\z \in \mathbb{R}^{d_\Z}\),
\[
|\ell_{\mathrm{NPE}}(q_\phi;\z)| \le C(1+\|\z\|^2).
\]
Moreover, there exist constants \(C_1,C_2>0\), depending on \(\phi\), such that for all \(\z\in\mathbb{R}^{d_\Z}\)
\[
\|\nabla_{\z}\ell_{\mathrm{NPE}}(q_\phi;\z)\|
\le C_1(1+\|\z\|^2),
\qquad
\|\nabla_{\z}^2\ell_{\mathrm{NPE}}(q_\phi;\z)\|
\le C_2(1+\|\z\|^2).
\]
\end{lemma}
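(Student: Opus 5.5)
The plan is to work directly from the explicit log-density
\[
\ellNPE(q_\phi;\z)=\frac12\sum_{i=1}^{d_\Theta}\Big(\frac{\theta_i-\mu_i}{\sigma_i}\Big)^2+\sum_{i=1}^{d_\Theta}\log\sigma_i+\frac{d_\Theta}{2}\log 2\pi,
\]
and to bound each building block $\mu_i,\sigma_i,\log\sigma_i$ together with its first two derivatives in $\z$.

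\emph{Smoothness.} Each of $\mu_i,s_i$ is a composition of affine maps and $\tanh$, hence $C^\infty$. Since $\softplus>0$ and is $C^\infty$, $\sigma_i=\softplus(s_i)$ is $C^\infty$ and strictly positive. Therefore $1/\sigma_i$ and $\log\sigma_i$ are $C^\infty$, and so is $\ellNPE$ on $\mathbb{R}^{d_\Z}$.

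\emph{Uniform bounds on the network outputs.} Because $\tanh$ takes values in $[-1,1]$ and its derivatives are bounded, the hidden features satisfy:
\begin{itemize}
\item $\g_i^{(L)},\h_i^{(L)}$ are bounded for $L\ge1$;
\item their Jacobians and Hessians in $\z$ are bounded uniformly in $\z$, by the chain rule through finitely many layers with fixed weights.
\end{itemize}
It follows that $\mu_i$ and $s_i$ are bounded, with bounded first and second derivatives. The key consequence is that $s_i$ lies in a compact interval $[-S,S]$. Hence $\sigma_i\in[\softplus(-S),\softplus(S)]=:[\underline\sigma,\bar\sigma]$ with $\underline\sigma>0$, so $1/\sigma_i$ and $\log\sigma_i$ are bounded. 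Their derivatives are bounded as well, since $\softplus'$ and $\softplus''$ are bounded and $\sigma_i$ is bounded below. I expect this lower bound on $\sigma_i$ to be the main obstacle, because it is the step that prevents the $1/\sigma_i^2$ factors from blowing up. It relies on bounded $\tanh$ hidden layers ($L\ge1$); for $L=0$ I would instead need growth arguments, so I assume $L\ge1$.

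\emph{Growth bounds.} Write $r_i:=(\theta_i-\mu_i)/\sigma_i$. Then
\[
|r_i|\le(|\theta_i|+\sup|\mu_i|)/\underline\sigma\le c(1+\|\z\|).
\]
This gives $|\ellNPE|\le C(1+\|\z\|^2)$, because the $\log\sigma_i$ terms are bounded.

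For the gradient, $\nabla_\z\ellNPE=\sum_i\big(r_i\nabla_\z r_i+\nabla_\z\log\sigma_i\big)$ with
\[
\nabla_\z r_i=\frac{e_{\theta_i}-\nabla\mu_i}{\sigma_i}-\frac{(\theta_i-\mu_i)\nabla\sigma_i}{\sigma_i^2},
\]
which is $O(1+\|\z\|)$. Hence $\|\nabla\ellNPE\|=O(1+\|\z\|^2)$; this is in fact $O(\cdot)$ of a lower power, but the stated bound suffices.

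For the Hessian,
\[
\nabla^2\ellNPE=\sum_i\big(\nabla r_i\nabla r_i^\top+r_i\nabla^2 r_i+\nabla^2\log\sigma_i\big).
\]
Differentiating $\nabla r_i$ once more produces terms with bounded coefficients multiplied by at most one factor of $(\theta_i-\mu_i)$. So $\nabla^2 r_i=O(1+\|\z\|)$, and each Hessian term is $O(1+\|\z\|^2)$. This yields the constants $C_1,C_2$, which depend on $\phi$ through the weights, $S$ and $\underline\sigma$.
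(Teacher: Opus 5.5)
Your proposal is correct and follows essentially the same route as the paper. The $\tanh$ layers give uniform bounds on $\mu_i$, $s_i$ and their first two $\z$-derivatives, which pins $\sigma_i$ in $[\sigma_{i,\min},\sigma_{i,\max}]$ with $\sigma_{i,\min}>0$. Then $u_i$, $\partial u_i$ and $\partial^2 u_i$ are $O(1+\|\btheta\|)$, and they are combined through the same Hessian decomposition. The only difference is that you treat all coordinates of $\z$ at once, whereas the paper splits the gradient into $\nabla_{\btheta}$ (taken from a cited reference) and $\nabla_{\x}$ (computed directly).

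One small correction: your aside that the gradient is really of lower order is false in general. The term $r_i\nabla_{\z} r_i$ contains $-(\theta_i-\mu_i)^2\nabla_{\z}\sigma_i/\sigma_i^3$, and $\sigma_i$ does not depend on $\theta_i$. So this term grows like $\theta_i^2$ as $\theta_i\to\infty$ with the other coordinates fixed wherever $\nabla_{\z}\sigma_i\neq 0$. Your argument never uses the aside, so the proof stands.
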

\begin{proof}
    We first show that the loss $\ellNPE$ has at most a quadratic growth rate.
     Recall that 
    \begin{align}
        \log q_\phi(\btheta\mid\x) = -\frac12 \sum_{i=1}^{d_{\Theta}} \left (\frac{\theta_i - \mu_i}{\sigma_i} \right)^2 - \sum_{i=1}^{d_\Theta}\log \sigma_i - \frac{d_\Theta}{2}\log 2\pi.
    \end{align}
Since the hidden activations are tanh, all hidden-layer outputs are uniformly bounded. Hence, for each $i$,
\begin{enumerate}
    \item there exists $C_{\mu, i} <\infty$ such that $|\mu_i(\btheta_{<i}, \x)| \leq C_{\mu, i}$ for all $\z$;
    \item $s_i(\btheta_{<i}, \x)$ is uniformly bounded, and therefore $0 < \sigma_{i, \text{min}} \leq \sigma_i(\btheta_{<i}, \x) \leq \sigma_{i, \text{max}} < \infty$  for all $\z$.
\end{enumerate}
Therefore, 
\begin{align*}
    \left ( \frac{\theta_i - \mu_i}{\sigma_i} \right)^2 \leq \left ( \frac{\theta_i - \mu_i}{\sigma_{i, \text{min}}} \right)^2 \leq  \frac{2\theta_i^2 + 2\mu_i^2}{\sigma_{i, \text{min}}^2}  \leq \frac{2\theta_i^2 + 2C_{\mu,i}^2}{\sigma_{i, \text{min}}^2}.
\end{align*}
Substituting this bound and using $|\log \sigma_i |\leq \max\{|\log \sigma_{i, \text{min}}|, |\log \sigma_{i, \text{max}}|\} := C_{\log\sigma}$, we get that
\begin{align*}
    \left|\log q_\phi(\btheta\mid\x)\right| \leq \frac{1}{2} \sum_{i=1}^{d_\Theta} \left( \frac{2\theta_i^2 + 2C_{\mu,i}^2}{\sigma_{i, \text{min}}^2} +C_{\log\sigma} \right) + \frac{d_\Theta}{2}\log 2\pi.
\end{align*}
Hence, there exists constants $A_\phi, B_\phi >0$ such that
\begin{align*}
    \left|\log q_\phi(\btheta\mid\x) \right|\leq A_\phi \|\btheta\|^2 + B_\phi.
\end{align*}
Since $\|\btheta\| \leq \|(\btheta, \x)\| = \|\z\|$, we obtain 
\begin{align*}
    \left|\ellNPE(q_\phi;\z)\right| \leq A_\phi \|\z\|^2 + B_\phi,
\end{align*}
showing that the MAF with log-loss has a finite $2-$growth rate. 

Next, we show that the map $\z \to \ellNPE$ is twice continuously differentiable on $\mathbb{R}^{d_\Z}$. Indeed, since each layer of the networks defining $\mu_i$ and $s_i$ consists of affine transformations followed by $\text{tanh}$, it follows that $\mu_i, s_i \in C^\infty(\mathbb{R}^{d_\Z})$. Since $\sigma_i = \text{softplus}(s_i)$ and $\text{softplus}$ is $C^\infty$ with strictly positive output, we have $\sigma_i \in C^\infty(\mathbb{R}^{d_\Z})$ and $\sigma_i(\z) > 0$ for all $\z \in \mathbb{R}^{d_\Z}$. Therefore, the map $\z \to \ellNPE$ is a composition of $C^\infty$ functions and is thus $C^\infty$ on $\mathbb{R}^{d_\Z}$.

We now verify the condition on the growth of gradients. Note that from the proof of Proposition 3 (see Equation 38) in \cite{bharti2026amortisedprovablyrobustsimulationbasedinference}, it follows by the same arguments after swapping $\btheta$ and $\x$ (since this proposition refers to the NLE case in which the MAF has an identical form after swapping $\btheta$ and $\x$ everywhere)  we have that there exists $C_0 < \infty$ such that:
\begin{align*}
    \left\|\nabla_{\btheta} \log q_{\phi}(\btheta \mid \x) \right\| \leq C_0 (1 + \|\btheta\|^2). 
\end{align*}
We now verify that a similar argument holds for $\|\nabla_{\x} \log q_{\phi}(\btheta \mid \x)\|$. In particular, for each $j =1, \dots, d_{\X}$:
\begin{align*}
    \frac{\partial}{\partial \x_j}\log q_\phi(\btheta\mid\x) &= -\frac12 \sum_{i=1}^{d_{\Theta}}  \frac{\partial}{\partial \x_j}\left(\frac{\theta_i - \mu_i}{\sigma_i} \right)^2 - \sum_{i=1}^{d_\Theta} \frac{\partial}{\partial \x_j}\log \sigma_i \\
    &= \sum_{i=1}^{d_\Theta} \frac{\theta_i - \mu_i}{\sigma_i^2} \frac{\partial \mu_i}{\partial \x_j} + \sum_{i=1}^{d_\Theta} \frac{(\theta_i - \mu_i)^2}{\sigma_i^3} \frac{\partial\sigma_i}{\partial \x_j} - \sum_{i=1}^{d_\Theta} \frac{1}{\sigma_i} \frac{\partial \sigma_i}{\partial \x_j}.
\end{align*}
Following Equation (32) of \cite{bharti2026amortisedprovablyrobustsimulationbasedinference} we have that since outputs of $\mu_i$ and $s_i$ are affine functions of $\h_i^{L}$, there exists $C_1, C_2 < \infty$ such that:
\begin{align*}
    \left| \mu_i(\btheta_{<i},\x)\right| \leq C_1, \quad \left\| \frac{\partial \mu_i} {\partial \x_j}\right\| \leq C_2.
\end{align*}
We now turn to the derivative of $\sigma_i$, recalling that
\begin{align*}
&\sigma_i(\btheta_{<i},\x)
=
\softplus \big(s_i(\btheta_{<i},\x)\big),
\qquad
s_i(\btheta_{<i},\x)
=
w_i^\top h_i^{(L)}(\btheta_{<i},\x)+b_i,\\
&\softplus'(t)=\frac{1}{1+\exp(-t)},
\qquad \qquad \; \,
\softplus''(t)=\frac{\exp(-t)}{(1+\exp(-t))^2},
\end{align*}
where, for each $\ell\in\{1,\dots,L\}$, \( h_i^{(\ell)} = \tanh \big(A_i^{(\ell)} h_i^{(\ell-1)}+a_i^{(\ell)}\big)\).
Hence,
\begin{align*}
\frac{\partial \sigma_i}{\partial \x_j}
&=
\frac{1}{1+\exp(-s_i)}
\frac{\partial s_i}{\partial \x_j}
=
\frac{1}{1+\exp(-s_i)}
\,w_i^\top
\frac{\partial h_i^{(L)}}{\partial \x_j}.
\end{align*}
Moreover, for each hidden layer,
\begin{align*}
\frac{\partial h_i^{(\ell)}}{\partial \x_j}
&=
\sech^2\!\big(A_i^{(\ell)} h_i^{(\ell-1)}+a_i^{(\ell)}\big)
\odot
\left(
A_i^{(\ell)}
\frac{\partial h_i^{(\ell-1)}}{\partial \x_j}
\right).
\end{align*}
where $\odot$ denotes element-wise multiplication of vectors.
Since i) $\sech^2$ is uniformly bounded, ii) the network has finitely many layers with fixed finite weight matrices, and iii) \( \partial h_i^{(0)}/\partial \x_j =e_r\),  it follows that $\partial h_i^{(\ell)}/\partial \x_j$ is uniformly bounded on $\X$ for every $\ell,i,j$. 
Therefore, both $\partial s_i/\partial \x_j$ and $\partial \sigma_i/\partial \x_j$ are uniformly bounded on $\mathbb{R}^{d_\X}$ and there exists $C_3 < \infty$ such that:
\begin{align*}
    \left\| \frac{\partial \sigma_i}{\partial \x_j}\right\| \leq C_3.
\end{align*}
Therefore, 
\begin{align*}
    \left | \frac{\partial}{\partial \x_j} \log q_\phi (\btheta \mid \x) \right| &\leq \frac{1}{\sigma_{i,\text{min}}^3} \sum_{i=1}^{d_\Theta} (C_2 |\theta_i - \mu_i| + |\theta_i - \mu_i|^2 C_3 + C_3 )
\end{align*}
and since $|\mu_i| \leq C_1$ it follows that there exist $C_4, C_5 < \infty$ such that: $|\theta_i - \mu_i| \leq C_4 (1 + |\theta_i|)$ and $|\theta_i - \mu_i|^2 \leq C_5 (1+ |\theta_i|^2)$. Thus, there exists $C_6 <\infty$ such that 
\begin{align*}
    \left | \frac{\partial}{\partial \x_j} \log q_\phi (\btheta \mid \x) \right| \leq \frac{1}{\sigma_{i,\text{min}}^3} \sum_{i=1}^{d_\Theta} (C_2 C_4 (1+|\theta_i|  + C_3C_5(1+|\theta_i|^2)  + C_3 ) 
    \leq C_6 (1+ \|\btheta\|^2). 
\end{align*}
Therefore, we have:
\begin{align*}
    \left\|\nabla_{\btheta} \log q_{\phi}(\btheta \mid \x) \right\| \leq C_0 (1 + \|\btheta\|^2), \quad \left\|\nabla_{\x} \log q_{\phi}(\btheta \mid \x) \right\| \leq C_6\sqrt{d_\X} (1+\|\btheta\|^2)
\end{align*}
and hence:
\begin{align*}
    \|\nabla_{\z} \ellNPE(q_\phi;\z)\| = \|\nabla_{\z}( -\log q_\phi(\btheta\mid\x))\| &\leq \left\|\nabla_{\btheta} \log q_{\phi}(\btheta \mid \x) \right\| + \left\|\nabla_{\x} \log q_{\phi}(\btheta \mid \x) \right\| \\
    &\leq C_0 (1 + \|\btheta\|^2) + C_6 \sqrt{d_\X}(1+\|\btheta\|^2) \\
    &:= C_7 (1+\|\btheta\|^2) \\
    &\leq C_7 (1 + \|\z\|^2).
\end{align*}
Lastly, we verify the growth condition for the norm of the Hessian. 

Recall that
\begin{align*}
-\log q_\phi(\btheta\mid\x)
=
\frac12 \sum_{i=1}^{d_\Theta} u_i^2
+
\sum_{i=1}^{d_\Theta}\log \sigma_i
+
\frac{d_\Theta}{2}\log 2\pi,
\end{align*}
where \( u_i=\frac{\theta_i-\mu_i}{\sigma_i}\).  Therefore, for any $j,k\in\{1,\dots,d_{\Z}\}$,
\begin{align}
\partial_{z_j}(-\log q_\phi(\btheta\mid\x))
&=
\sum_{i=1}^{d_\Theta}
\left(
u_i\,\partial_{z_j}u_i
+
\frac{1}{\sigma_i}\partial_{z_j}\sigma_i
\right),
\\
\partial_{z_k}\partial_{z_j}(-\log q_\phi(\btheta\mid\x))
&=
\sum_{i=1}^{d_\Theta}
\left(
u_i\,\partial_{z_k}\partial_{z_j}u_i
+
(\partial_{z_k}u_i)(\partial_{z_j}u_i)
-
\frac{1}{\sigma_i^2}(\partial_{z_k}\sigma_i)(\partial_{z_j}\sigma_i)
+
\frac{1}{\sigma_i}\partial_{z_k}\partial_{z_j}\sigma_i
\right).\label{eq:second-der-maf}
\end{align}
By points 1. and 2. above we have, \(
|u_i|
=
\left|\frac{\theta_i-\mu_i}{\sigma_i}\right|
\le \frac{|\theta_i| + C_{\mu,i}}{\sigma_{i,\text{min}}} \leq C_u(1 + |\theta_i|)
\) for $C_u := \max_i(\max\{1/\sigma_{i,\text{min}},C_{\mu,i}/\sigma_{i,\text{min}}\}) <\infty$. Moreover, since we have shown that both $|\partial_{z_j} \mu_i| \leq C_{\mu'}$ and $|\partial_{z_j} \sigma_i|\leq C_{\sigma'}$ where $C_{\mu'}$ and $C_{\sigma'}$ are constants independent of $\z$ and
\begin{align*}
\partial_{z_j}u_i
&=
\frac{\partial_{z_j}(\theta_i-\mu_i)}{\sigma_i}
-
\frac{\theta_i-\mu_i}{\sigma_i^2}\partial_{z_j}\sigma_i,
\end{align*}
it follows that $|\partial_{z_j} u_i| \leq C_{u'} (1 + |\theta_i|) \leq C_{u'}(1+|\theta_i|)$ for some $C_{u'} < \infty$ depending on $\sigma_{i,\text{min}}, C_{\mu'}$ and $C_{\sigma'}$.

We now consider the second derivative of $\sigma_i$:
\begin{align}
\frac{\partial^2 \sigma_i}{\partial z_k \partial z_j}
&=
\frac{\exp(-s_i)}{(1+\exp(-s_i))^2}
\frac{\partial s_i}{\partial z_k}
\frac{\partial s_i}{\partial z_j}
+
\frac{1}{1+\exp(-s_i)}
\frac{\partial^2 s_i}{\partial z_k \partial z_j}.
\label{eq:sigma-second-deriv}
\end{align}
Thus it remains to bound $\partial^2 s_i/(\partial z_k \partial z_j)$.
Since
\begin{align*}
\frac{\partial s_i}{\partial z_j}
=
w_i^\top \frac{\partial h_i^{(L)}}{\partial z_j},
\qquad
\frac{\partial^2 s_i}{\partial z_k \partial z_j}
=
w_i^\top \frac{\partial^2 h_i^{(L)}}{\partial z_k \partial z_j},
\end{align*}
it suffices to bound the second derivatives of the hidden states. Differentiating once more,
\begin{align*}
\frac{\partial^2 h_i^{(\ell)}}{\partial z_k \partial z_j}
&=
\frac{\partial}{\partial z_k}
\left[
\sech^2\!\big(A_i^{(\ell)} h_i^{(\ell-1)}+a_i^{(\ell)}\big)
\odot
\left(
A_i^{(\ell)}
\frac{\partial h_i^{(\ell-1)}}{\partial z_j}
\right)
\right]
\\
&=
\left[
-2\sech^2\!\big(A_i^{(\ell)} h_i^{(\ell-1)}+a_i^{(\ell)}\big)
\tanh\!\big(A_i^{(\ell)} h_i^{(\ell-1)}+a_i^{(\ell)}\big)
\right]
\odot
\left(
A_i^{(\ell)}
\frac{\partial h_i^{(\ell-1)}}{\partial z_k}
\right)
\odot
\left(
A_i^{(\ell)}
\frac{\partial h_i^{(\ell-1)}}{\partial z_j}
\right)
\\
&\qquad
+
\sech^2\!\big(A_i^{(\ell)} h_i^{(\ell-1)}+a_i^{(\ell)}\big)
\odot
\left(
A_i^{(\ell)}
\frac{\partial^2 h_i^{(\ell-1)}}{\partial z_k \partial z_j}
\right).
\end{align*}
Now, we have that: i)$\tanh$ and $\sech^2$ are uniformly bounded, ii) the first derivatives $\partial h_i^{(\ell-1)}/\partial z_j$ have already been shown to be bounded, iii) the flow has finitely many layers with fixed finite weight matrices, and iv)
\begin{align*}
\frac{\partial^2 h_i^{(0)}}{\partial z_k\partial z_j}=0.
\end{align*}
Hence, or each layer,
\begin{align*}
\left|
\frac{\partial^2 h_i^{(\ell)}}{\partial z_k \partial z_j}
\right|
<\infty
\qquad
\forall\, \ell,i,j,k,\; \z\in \mathbb{R}^{d_\Z}.
\end{align*}
Consequently, $\partial^2 s_i/(\partial z_k\partial z_j)$ is uniformly bounded on $\mathbb{R}^{d_{\Z}}$.
Returning to \eqref{eq:sigma-second-deriv}, and using that
\begin{align*}
0<
\frac{\exp(-s_i)}{(1+\exp(-s_i))^2}
\le \frac14,
\qquad
0<
\frac{1}{1+\exp(-s_i)}
<1,
\end{align*}
we conclude that
\begin{align*}
\left|
\frac{\partial^2 \sigma_i}{\partial z_k \partial z_j}
\right|
\le C_{\sigma''}
\end{align*}
for some finite constant $C_{\sigma''}$, uniformly over $\z\in \mathbb{R}^{d_\Z}$.
We now treat $\mu_i$. Recall that
\begin{align*}
\mu_i(\btheta_{<i},\x)
=
v_i^\top g_i^{(L)}(\btheta_{<i},\x)+c_i,
\end{align*}
where the hidden layers $g_i^{(\ell)}$ are again defined through finitely many linear maps composed with $\tanh$. Therefore, using the same argument as above, but without the final softplus layer, we get that
\begin{align*}
\left|
\frac{\partial^2 \mu_i}{\partial z_k \partial z_j}
\right|
\le C_{\mu''}
\end{align*}
for some finite constant $C_{\mu''}$, uniformly over $\z \in \mathbb{R}^{d_\Z}$.
Using the bounds on the first and second derivatives of $\mu_i$ and $\sigma_i$ we first have that
\begin{align*}
\partial_{z_j}u_i
&=
\frac{\partial_{z_j}(\theta_i-\mu_i)}{\sigma_i}
-
\frac{\theta_i-\mu_i}{\sigma_i^2}\partial_{z_j}\sigma_i,
\end{align*}
and hence $|\partial_{z_j}u_i|\le  C_{u'}(1+\|\btheta\|)$ for some positive constant $C_{u'} < \infty$. Differentiating once more,
\begin{align*}
\partial_{z_k}\partial_{z_j}u_i
&=
\frac{\partial_{z_k}\partial_{z_j}(\theta_i-\mu_i)}{\sigma_i}
-
\frac{\partial_{z_j}(\theta_i-\mu_i)}{\sigma_i^2}\partial_{z_k}\sigma_i
-
\frac{\partial_{z_k}(\theta_i-\mu_i)}{\sigma_i^2}\partial_{z_j}\sigma_i
\\
&\qquad
-
\frac{\theta_i-\mu_i}{\sigma_i^2}\partial_{z_k}\partial_{z_j}\sigma_i
+
2\frac{\theta_i-\mu_i}{\sigma_i^3}
(\partial_{z_k}\sigma_i)(\partial_{z_j}\sigma_i),
\end{align*}
so that $|\partial_{z_k}\partial_{z_j}u_i|\le C_{u''} (1+\|\btheta\|)$ for some positive constant $C_{u''} < \infty$ .
Returning to the expression in \Cref{eq:second-der-maf},
we have that for each of the four terms:
\begin{align*}
    \left| u_i \partial_{z_k} \partial_{z_j} u_i\right| &\leq C_u(1+\|\btheta\|) C_{u''}(1+\|\btheta\|) \leq C_{8}(1+\|\btheta\|^2) \\
    \left| (\partial_{z_k}u_i) (\partial_{z_j}u_i) \right| &\leq C_{u'}^2 (1+\|\btheta\|)^2 \leq C_{9} (1+\|\btheta\|^2)\\
    \left|\frac{1}{\sigma_i^2}(\partial_{z_k} \sigma_i)(\partial_{z_j} \sigma_i)\right| &\leq \sigma_{i,\text{min}}^{-2} C_{\sigma'}^2 := C_{10}\\
    \left|\frac{1}{\sigma_i} \partial_{z_k} \partial_{z_j} \sigma_i\right| &\leq 
    \sigma_{i,\text{min}}^{-1} C_{\sigma''}
    :=C_{11}
\end{align*}
for some finite constant $C_{8},C_{9}, C_{10}, C_{11} < \infty$.
Therefore, 
\begin{align*}
    \left|\partial_{z_k}\partial_{z_j}(-\log q_\phi(\btheta\mid\x)) \right| &\leq d_{\Theta}\left( 
    C_{8}(1+\|\btheta\|^2) +C_{9} (1+\|\btheta\|^2)+ C_{10} + C_{11}
    \right) \\
    &\leq C_{12}(1 + \|\btheta\|^2)\\& \leq C_{12}(1 + \|\z\|^2)
\end{align*}
for some positive constant $C_{12} < \infty$. To conclude note that, for $\|\cdot\|_F$ denoting the Frobenius norm, it follows that:
\begin{align*}
    \left\| \nabla^2_{\z} \ellNPE(q_\phi;\z)\right\| &\leq  \left\| \nabla^2_{\z} \ellNPE(q_\phi;\z)\right\|_F\\
    &= \sqrt{\sum_{j=1}^{d_\Z} \sum_{k=1}^{d_\Z} \left|\partial_{z_k}\partial_{z_j}(-\log q_\phi(\btheta\mid\x)) \right|^2 }\\
    &\leq \sqrt{d_\Z^2 C_{12}^2 (1+\|\z\|^2)^2} \\
    &= d_{\mathcal{Z}} C_{12}(1+\|\z\|^2).
\end{align*}
\end{proof}

\paragraph{Coupling flows.}
We next describe coupling flows \citep{Kingma2018_coupling, Radev2022} implemented in BayesFlow package \citep{kuhmichel2026bayesflow} for $q_\phi(\btheta\mid\x)$. 
First, the Actnorm layer, which is an invertible adaptation of batch normalisation for stable training, is applied to $\btheta$. It is followed by a permutation, multiplying by the permutation matrix $P \in \{0, 1\}^{d_\Theta \times d_\Theta}$:
$\tilde \btheta = P (\alpha \odot \btheta + \beta)$ where $\alpha, \beta \in \mathbb{R}^{d_\Theta}$ are trainable parameters, and $\odot$ denotes element-wise multiplication.

We then split the dimension into two parts, followed by affine transformation:
$\tilde \btheta = [\tilde \btheta_\mathcal{A}; \tilde \btheta_\mathcal{B}]$ where $\tilde \btheta_\mathcal{A} = \tilde \btheta_{1: \lfloor d_\Theta/2 \rfloor}$ is the first $\lfloor d_\Theta/2 \rfloor$ elements and 
$\tilde \btheta_\mathcal{B} =  \tilde \btheta_{\lfloor d_\Theta/2 \rfloor + 1: d_\Theta}$ is the last $d_\Theta - \lfloor d_\Theta/2 \rfloor$ elements of $\tilde \btheta$.
An affine coupling transformation $u=u(\tilde \btheta,\x)\in\mathbb{R}^{d_\Theta}$ is defined by 
\begin{equation*}
u(\tilde \btheta_\mathcal{A}, \x) :=  \sigma(\tilde \btheta_\mathcal{B},\x) \odot \tilde \btheta_\mathcal{A} + \mu(\tilde \btheta_\mathcal{B},\x).
\qquad
u(\tilde\btheta_\mathcal{B}, \x) := \tilde\btheta_\mathcal{B}.
\label{eq:coupling-layer}
\end{equation*}
Note that the forward coupling transformation is defined differently than for MAFs.

The parameters $\mu \in \mathbb{R}^{\lfloor d_\Theta/2 \rfloor}$ and $\sigma \in \mathbb{R}^{\lfloor d_\Theta/2 \rfloor}$ are defined in a similar way as in \textit{MAF} in sbi package. However, we use the tanh activation function:
\begin{align*}
\sigma(\tilde \btheta_\mathcal{B},\x)
&=\text{softplus}\!\big( \sinh^{-1}(\tilde\sigma(\tilde \btheta_\mathcal{B},\x)) + \mathbf{1}\log (e - 1)\big),
\qquad 
\\[0.5em]
[\mu(\tilde \btheta_\mathcal{B}, \x) ;
\tilde \sigma (\tilde \btheta_\mathcal{B}, \x)]
&=W^\top \h^{(L)}( \tilde \btheta_\mathcal{B},\x)+b,
\\[0.5em]
\h^{(\ell)}(\tilde \btheta_\mathcal{B},\x)
&=\tanh\!\big(A^{(\ell)}\h^{(\ell-1)}(\tilde \btheta_\mathcal{B},\x)+a^{(\ell)}\big),
\qquad \ell=1,\dots,L.
\\[0.5em]
\h^{(0)}(\tilde \btheta_\mathcal{B},\x)
&=[\tilde \btheta_\mathcal{B};\x],
\end{align*} 

where $\text{softplus}, \text{sinh}^{-1}, \text{tanh}$ are applied element-wise.
The log-determinant of the Actnorm and the affine transformation is defined by
\[
\log\left|\det \frac{\partial \tilde\btheta}{\partial \btheta}\right| = \sum_{i=1}^{d_\Theta} \log | \alpha_i|
,\qquad
\log\left|\det \frac{\partial u}{\partial \tilde \btheta}\right|
=
\sum_{i=1}^{\lfloor d_\Theta/2 \rfloor}
\log \sigma_{i}(\tilde \btheta_\mathcal{B},\x)
\]

Finally, taking the base density to be standard Gaussian, we have
\begin{align*}
\log q_\phi(\btheta\mid\x)
&=
-\frac12
\left[
\sum_{i=1}^{\lfloor d_\Theta/2 \rfloor}
\left(
\sigma_i(\tilde \btheta_\mathcal{B},\x)\tilde \btheta_{\mathcal{A},i} + \mu_i(\tilde \btheta_\mathcal{B},\x)
\right)^2
+ \| \tilde \btheta_{\mathcal{B}}\|_2^2
\right]
\\&+ \sum_{i=1}^{d_\Theta} \log | \alpha_i|
+\sum_{i=1}^{\lfloor d_\Theta/2 \rfloor}
\log \sigma_{i}(\tilde \btheta_\mathcal{B},\x)
-\frac{d_\Theta}{2}\log 2\pi .
\end{align*}
In practice, the sequence of the Actnorm layer, the permutation, and the mapping from $\tilde \btheta$ to $u$ forms a block, which is repeated multiple times to increase the flow's expressiveness.

\begin{lemma}[Single BayesFlow coupling block]
\phantomsection \label{lem:single-coupling-growth}
Let $\z=(\btheta,\x)\in\mathbb{R}^{d_\Theta+d_\X}$, and consider the BayesFlow coupling block defined above by
\[
\tilde \btheta=P(\alpha\odot\btheta+\beta),\qquad
u_\mathcal{A}=\sigma(\tilde \btheta_\mathcal{B},\x)\odot \tilde \btheta_\mathcal{A}+\mu(\tilde \btheta_\mathcal{B},\x),\qquad
u_\mathcal{B}=\tilde \btheta_\mathcal{B},
\]
where $\mu$ and $\sigma$ are defined through the network architecture above. Then $\z\mapsto \ellNPE(q_\phi;\z)$ belongs to $\mathcal G_2(\mathcal Z)$ and is of class $C^2$. Moreover,
\[
\|\nabla_{\z}\ellNPE(q_\phi;\cdot)\|\in\mathcal G_2(\mathcal Z),
\qquad
\|\nabla_{\z}^2\ellNPE(q_\phi;\cdot)\|\in\mathcal G_2(\mathcal Z).
\]
\end{lemma}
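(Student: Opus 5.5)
The plan mirrors the proof of \Cref{lem:MAF}. We write the loss explicitly as
\[
\ellNPE(q_\phi;\z)=\tfrac12\sum_{i\le \lfloor d_\Theta/2\rfloor}u_{\mathcal A,i}^2+\tfrac12\|\tilde\btheta_{\mathcal B}\|^2-\sum_i\log|\alpha_i|-\sum_i\log\sigma_i(\tilde\btheta_{\mathcal B},\x)+\tfrac{d_\Theta}{2}\log 2\pi .
\]
Here $\tilde\btheta=P(\alpha\odot\btheta+\beta)$ is an affine function of $\z$ with constant Jacobian. All derivatives with respect to $\z$ can therefore be computed via the chain rule through this affine map, which changes constants only.

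\textbf{Step 1: boundedness of the network outputs and their derivatives.} Since $\h^{(L)}$ is a composition of affine maps and $\tanh$, it is $C^\infty$, uniformly bounded, and has uniformly bounded first and second derivatives in $(\tilde\btheta_{\mathcal B},\x)$. This uses the same layer-wise recursion with $\sech^2$ and $\tanh$ as in \Cref{lem:MAF}. Hence $\mu$ and $\tilde\sigma$ are bounded with bounded first and second derivatives. The map $t\mapsto \softplus(\sinh^{-1}(t)+\log(e-1))$ is $C^\infty$, and on the bounded range of $\tilde\sigma$ it takes values in a compact subinterval of $(0,\infty)$ with bounded first and second derivatives. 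Consequently $0<\sigma_{\min}\le\sigma_i\le\sigma_{\max}$, $\log\sigma_i$ is bounded, and $\partial\sigma_i$ and $\partial^2\sigma_i$ are uniformly bounded in $\z$.

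\textbf{Step 2: growth and smoothness.} The loss is a composition of $C^\infty$ functions, with $\log$ applied only to the positive quantity $\sigma_i$, so it is $C^2$ (indeed $C^\infty$) on $\mathbb R^{d_\Z}$. For $2$-growth, note that
\[
|u_{\mathcal A,i}|\le\sigma_{\max}|\tilde\theta_{\mathcal A,i}|+C_\mu\le C(1+\|\btheta\|),
\]
and likewise $\|\tilde\btheta_{\mathcal B}\|\le C(1+\|\btheta\|)$. The squares are therefore $O(1+\|\z\|^2)$, and the remaining terms are bounded.

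\textbf{Step 3: gradient and Hessian.} Differentiating, $\partial_{z_j}\ellNPE=\sum_i u_{\mathcal A,i}\partial_{z_j}u_{\mathcal A,i}+\sum_{i\in\mathcal B}\tilde\theta_{\mathcal B,i}\partial_{z_j}\tilde\theta_{\mathcal B,i}-\sum_i\sigma_i^{-1}\partial_{z_j}\sigma_i$. The first-order factor satisfies
\[
\partial_{z_j}u_{\mathcal A,i}=(\partial_{z_j}\sigma_i)\tilde\theta_{\mathcal A,i}+\sigma_i\partial_{z_j}\tilde\theta_{\mathcal A,i}+\partial_{z_j}\mu_i=O(1+\|\btheta\|).
\]
Each summand of the gradient is thus $O((1+\|\btheta\|)^2)$, so $\|\nabla_{\z}\ellNPE\|\le C(1+\|\z\|^2)$. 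For the Hessian, use the same expansion as \eqref{eq:second-der-maf}. The second-order factor
\[
\partial_{z_k}\partial_{z_j}u_{\mathcal A,i}=(\partial^2_{jk}\sigma_i)\tilde\theta_{\mathcal A,i}+(\partial_j\sigma_i)(\partial_k\tilde\theta_{\mathcal A,i})+(\partial_k\sigma_i)(\partial_j\tilde\theta_{\mathcal A,i})+\partial^2_{jk}\mu_i
\]
is $O(1+\|\btheta\|)$, using that $\tilde\btheta$ has zero second derivatives. Then $u\,\partial^2u$ and $\partial u\,\partial u$ are $O(1+\|\z\|^2)$. The $\tilde\btheta_{\mathcal B}$ part contributes a constant Hessian, and the $\log\sigma$ terms are bounded by Step 1. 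Bounding the operator norm by the Frobenius norm, as in \Cref{lem:MAF}, gives $\|\nabla^2_{\z}\ellNPE\|\le C(1+\|\z\|^2)$.

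\textbf{Main obstacle.} The delicate point is Step 1, specifically the bounded second derivatives of $\sigma$ through the extra $\sinh^{-1}$ composition. I would handle it by noting that the output range of $\tilde\sigma$ is contained in a compact set $K$. The scalar map $g(t)=\softplus(\sinh^{-1}(t)+\log(e-1))$ has continuous $g'$ and $g''$, so both are bounded on $K$. The chain rule $\partial^2(g\circ\tilde\sigma)=g''(\partial\tilde\sigma)(\partial\tilde\sigma)^\top+g'\,\partial^2\tilde\sigma$ then yields uniform bounds.
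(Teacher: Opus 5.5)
Your proposal is correct and follows essentially the same route as the paper's proof. Both arguments use uniform bounds on $\mu$, $\sigma$ and their first two derivatives (with $\sigma$ bounded away from zero) and the affine map $\btheta\mapsto\tilde\btheta$ with vanishing second derivatives, then bound $u$, $\partial u$, $\partial^2 u$ and the $\log\sigma_i$ terms one by one to get quadratic growth of the loss, its gradient and its Hessian, with your compactness argument for $g'$ and $g''$ in the $\softplus\circ\sinh^{-1}$ composition simply filling in a step the paper states without detail.
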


\begin{proof}
By construction, the ActNorm parameters satisfy $\alpha_i\neq 0$ for all $i$. Moreover, since the hidden activations are $\tanh$ and the output layers are affine, the maps $\mu$ and $\tilde\sigma$ are bounded and belong to $C^2$, with uniformly bounded first and second derivatives. Since
\[
\sigma=\softplus\!\big(\sinh^{-1}(\tilde\sigma)+\mathbf 1\log(e-1)\big),
\]
the same holds for $\sigma$. As $\tilde\sigma$ is bounded, there exist constants
\[
0<\sigma_{\min}\le \sigma_i(\tilde\btheta_\mathcal{B},\x)\le \sigma_{\max}<\infty
\]
for all $i$ and $(\tilde\btheta_\mathcal{B},\x)$. Hence there exist constants
\[
C_\mu,C_{\mu'},C_{\mu''},C_{\sigma'},C_{\sigma''}<\infty
\]
such that for all $i,j,k$,
\[
|\mu_i|\le C_\mu,\qquad |\partial_{z_j}\mu_i|\le C_{\mu'},\qquad |\partial_{z_k}\partial_{z_j}\mu_i|\le C_{\mu''},
\]
\[
|\partial_{z_j}\sigma_i|\le C_{\sigma'},\qquad |\partial_{z_k}\partial_{z_j}\sigma_i|\le C_{\sigma''}.
\]

Since $P$ is a permutation matrix, it preserves the Euclidean norm. Hence
\[
\|\tilde\btheta\|_2=\|P(\alpha\odot\btheta+\beta)\|_2=\|\alpha\odot\btheta+\beta\|_2
\le \|\text{diag}(\alpha)\|\,\|\btheta\|_2+\|\beta\|_2
\le C_0(1+\|\btheta\|_2)
\]
for some $C_0<\infty$. Also, $\tilde\btheta$ is affine in $\btheta$, so all first derivatives $\partial_{z_j}\tilde\btheta_i$ are bounded and all second derivatives $\partial_{z_k}\partial_{z_j}\tilde\btheta_i$ vanish.

We first control the loss. Since $u_\mathcal{B}=\tilde\btheta_\mathcal{B}$, we have
\[
\|u_\mathcal{B}\|_2\le \|\tilde\btheta\|_2\le C_0(1+\|\btheta\|_2).
\]
For each $i\le \lfloor d_\Theta/2\rfloor$,
\[
|u_{\mathcal A,i}|=|\sigma_i(\tilde\btheta_\mathcal{B},\x)\tilde\btheta_{\mathcal A,i}+\mu_i(\tilde\btheta_\mathcal{B},\x)|
\le \sigma_{\max}|\tilde\btheta_{\mathcal A,i}|+C_\mu
\le C_1(1+\|\btheta\|_2)
\]
for some $C_1<\infty$. Therefore,
\[
\|u\|_2^2\le C_2(1+\|\btheta\|_2^2)
\]
for some $C_2<\infty$. Since $\sum_{i=1}^{d_\Theta}\log|\alpha_i|$ is constant and each $\log\sigma_i(\tilde\btheta_\mathcal{B},\x)$ is uniformly bounded, it follows from the expression for $\log q_\phi(\btheta\mid\x)$ that
\[
|\ellNPE(q_\phi;\z)|=|-\log q_\phi(\btheta\mid\x)|\le C_3(1+\|\btheta\|_2^2)\le C_3(1+\|\z\|_2^2)
\]
for some $C_3<\infty$. Hence $\ellNPE(q_\phi;\cdot)\in\mathcal G_2(\mathcal Z)$.

We next control the gradient. For $u_\mathcal{B}=\tilde\btheta_\mathcal{B}$, the first derivatives are bounded and the second derivatives vanish. For $u_\mathcal{A}$, for each $i,j$,
\[
\partial_{z_j}u_{\mathcal A,i}
=(\partial_{z_j}\sigma_i)\tilde\btheta_{\mathcal A,i}
+\sigma_i\,\partial_{z_j}\tilde\btheta_{\mathcal A,i}
+\partial_{z_j}\mu_i.
\]
Hence
\[
|\partial_{z_j}u_{\mathcal A,i}|
\le C_{\sigma'}C_0(1+\|\btheta\|_2)+\sigma_{\max}\sup_{i,j}|\partial_{z_j}\tilde\btheta_{\mathcal A,i}|+C_{\mu'}
\le C_4(1+\|\btheta\|_2)
\]
for some $C_4<\infty$. Therefore,
\[
\left|\partial_{z_j}\frac12\|u\|_2^2\right|
=\left|\sum_i u_i\partial_{z_j}u_i\right|
\le C_5(1+\|\btheta\|_2^2)
\]
for some $C_5<\infty$. Moreover, since $\sigma_i\ge \sigma_{\min}$,
\[
\left|\partial_{z_j}\log\sigma_i\right|
=\left|\frac{\partial_{z_j}\sigma_i}{\sigma_i}\right|
\le \sigma_{\min}^{-1}C_{\sigma'}.
\]
Thus
\[
\|\nabla_{\z}\ellNPE(q_\phi;\z)\|\le C_6(1+\|\btheta\|_2^2)\le C_6(1+\|\z\|_2^2)
\]
for some $C_6<\infty$, and hence $\|\nabla_{\z}\ellNPE(q_\phi;\cdot)\|\in\mathcal G_2(\mathcal Z)$.

Finally, for the Hessian, for each $i,j,k$,
\begin{align*}
\partial_{z_k}\partial_{z_j}u_{\mathcal A,i}
&=(\partial_{z_k}\partial_{z_j}\sigma_i)\tilde\btheta_{\mathcal A,i}
+(\partial_{z_j}\sigma_i)(\partial_{z_k}\tilde\btheta_{\mathcal A,i})
+(\partial_{z_k}\sigma_i)(\partial_{z_j}\tilde\btheta_{\mathcal A,i})\\
&\qquad+\sigma_i\,\partial_{z_k}\partial_{z_j}\tilde\btheta_{\mathcal A,i}
+\partial_{z_k}\partial_{z_j}\mu_i.
\end{align*}
Since $\partial_{z_k}\partial_{z_j}\tilde\btheta_{\mathcal A,i}=0$, this yields
\[
|\partial_{z_k}\partial_{z_j}u_{\mathcal A,i}|\le C_7(1+\|\btheta\|_2)
\]
for some $C_7<\infty$. Consequently,
\[
\left|\partial_{z_k}\partial_{z_j}\frac12\|u\|_2^2\right|
=\left|\sum_i\Big[(\partial_{z_k}u_i)(\partial_{z_j}u_i)+u_i\,\partial_{z_k}\partial_{z_j}u_i\Big]\right|
\le C_8(1+\|\btheta\|_2^2)
\]
for some $C_8<\infty$. Also,
\[
\partial_{z_k}\partial_{z_j}\log\sigma_i
=\frac{\partial_{z_k}\partial_{z_j}\sigma_i}{\sigma_i}
-\frac{(\partial_{z_k}\sigma_i)(\partial_{z_j}\sigma_i)}{\sigma_i^2},
\]
so
\[
|\partial_{z_k}\partial_{z_j}\log\sigma_i|
\le \sigma_{\min}^{-1}C_{\sigma''}+\sigma_{\min}^{-2}C_{\sigma'}^2.
\]
Thus
\[
\|\nabla_{\z}^2\ellNPE(q_\phi;\z)\|\le C_9(1+\|\btheta\|_2^2)\le C_9(1+\|\z\|_2^2)
\]
for some $C_9<\infty$, and hence $\|\nabla_{\z}^2\ellNPE(q_\phi;\cdot)\|\in\mathcal G_2(\mathcal Z)$. Since all components in the construction are $C^2$, we also have $\ellNPE(q_\phi;\cdot)\in C^2(\mathbb R^{d_\Theta+d_\X})$.
\end{proof}

\begin{lemma}[Multiple BayesFlow coupling blocks]
\phantomsection \label{lem:multi-coupling-growth}
Fix $M\ge 1$, the number of blocks, and let
\[
\z^{(0)}=\z=(\btheta,\x).
\]
For $\ell=1,\dots,M$, write $\z^{(\ell)}=(\btheta^{(\ell)},\x)$, where $\btheta^{(\ell)}$ is obtained from $\btheta^{(\ell-1)}$ as follows:
\[
\tilde\btheta^{(\ell)}
= P^{(\ell)}\bigl(\alpha^{(\ell)}\odot \btheta^{(\ell-1)}+\beta^{(\ell)}\bigr),
\]
split
\[
\tilde\btheta^{(\ell)}
=
\bigl[\tilde\btheta^{(\ell)}_{\mathcal A};\tilde\btheta^{(\ell)}_{\mathcal B}\bigr],
\]
and define
\[
\btheta^{(\ell)}_{\mathcal A}
=
\sigma^{(\ell)}\bigl(\tilde\btheta^{(\ell)}_{\mathcal B},\x\bigr)\, \odot
\tilde\btheta^{(\ell)}_{\mathcal A}
+
\mu^{(\ell)}\bigl(\tilde\btheta^{(\ell)}_{\mathcal B},\x\bigr),
\qquad
\btheta^{(\ell)}_{\mathcal B}
=
\tilde\btheta^{(\ell)}_{\mathcal B},
\]
where each $\mu^{(\ell)}$ and $\sigma^{(\ell)}$ is defined through the BayesFlow architecture above.

Then $\z\mapsto \ellNPE(q_\phi;\z)$ belongs to $\mathcal G_2(\mathcal Z)$ and is of class $C^2$. Moreover, with
\[
\beta_M:=M+1,\qquad \alpha_M:=2M,
\]
we have
\[
\|\nabla_{\z}\ellNPE(q_\phi;\cdot)\|\in\mathcal G_{\beta_M}(\mathcal Z),
\qquad
\|\nabla_{\z}^2\ellNPE(q_\phi;\cdot)\|\in\mathcal G_{\alpha_M}(\mathcal Z).
\]
\end{lemma}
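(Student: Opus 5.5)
The plan is to track, by induction over the blocks, three quantities for the intermediate parameter $\btheta^{(\ell)}$ viewed as a function of $\z=(\btheta,\x)$:
\begin{itemize}
\item the value $\|\btheta^{(\ell)}\|$;
\item the Jacobian $J^{(\ell)}:=\partial \btheta^{(\ell)}/\partial \z$;
\item the second-derivative tensor $H^{(\ell)}:=\partial^2\btheta^{(\ell)}/\partial\z^2$.
\end{itemize}
I would then plug these into the expression for the loss. Composition with the multi-block flow changes only the $\btheta$-part of the base point, so
\[
\ellNPE(q_\phi;\z)=\tfrac12\|\btheta^{(M)}\|^2-\sum_{\ell}\sum_i\log|\alpha^{(\ell)}_i|-\sum_{\ell}\sum_i\log\sigma^{(\ell)}_i\bigl(\tilde\btheta^{(\ell)}_{\mathcal B},\x\bigr)+\tfrac{d_\Theta}{2}\log 2\pi .
\]
Smoothness ($C^2$, indeed $C^\infty$) is immediate because every constituent map ($\tanh$, affine maps, $\sinh^{-1}$, softplus, products) is smooth and $\sigma^{(\ell)}>0$.

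The per-block facts come directly from the proof of \Cref{lem:single-coupling-growth}. Each $\mu^{(\ell)}$ and $\sigma^{(\ell)}$ is bounded, $\sigma^{(\ell)}$ is bounded below by some $\sigma_{\min}>0$, and all their first and second derivatives in their arguments $(\tilde\btheta_{\mathcal B},\x)$ are uniformly bounded. Write a block as a map $G_\ell:(\btheta^{(\ell-1)},\x)\mapsto \btheta^{(\ell)}$. These bounds give
\[
\|G_\ell\|\le C(1+\|\btheta^{(\ell-1)}\|),\qquad \|DG_\ell\|\le C(1+\|\btheta^{(\ell-1)}\|),\qquad \|D^2G_\ell\|\le C(1+\|\btheta^{(\ell-1)}\|).
\]
The growing terms are $(\partial\sigma)\,\tilde\btheta_{\mathcal A}$ and $(\partial^2\sigma)\,\tilde\btheta_{\mathcal A}$; all other terms are bounded.

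The induction then proceeds as follows.
\begin{itemize}
\item \emph{Value.} $\|\btheta^{(\ell)}\|\le C_\ell(1+\|\z\|)$ for all $\ell$.
\item \emph{Jacobian.} By the chain rule, $J^{(\ell)}=DG_\ell\,[J^{(\ell-1)};\,(0\;I)]$. Hence $\|J^{(\ell)}\|\le C_\ell(1+\|\z\|)^{\ell}$, starting from $J^{(0)}$ bounded.
\item \emph{Second derivatives.} $H^{(\ell)}=D^2G_\ell[J,J]+DG_\ell\,H^{(\ell-1)}$. This gives the recursion $h_\ell=\max\{1+2(\ell-1),\,1+h_{\ell-1}\}$ with $h_1=1$. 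Its solution is $h_\ell=2\ell-1$, so $\|H^{(\ell)}\|\le C_\ell(1+\|\z\|)^{2\ell-1}$.
\end{itemize}

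Differentiating the loss and combining these bounds gives the claimed exponents.
\begin{itemize}
\item \emph{Gradient.} $\nabla_\z\tfrac12\|\btheta^{(M)}\|^2=(J^{(M)})^\top\btheta^{(M)}$ has growth exponent $M+1$. The $\log\sigma^{(\ell)}$ terms contribute $\sigma^{-1}\partial\sigma\cdot J^{(\ell-1)}$, which is of lower order.
\item \emph{Hessian.} $(J^{(M)})^\top J^{(M)}$ and $\btheta^{(M)}\cdot H^{(M)}$ both have exponent $2M$. The $\log\sigma$ Hessians involve $J^{(\ell-1)\top}J^{(\ell-1)}$ and $H^{(\ell-1)}$ with bounded coefficients, so they are of lower order.
\end{itemize}
Finally, I would use $(1+\|\z\|)^k\le 2^{k-1}(1+\|\z\|^k)$ to convert these into membership of $\mathcal G_{M+1}$ and $\mathcal G_{2M}$.

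The loss itself is in $\mathcal G_2$, because $\|\btheta^{(M)}\|^2\le C(1+\|\z\|^2)$ and the $\log\sigma$ terms are bounded.

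The main obstacle is the bookkeeping in the $H^{(\ell)}$ recursion. The cross term $D^2G_\ell[J^{(\ell-1)},J^{(\ell-1)}]$ carries an extra linear factor from $\tilde\btheta_{\mathcal A}$, and one must check that it, not the propagated term, dominates, yielding exactly $2\ell-1$. A related subtlety is that $\sigma^{(\ell)}$ depends on $\z$ only through $(\tilde\btheta^{(\ell)}_{\mathcal B},\x)$, so its $\z$-derivatives also pick up powers of $J^{(\ell-1)}$. I would handle this by writing everything as derivatives of $G_\ell$ in its own arguments, with bounds depending on $\|\btheta^{(\ell-1)}\|$, and only then substituting the inductive bounds.
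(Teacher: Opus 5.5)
Your proposal is correct and follows essentially the same route as the paper: an induction over blocks using the first- and second-order chain rules, with per-block bounds linear in the incoming state. This gives $(1+\|\z\|)^{M}$ for the Jacobian and $(1+\|\z\|)^{2M-1}$ for the second derivatives of $\btheta^{(M)}$, hence exponents $M+1$ and $2M$ for the gradient and Hessian of $\tfrac12\|\btheta^{(M)}\|^2$. Your explicit check that the $\log\sigma^{(\ell)}$ terms involve only $J^{(\ell-1)}$ and $H^{(\ell-1)}$, and so are of lower order, is more careful than the paper, which just says these terms are ``controlled similarly''.
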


\begin{proof}
For each $\ell=1,\dots,M$, let $F^{(\ell)}:\mathcal Z\to\mathcal Z$ denote the $\ell$-th block map, defined by
\[
\z^{(\ell)}=F^{(\ell)}(\z^{(\ell-1)}),\qquad \z^{(0)}=\z=(\btheta,\x).
\]
By Lemma~\ref{lem:single-coupling-growth}, each $F^{(\ell)}$ belongs to $C^2$, and there exist constants $a_\ell,b_\ell,c_\ell<\infty$ such that for all $\z\in\mathcal Z$,
\begin{align*}
\|F^{(\ell)}(\z)\|_2&\le a_\ell(1+\|\z\|_2),\\
\|\nabla_{\z}F^{(\ell)}(\z)\|&\le b_\ell(1+\|\z\|_2),\\
\|\nabla_{\z}^2F^{(\ell)}(\z)\|&\le c_\ell(1+\|\z\|_2).
\end{align*}
Since each $P^{(\ell)}$ is a permutation matrix, it preserves Euclidean norms, so it affects only constants and not growth orders.

We first bound the iterates. Since
\[
\|\z^{(\ell)}\|_2=\|F^{(\ell)}(\z^{(\ell-1)})\|_2\le a_\ell(1+\|\z^{(\ell-1)}\|_2),
\]
an induction on $\ell$ yields
\[
\|\z^{(M)}\|_2\le C_M(1+\|\z\|_2)
\]
for some $C_M<\infty$. In particular,
\[
\|\btheta^{(M)}\|_2\le C_M(1+\|\z\|_2).
\]

Now the full log-density is given by
\[
\log q_\phi(\btheta\mid\x)
=
-\frac12\|\btheta^{(M)}\|_2^2
+\sum_{\ell=1}^M\log\left|\det\frac{\partial \btheta^{(\ell)}}{\partial \tilde\btheta^{(\ell)}}\right|
+\sum_{\ell=1}^M\log\left|\det\frac{\partial \tilde\btheta^{(\ell)}}{\partial \btheta^{(\ell-1)}}\right|
-\frac{d_\Theta}{2}\log(2\pi).
\]
The second sum is constant in $\z$, since each term equals $\sum_i\log|\alpha_i^{(\ell)}|$. The first sum equals
\[
\sum_{\ell=1}^M\sum_{i=1}^{\lfloor d_\Theta/2\rfloor}
\log \sigma_i^{(\ell)}\bigl(\tilde\btheta_{\mathcal B}^{(\ell)},\x\bigr),
\]
which is uniformly bounded because each $\sigma_i^{(\ell)}$ is bounded above and below away from zero by the single-block analysis. Hence
\[
|\ellNPE(q_\phi;\z)|\le C'_M(1+\|\btheta^{(M)}\|_2^2)\le C'_M(1+\|\z\|_2^2)
\]
for some $C'_M<\infty$. Therefore $\ellNPE(q_\phi;\cdot)\in\mathcal G_2(\mathcal Z)$.

We next control the gradient. By the chain rule,
\[
\nabla_{\z}\z^{(M)}(\z)
=
\nabla_{\z}F^{(M)}\!\bigl(\z^{(M-1)}(\z)\bigr)\cdots
\nabla_{\z}F^{(1)}\!\bigl(\z^{(0)}(\z)\bigr).
\]
Hence
\[
\|\nabla_{\z}\z^{(M)}(\z)\|
\le \prod_{\ell=1}^M b_\ell\bigl(1+\|\z^{(\ell-1)}(\z)\|_2\bigr)
\le C''_M(1+\|\z\|_2)^M
\]
for some $C''_M<\infty$. Since $\btheta^{(M)}$ is the first component of $\z^{(M)}$, this implies
\[
\|\nabla_{\z}\btheta^{(M)}(\z)\|\le C''_M(1+\|\z\|_2)^M.
\]
Therefore
\[
\nabla_{\z}\Big(\tfrac12\|\btheta^{(M)}(\z)\|_2^2\Big)
=
\nabla_{\z}\btheta^{(M)}(\z)^\top \btheta^{(M)}(\z),
\]
and so
\[
\left\|\nabla_{\z}\Big(\tfrac12\|\btheta^{(M)}(\z)\|_2^2\Big)\right\|
\le \|\nabla_{\z}\btheta^{(M)}(\z)\|\,\|\btheta^{(M)}(\z)\|_2
\le C'''_M(1+\|\z\|_2)^{M+1}
\]
for some $C'''_M<\infty$. The derivatives of the accumulated log-determinant terms are controlled similarly, since each block contributes derivatives of $\log\sigma_i^{(\ell)}$ composed with the pre-coupling variables $(\tilde\btheta_{\mathcal B}^{(\ell)},\x)$, and the derivatives of $\log\sigma_i^{(\ell)}$ are bounded by the single-block analysis. Hence
\[
\|\nabla_{\z}\ellNPE(q_\phi;\z)\|\le C'''_M(1+\|\z\|_2)^{M+1},
\]
so $\|\nabla_{\z}\ellNPE(q_\phi;\cdot)\|\in\mathcal G_{M+1}(\mathcal Z)$.

Finally, we control the Hessian. Differentiating
\[
\nabla_{\z}\Big(\tfrac12\|\btheta^{(M)}(\z)\|_2^2\Big)
=
\nabla_{\z}\btheta^{(M)}(\z)^\top \btheta^{(M)}(\z)
\]
gives
\[
\nabla_{\z}^2\Big(\tfrac12\|\btheta^{(M)}(\z)\|_2^2\Big)
=
\bigl(\nabla_{\z}\btheta^{(M)}(\z)\bigr)^\top \nabla_{\z}\btheta^{(M)}(\z)
+\sum_i \theta_i^{(M)}(\z)\,\nabla_{\z}^2\theta_i^{(M)}(\z).
\]
It therefore remains to bound $\nabla_{\z}^2\z^{(M)}(\z)$. Since $\z^{(0)}(\z)=\z$, we have
\[
\nabla_{\z}\z^{(0)}(\z)=I,\qquad \nabla_{\z}^2\z^{(0)}(\z)=0.
\]
The second-order chain rule gives the recursion
\[
\nabla_{\z}^2\z^{(\ell)}(\z)
=
\nabla_{\z}^2F^{(\ell)}\!\bigl(\z^{(\ell-1)}(\z)\bigr)
\bigl[\nabla_{\z}\z^{(\ell-1)}(\z),\nabla_{\z}\z^{(\ell-1)}(\z)\bigr]
+
\nabla_{\z}F^{(\ell)}\!\bigl(\z^{(\ell-1)}(\z)\bigr)\,
\nabla_{\z}^2\z^{(\ell-1)}(\z),
\]
Hence
\[
\|\nabla_{\z}^2\z^{(\ell)}(\z)\|
\le c_\ell(1+\|\z^{(\ell-1)}(\z)\|_2)\|\nabla_{\z}\z^{(\ell-1)}(\z)\|^2
+b_\ell(1+\|\z^{(\ell-1)}(\z)\|_2)\|\nabla_{\z}^2\z^{(\ell-1)}(\z)\|.
\]
Using the bounds
\[
\|\z^{(\ell-1)}(\z)\|_2\le C_M(1+\|\z\|_2),\qquad
\|\nabla_{\z}\z^{(\ell-1)}(\z)\|\le C''_M(1+\|\z\|_2)^{\ell-1},
\]
an induction on $\ell$ yields
\[
\|\nabla_{\z}^2\z^{(M)}(\z)\|\le C''''_M(1+\|\z\|_2)^{2M-1}
\]
for some $C''''_M<\infty$. In particular,
\[
\|\nabla_{\z}^2\btheta^{(M)}(\z)\|\le C''''_M(1+\|\z\|_2)^{2M-1}.
\]
Therefore
\[
\left\|\nabla_{\z}^2\Big(\tfrac12\|\btheta^{(M)}(\z)\|_2^2\Big)\right\|
\le \|\nabla_{\z}\btheta^{(M)}(\z)\|^2+\|\btheta^{(M)}(\z)\|_2\,\|\nabla_{\z}^2\btheta^{(M)}(\z)\|
\le C'''''_M(1+\|\z\|_2)^{2M}
\]
for some $C'''''_M<\infty$. The second derivatives of the accumulated log-determinant terms are controlled in the same way and are of no larger polynomial order. Hence
\[
\|\nabla_{\z}^2\ellNPE(q_\phi;\z)\|\le C'''''_M(1+\|\z\|_2)^{2M},
\]
so $\|\nabla_{\z}^2\ellNPE(q_\phi;\cdot)\|\in\mathcal G_{2M}(\mathcal Z)$. Since each $F^{(\ell)}$ is $C^2$, the composition is also $C^2$, and therefore $\ellNPE(q_\phi;\cdot)\in C^2(\mathcal Z)$.
\end{proof}

\Cref{lem:flows-assumptions} then follows directly from Lemmas \ref{lem:MAF} and \ref{lem:multi-coupling-growth}.

\section{Experimental details}\label{appendix:implementation-details}

This section is organised as follows. Appendix~\ref{appendix:hyperparam-selection} outlines our hyperparameter-selection procedure using the KL-based miscalibration metric, Appendix~\ref{appendix:baseline-details} describes the baseline methods, Appendix~\ref{app:arch-hyperparam} contains the implementation details, Appendix~\ref{app:simulator} contains the details of the simulators used in \Cref{sec:experiments}, and  Appendix~\ref{app:cost_comparison} discusses the computational cost.

\subsection{Hyperparameter selection -- selection method for DRO-NPE}\label{appendix:hyperparam-selection}

\paragraph{Estimating KL-based miscalibration.} Recall that $P_{\x}$ is the marginal distribution of the joint simulator $P\equiv P_{\btheta, \x}$, and $F^Q_{S \mid \x}$ is the CDF of $Q_{S \mid \x}$, which is the distribution of $S(\btheta; \x)$ whenever $\btheta \mid \x \sim Q_{\btheta \mid \x}$. For each $\x$, we define $\tilde U \mid \x \sim (F^Q_{S\mid\x})_{\#}P_{S\mid\x}$. Let $P_{\tilde U,\x}$ denote the resulting joint distribution of $(\tilde U,\x)$. We also define an independent reference pair $(U,\x)$, where $U \sim \mathcal{U}[0,1]$ is independent of $\x\sim P_{\x}$, so that $P_{U, \x}  = P_{U} \otimes P_{\x}$. We write the respective densities as $p_{U,\x}(u,\x)=p_U(u)p_{\x}(\x)=p_{\x}(\x)$ since $p_U(u)=1$ and $p_{\tilde U, \x}(\tilde u, \x) = p_{\tilde U \mid \x}(\tilde u \mid \x) p_{\x}(\x)$.
For any $Q_{\btheta \mid \x}$ with density $q_\phi$, the KL-based miscalibration is 
\[
\kl_{\mathrm{cal}}^S(q_\phi; P)=\mathbb{E}_{\x \sim P_{\x}} \left[-h\left((F^Q_{S \mid \x})_{\#}P_{S \mid \x} \right)\right] =\int p_{\x}(\x) \int p_{\tilde U}(\tilde u \mid \x)  \log p_{\tilde U}(\tilde u \mid \x) d\tilde u \; d\x
\]
where $h(\mu)=-\int p(u)\log p(u)\,du$ denotes the differential entropy. Since \(p_U(\tilde u)=1\) on \([0,1]\), we can equivalently write:
\begin{align*}
    \int p_{\x}(\x) \int p_{\tilde U}(\tilde u \mid \x)  \log p_{\tilde U}(\tilde u \mid \x) d\tilde u \; d\x &= \underbrace{\int  \int  p_{\tilde U}(\tilde u \mid \x) p_{\x}(\x)  \log \frac{p_{\tilde U}(\tilde u \mid \x) p_{\x}(\x) }{p_U(\tilde u) p_{\x}(\x)} d\tilde u \; d\x}_{=\KL(P_{\tilde U, \x} \| P_{U, \x})}, 
\end{align*}
Therefore, \(\kl_{\mathrm{cal}}^S(q_\phi;P)\) can be estimated by density-ratio estimation between samples from the dependent joint \(P_{\tilde U,\x}\) and the independent reference \(P_U\otimes P_{\x}\).
To this end, we formulate a binary classification problem: samples from class \(y=1\) are drawn from \((\tilde u,\x)\sim P_{\tilde U,\x}\), while samples from class \(y=0\) are drawn from \((u,\x)\sim P_{U,\x}=P_U\otimes P_{\x}\). 
With equal class priors, the Bayes-optimal classifier \(D^\star:[0,1]\times\X\to[0,1]\) is given by
\begin{align*}
D^\star(u,\x) = \frac{p_{\tilde U,\x}(u,\x)}{p_{\tilde U,\x}(u,\x)+p_{U,\x}(u,\x)}.
\end{align*}
Rearranging gives
\[
\frac{p_{\tilde U | \x}(u \mid \x) p_{\x}(\x)}{p_U(u)\,p_{\x}(\x)} =\frac{p_{\tilde U,\x}(u,\x)}{p_{U,\x}(u,\x)}
=
\frac{D^\star(u,\x)}{1-D^\star(u,\x)}.
\]

Thus, the desired ratio can be recovered directly from the classifier via its odds. In practice, we train a discriminator \(D_\psi\) by minimising the binary cross-entropy loss, yielding the approximation
\[
\frac{p_{\tilde U,\x}(u,\x)}{p_{U,\x}(u,\x)}
\approx
\frac{D_{\hat\psi}(u,\x)}{1-D_{\hat\psi}(u,\x)}.
\]
Further, we do not use \(\x\) directly in the discriminator. Instead, we map it to the scalar summary
\[
\gamma_\phi(\x)
:=
\sum_{i=1}^{d_\Theta}\operatorname{Var}_{\btheta\sim Q_{\btheta\mid \x}}(\theta_i\mid \x),
\]
where \(Q_{\btheta\mid \x}\) denotes the distribution with density \(q_\phi(\btheta\mid\x)\).
We then model the discriminator as a function of \((u,\gamma_\phi(\x))\) via logistic regression:
\[
D_\psi(u,\x)
=
\frac{1}{1+\exp\!\bigl(-(\psi_0+\psi_1 u+\psi_2 \gamma_\phi(\x))\bigr)}.
\]

\paragraph{Selecting $\varepsilon$.} Given an estimate of \(\kl_{\mathrm{cal}}^S\), we choose \(\varepsilon\) using a validation set.
We split \(n\) samples from \(P\) into training and validation sets, and minimise the validation estimate of \(\kl_{\mathrm{cal}}^S\) over \(\varepsilon \in [\varepsilon_{\min},\varepsilon_{\max}]\). For each queried value of \(\varepsilon\), we train $q_\phi$ on the training set and evaluate \(\kl_{\mathrm{cal}}^S\) on the validation set.
The selected value \(\varepsilon^\star\) is then used to retrain $q_\phi$ on the full dataset.
In our experiments, we perform Bayesian optimisation \citep{garnett2023bayesian} over \(\log\varepsilon\) on the interval \([\log 0.001, \log 10]\), using a budget of 10 iterations.
Optimising over \(\log\varepsilon\) explores each order of magnitude equally, which is appropriate because \(\varepsilon\) controls the strength of regularisation.

\subsection{Description of baseline methods}\label{appendix:baseline-details}

\paragraph{Balanced NPE \citep{delaunoy2023balancing}.}
The method extends the Balanced NRE method proposed by \citet{delaunoy2022towards} to NPE. NRE relies on the  ratio $r(\x \mid \btheta) := \frac{p(\btheta, \x)}{p(\btheta)p(\x)}$ and the Bayes-optimal classifier $d^\star: \Theta \times \X \to [0,1]$, which distinguishes samples from the joint distribution $P_{\btheta, \x}$ and the product of marginals $\Pi \otimes P_{\x}$. The optimal classifier is defined as $d^\star(\btheta, \x) := \frac{p(\btheta, \x)}{p(\btheta, \x) + p(\btheta)p(\x)}$, and relates to $r(\x \mid \btheta)$ via $r(\x \mid \btheta) = \frac{d^\star(\btheta, \x)}{1 - d^\star(\btheta, \x)}$.

In practice, the optimal classifier is not available and is estimated from finite samples, denoted $\hat d$. \citet{delaunoy2023balancing} suggest that overconfident posteriors obtained from NRE originate from overconfidence in $\hat d$. To mitigate this, they introduce a regularisation term that encourages $\hat d$ to be more conservative than $d^\star$. Specifically, they define a balanced classifier $d_b$ satisfying $\mathbb{E}_{\btheta, \x \sim P_{\btheta, \x}} [d_b(\btheta, \x)] + \mathbb{E}_{\btheta, \x \sim \Pi \otimes P_{\x}} [d_b(\btheta, \x)] = 1$. They show that any balanced classifier satisfies $\mathbb{E}_{\btheta, \x \sim P_{\btheta, \x}}\left[ \frac{d^\star(\btheta, \x)}{d_b(\btheta, \x)} \right] \geq 1$ and $\mathbb{E}_{\btheta, \x \sim \Pi \otimes P_{\x}}\left[ \frac{1 - d^\star(\btheta, \x)}{1 - d_b(\btheta, \x)} \right] \geq 1$, implying that $d_b$ is less confident than $d^\star$ in expectation.

To impose this property in the NPE setting, \citet{delaunoy2023balancing} add the balancing regularisation term $\mathbb{E}_{\btheta, \x \sim P_{\btheta, \x}}[\hat d (\btheta, \x)] + \mathbb{E}_{\btheta, \x \sim \Pi \otimes P_{\x}}[\hat d (\btheta, \x)] -1$ on top of the standard NPE loss. This term is estimated using the approximate posterior density by noting that $r(\x \mid \btheta) = \frac{p(\btheta \mid \x)}{p(\btheta)} \approx \frac{q(\btheta \mid \x)}{p(\btheta)}$ and $\hat d(\btheta, x) \approx r(\x \mid \btheta) / (1 + r(\x \mid \btheta))$. The resulting regularised objective is
\begin{talign*}
\mathcal{L}_{\mathrm{Bal}}(q; P_{\btheta, \x}, \Pi \otimes P_{\x})
:=
\mathcal{L}_\mathrm{NPE} (q; P_{\btheta, \x})
+ \lambda \left[
\mathbb{E}_{\btheta, \x \sim P_{\btheta, \x}}\left[
\frac{\frac{q(\btheta \mid \x)}{p(\btheta)}}{1 + \frac{q(\btheta \mid \x)}{p(\btheta)}}
\right]
+
\mathbb{E}_{\btheta, \x \sim \Pi \otimes  P_{\x}}\left[
\frac{\frac{q(\btheta \mid \x)}{p(\btheta)}}{1 + \frac{q(\btheta \mid \x)}{p(\btheta)}}
\right]
-1
\right],
\end{talign*}
where $\lambda >0$ is a regularisation parameter. In their implementation, samples from $\Pi \otimes P_{\x}$ are approximated using joint samples $\{(\boldsymbol{\theta}_i, \x_i)\}_{i=1}^n \sim P_{\boldsymbol{\theta}, \x}$. While these are used for estimating terms involving the joint, samples from the marginals are constructed by cyclically shifting $\boldsymbol{\theta}$, yielding mismatched pairs $\{(\boldsymbol{\theta}_n, \x_1), (\boldsymbol{\theta}_1, \x_2), \dots, (\boldsymbol{\theta}_{n-1}, \x_n)\}$.

\paragraph{Coverage-regularised NPE \citep{falkiewicz2023calibrating}.} This method penalises deviations of the SBC ranks—computed as in \Cref{def:cal}—from the standard uniform distribution, using $S(\btheta; \x) = q(\btheta \mid \x)$. Given $(\btheta, \x)$ from the joint distribution, define
$u = u(q; \btheta, \x) = \frac{1}{M} \sum_{m=1}^M \mathbf{1}\bigl\{ q(\tilde\btheta^{(m)} \mid \x) < q(\btheta \mid \x) \bigr\}$,
where $\tilde\btheta^{(1)}, \dots, \tilde\btheta^{(M)} \sim Q_{\btheta \mid \x}$. Let $u_{(1)}, \dots, u_{(n)}$ denote the sorted sequence of $\{u_i\}_{i=1}^n$ in ascending order with $u_i = u(q; \btheta_i, \x_i)$. The empirical CDF evaluated at these points is given by $\hat{F}(u_{(i)}) = \frac{i}{n}, \quad i = 1, \dots, n.$ This empirical CDF is compared to the CDF of the standard uniform distribution, which satisfies $F(u_{(i)}) = u_{(i)}$. The resulting calibration error over $n$ samples is defined as $\frac{1}{n} \sum_{i=1}^n \left(u_{(i)} -  \frac{i}{n} \right)^2$. 
To encourage conservative approximate posteriors, they propose penalising only overconfident deviations. This is achieved by introducing a one-sided penalty that considers only cases where the uniform CDF dominates the empirical CDF, namely
$\frac{1}{n} \sum_{i=1}^n \left( \max\left\{ u_{(i)} - \frac{i}{n}, 0 \right\} \right)^2$,
which is the formulation adopted in their main experiments.

To avoid backpropagation through posterior samples for broad applicability, $u$ is estimated via self-normalised importance sampling with the prior as a proposal distribution such that 
$$u  =  \frac{\sum_{m=1}^M \frac{q(\tilde\btheta^{(m)} \mid \x)}{p(\tilde\btheta^{(m)})} \mathbf{1}\bigl\{ q(\tilde\btheta^{(m)} \mid \x) < q(\btheta \mid \x) \bigr\}}{\sum_{m=1}^M \frac{q(\tilde\btheta^{(m)} \mid \x)}{p(\tilde\btheta^{(m)})}},$$
where $\tilde \btheta^{(1)}, \dots, \tilde \btheta^{(M)} \sim \Pi$. Moreover, differentiable relaxations of both the indicator function and the sorting operation are employed. 
The resulting regularised objective is 
\begin{align*}
 \mathcal{L}_{\mathrm{Cal}}(q; P_n) :=
\mathcal{L}_\mathrm{NPE} (q; P_n) 
+ \lambda \frac{1}{n} \sum_{i=1}^n \left(\max\left\{u_{(i)}\left(q; \btheta_{(i)}, \x_{(i)}\right) - \frac{i}{n}, 0 \right\}\right)^2.
\end{align*}

\subsection{Implementation details}\label{app:arch-hyperparam}

\textbf{Conditional density estimator.} \hspace{1ex}
Across all experiments, we employ a conditional normalising flow model based on coupling layers, as implemented in the \textit{Bayesflow} package \citep{kuhmichel2026bayesflow}, for both DRO-NPE and the baseline methods. We use six coupling layers and adopt affine transformations as the coupling functions, following the default configuration. For the multi-layer perceptron used to predict the parameters of the affine transformations, we retain the default architecture except for the choice of activation function: we replace the standard activation with the tanh, which ensures that the theoretical assumptions underlying our method are satisfied.

\textbf{Training procedure.} \hspace{1ex}
The model is trained for 1000 epochs with a batch size of 64. We train the network using the AdamW optimiser \citep{loshchilov2017decoupled} with a learning rate of $5 \times 10^{-4}$. Unless otherwise stated, we do not use a validation set for early stopping or model selection, in order to isolate the effect of the training objective. For CR-NPE, we observed numerical instabilities during prolonged training. Therefore, for this method, we train the network until instability occurs (i.e., loss value becoming \textit{nan}) or for 1000 epochs (whichever occurs first). In case of instability, we set the final model to be the one in the last stable training epoch.
Note that while this instability is also present in the original implementation, it is more pronounced in our setting, potentially due to the use of a different conditional density estimator from the original paper. Nevertheless, we retain this choice to ensure a fair and meaningful comparison across methods. For $\varepsilon$-selection in DRO-NPE, we use 10\% of the data as validation set.

\textbf{Hyperparameters for other methods.} \hspace{1ex} Following the recommendations in the respective works, we set $\lambda = 5$ and $M = 16$ in CR-NPE and $\lambda = 100$ in Bal-NPE. 

\subsection{Description of simulators}\label{app:simulator}

\paragraph{SLCP.}
The simulator is deliberately constructed to exhibit a simple likelihood while inducing a complex posterior, making it well-suited for benchmarking. The parameter $\btheta \in \mathbb{R}^5$ governs a two-dimensional Gaussian distribution. We draw four independent samples from this distribution and concatenate them, resulting in an observation $\x \in \mathbb{R}^8$ \citep{Papamakarios2019, Lueckmann2021, Hermans2022}.
\[
\begin{aligned}
\textbf{Prior:} \quad & \btheta \sim \mathcal{U}([-3, 3]^5) \\
\textbf{Model:} \quad & \x_i \sim \mathcal{N}\left(
\begin{bmatrix}
\theta_1 \\
\theta_2
\end{bmatrix},
\begin{bmatrix}
\theta_3^4 & \tanh(\theta_5)\,\theta_3^2 \theta_4^2 \\
\tanh(\theta_5)\,\theta_3^2 \theta_4^2 & \theta_4^4
\end{bmatrix}
\right), \quad i = 1, \dots, 4 \\
& \x = [\x_1^\top, \dots, \x_4^\top]^\top
\end{aligned}
\]

\paragraph{Two moons.}
The two moons simulator is a commonly used toy benchmark in SBI, designed to exhibit a posterior distribution with multimodality. Both the parameter and observation spaces are two-dimensional, with$\btheta \in \mathbb{R}^2$ and $\x \in \mathbb{R}^2$ \citep{Lueckmann2021, Greenberg2019}.

\[
\begin{aligned}
\textbf{Prior:} \quad &\btheta \sim  \mathcal{U}([-1, 1]^2),\\
\textbf{Model:} \quad & \x =
\begin{bmatrix}
-|\theta_1 + \theta_2| /\sqrt{2} \\
(-\theta_1 + \theta_2) / 2
\end{bmatrix}
+
\begin{bmatrix}
r\cos \alpha + 0.25 \\
r\sin \alpha
\end{bmatrix}, \\
& \text{ where } \alpha \sim \mathcal{U}(-\pi/2, \pi/2) \text{ and } r \sim \mathcal{N}(0.1, 0.01^2).
\end{aligned}
\]

\paragraph{Lotka--Volterra.}
The Lotka--Volterra model describes the nonlinear dynamics of interacting prey and predator populations. The objective is to infer the parameter vector $\btheta \in \mathbb{R}^4$, which controls the species interaction dynamics, from noisy observations of both populations over time. Each population is summarised at 10 evenly spaced time points, yielding an observation $\x \in \mathbb{R}^{20}$ \citep{Lueckmann2021, lotka1920analytical}.

\[
\begin{aligned}
\textbf{Prior:} \quad & \theta_1 \sim \mathrm{LogNormal}(-0.125, 0.5), \quad \theta_2 \sim \mathrm{LogNormal}(-3, 0.5) \\
\quad & \theta_3 \sim \mathrm{LogNormal}(-0.125, 0.5), \quad \theta_4 \sim \mathrm{LogNormal}(-3, 0.5) \\
\textbf{Model:} \quad & x_{1, i} \sim  \mathrm{LogNormal}(\log(X(t_i)), 0.1), \quad i = 1,\dots 10 \\ 
&x_{2, i} \sim \mathrm{LogNormal}(\log (Y(t_i)), 0.1), \quad i = 1,\dots 10 \\
& \text{where } (X, Y) \text{ evolve according to} \\
& \frac{dX}{dt} = \theta_1 X - \theta_2 X Y, \quad \frac{dY}{dt} = -\theta_3 Y + \theta_4 X Y, \quad  \\
& \text{with time horizon } T = 20\text{, and initial condition } (X(0), Y(0)) = (30, 1). \\
& \text{Observations are collected at 10 evenly spaced time points } t_i \in [0, T].
\end{aligned}
\]

\paragraph{Inverse Kinematics.}
The inverse kinematics benchmark models a two-dimensional, multi-jointed robotic arm whose forward mapping sends joint configurations to the arm endpoint position. The parameter $\btheta \in \mathbb{R}^4$ specifies the configuration of the arm, consisting of an initial height $\theta_1$ and three subsequent joint angles $\theta_2, \theta_3, \theta_4$. The resulting observation $\x \in \mathbb{R}^2$ corresponds to the two-dimensional coordinates of the arm \citep{kruse2021benchmarking, schmitt2024consistency}.

\[
\begin{aligned}
\textbf{Prior:} \quad & \btheta \sim \mathcal{N}\!\left(\mathbf{0}, \operatorname{diag}\!\left(\tfrac{1}{16}, \tfrac{1}{4}, \tfrac{1}{4}, \tfrac{1}{4}\right)\right)\\
\textbf{Model:} \quad & x_1 = l_1 \sin(\theta_2) + l_2 \sin(\theta_2 + \theta_3) + l_3 \sin(\theta_2 + \theta_3 + \theta_4) + \theta_1 \\
&x_2 = l_1\cos(\theta_2) + l_2 \cos(\theta_2 + \theta_3) + l_3 \cos(\theta_2 + \theta_3 + \theta_4)   \\
& \text{ with segment lengths: } l_1 = l_2 =  \frac{1}{2}, l_3 = 1.
\end{aligned}
\]

\paragraph{Cosmology.}

We use the high-fidelity hydrodynamic simulations from the CAMELS suite ~\citep{CAMELS_presentation, CAMELS_DR1}, a benchmark dataset for machine learning in astrophysics. These simulations model 25 Mpc$/h$ cosmological volumes and include complex astrophysical processes such as feedback. The inference task is to estimate the cosmological parameters $\theta = (\Omega_m, \sigma_8)$, where $\Omega_m$ is the matter density and $\sigma_8$ 
the amplitude of fluctuations, from mock power spectrum measurements $P(k)$.

\subsection{Computational cost comparison}\label{app:cost_comparison}

Here, we briefly compare the computational overhead of DRO-NPE with that of the baselines, focusing primarily on the additional memory requirements during training. Following \citet{falkiewicz2023calibrating}, we focus on memory because it can be characterised more directly than wall-clock time, which depends strongly on the automatic-differentiation backend and hardware. We do not report wall-clock times for all baselines because CR-NPE uses PyTorch, whereas the other methods use JAX through BayesFlow/Keras due to package-specific support for differentiable sorting; this would make timing comparisons unfair.  

DRO-NPE incurs additional overhead from computing input gradients with respect to  $z$ and differentiating through them, requiring higher-order automatic differentiation and memory that scales with the mini-batch size $B$ and $d_\mathcal{Z}$. CR-NPE incurs overhead from importance sampling, which scales with $B$, $d_\Theta$, and the number of samples $M$. Bal-NPE only requires additional density evaluations on cyclically shifted mini-batch pairs, so its extra memory usage scales mainly with $B$. Thus, although DRO-NPE introduces higher-order automatic differentiation, its memory overhead should remain moderate in typical SBI settings, especially when low-dimensional summary embeddings are used.

\section{Additional results}\label{app:additional-results}

In this section, we present additional results related to experiments shown in \Cref{sec:experiments}. 

\paragraph{Additional details and results for \Cref{fig:intro-posterior-comparison}.} The posterior contour plots of Lotka--Volterra parameters in \Cref{fig:intro-posterior-comparison} were obtained using $n = 1024$ simulations for both NPE and DRO-NPE.
To produce a reference posterior, we train NPE with $n = 131072$ simulations. The contour plots are computed using kernel density estimation from $5000$ posterior samples. In \Cref{fig:posterior-comparison-app}, we show the results for all four parameters using paired two-dimensional marginals for $(\theta_1, \theta_2)$ and $(\theta_3, \theta_4)$. We observe similar results for $(\theta_3, \theta_4)$ as we did for $(\theta_1, \theta_2)$: NPE posterior is biased, while DRO-NPE covers the true parameter and yields conservative posterior.
\begin{figure}
    \centering
    \includegraphics[width=0.6\linewidth]{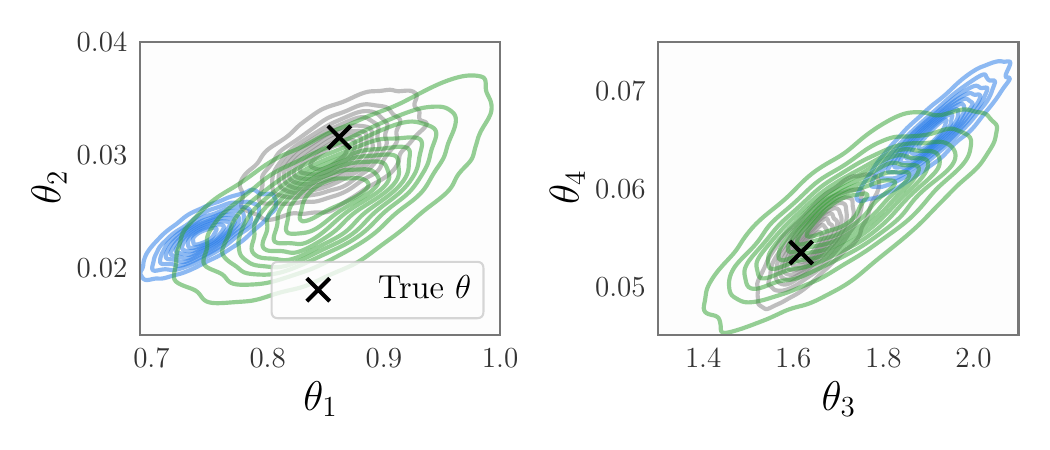}
    \caption{Posterior contour plots of Lotka--Volterra model obtained using NPE (\legendbox{npecolor}) and DRO-NPE (\legendbox{drocolor}) trained with $n = 1024$ simulations. The reference NPE posterior (\legendbox{gray}) is obtained using $n = 131072$.
    }\label{fig:posterior-comparison-app}
\end{figure}

\paragraph{Selected values of $\varepsilon$.}
\Cref{fig:dro-optimal-eps} reports the selected value $\varepsilon$, which we denote as $\varepsilon^\star$, obtained by minimising the validation estimate of $\kl_\mathrm{cal}^q$ in the benchmarking results of \Cref{sec:benchmarking}. Across most tasks, $\varepsilon^\star$ decreases as the simulation budget grows, as expected since larger simulation budgets require less conservativeness. The selected values also vary substantially across tasks, suggesting that a single choice of $\varepsilon$ as a function of $n$ is unlikely to be adequate.

\begin{figure}
    \centering
    \includegraphics[width=\linewidth]{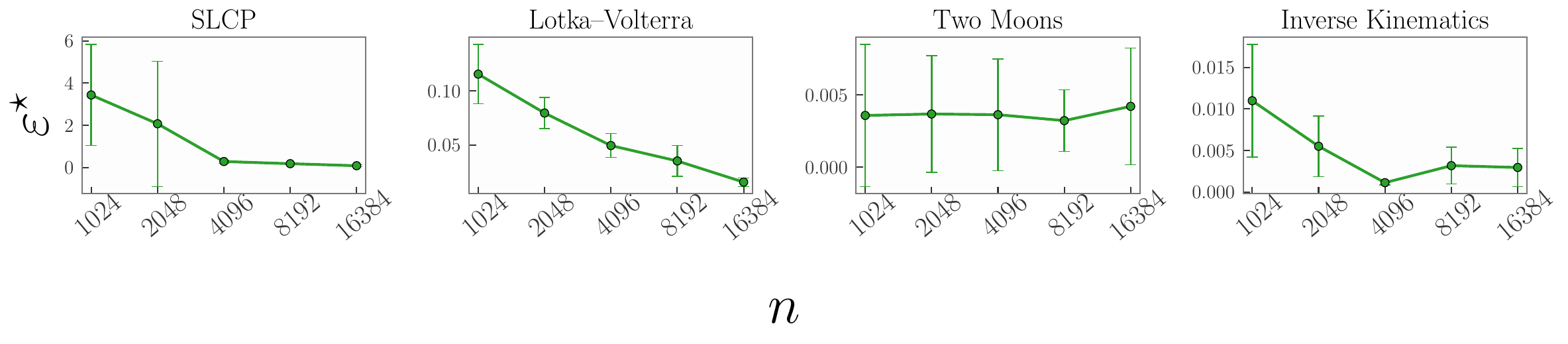}
    \caption{
    Selected DRO radius \(\varepsilon^\star\) across benchmark tasks and simulation budgets.
    Values are chosen by minimising validation \(\kl_\mathrm{cal}^q\); means and standard deviations are shown over five random seeds.
    }
    \label{fig:dro-optimal-eps}
\end{figure}

\begin{figure}
    \centering

    \begin{subfigure}{\linewidth}
        \centering
        \includegraphics[width=\linewidth]{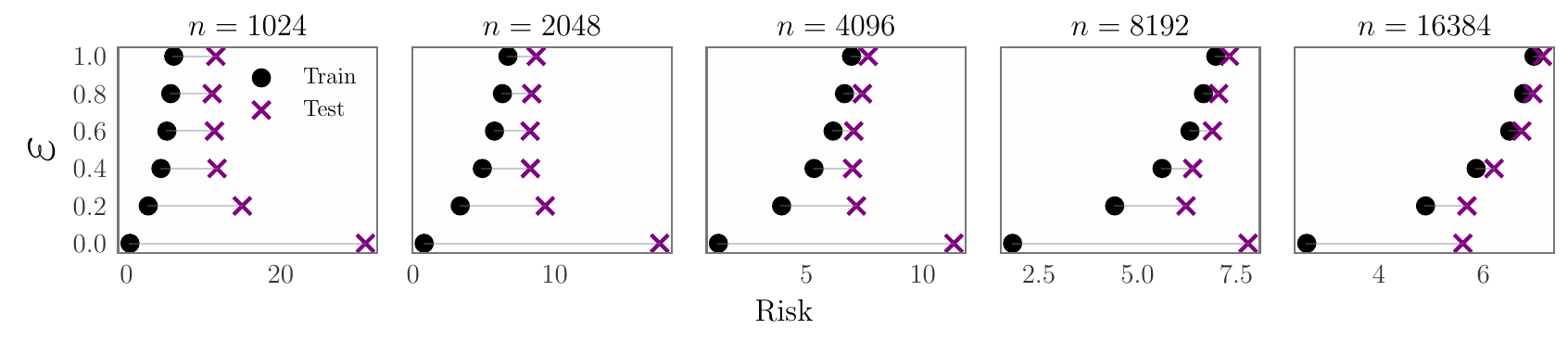}
         \caption{SLCP}
    \end{subfigure}
    \
    \begin{subfigure}{\linewidth}
        \centering
        \includegraphics[width=\linewidth]{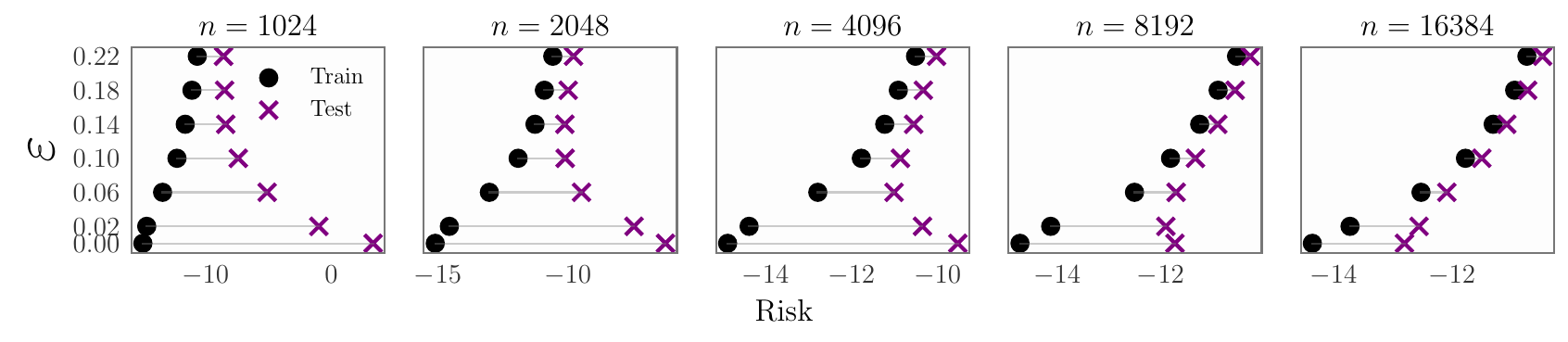}
        \caption{Lotka--Volterra}
    \end{subfigure}

    \caption{Training and test risk across $\varepsilon$ for SLCP and Lotka--Volterra at different simulation budgets. Increasing \(\varepsilon\) reduces the generalisation gap, but overly large values can worsen test risk. Means are shown over five random seeds.}
    \label{fig:dro-sensitivity-risk-app}
\end{figure}

\paragraph{Sensitivity of DRO-NPE to $\varepsilon$.}
We provide additional sensitivity analyses for the DRO radius \(\varepsilon\), extending the results of \Cref{sec:analysis} to different simulation budgets and including the SLCP task. We report training and test risks in \Cref{fig:dro-sensitivity-risk-app}, coverage curves in \Cref{fig:dro-sensitivity-cov-app}, and the behaviour of absolute miscoverage and \(\kl_\mathrm{cal}^q\) in \Cref{fig:dro-sensitivity-kl-app}.

As shown in \Cref{fig:dro-sensitivity-risk-app}, increasing $\varepsilon$ consistently reduces the generalisation gap across tasks and simulation budgets. For small $n$, this reduction is accompanied by improved test risk, whereas for larger $n$, overly large $\varepsilon$ can worsen test risk, indicating excessive conservativeness. 

\Cref{fig:dro-sensitivity-cov-app} shows the corresponding coverage behaviour. Increasing \(\varepsilon\) generally moves coverage curves upward, making the posterior more conservative. The value of \(\varepsilon\) that yields coverage closest to the diagonal decreases as \(n\) increases, consistent with the intuition that larger simulation budgets require less regularisation.

\begin{figure}
    \centering

    \begin{subfigure}{\linewidth}
        \centering
        \includegraphics[width=0.19\linewidth]{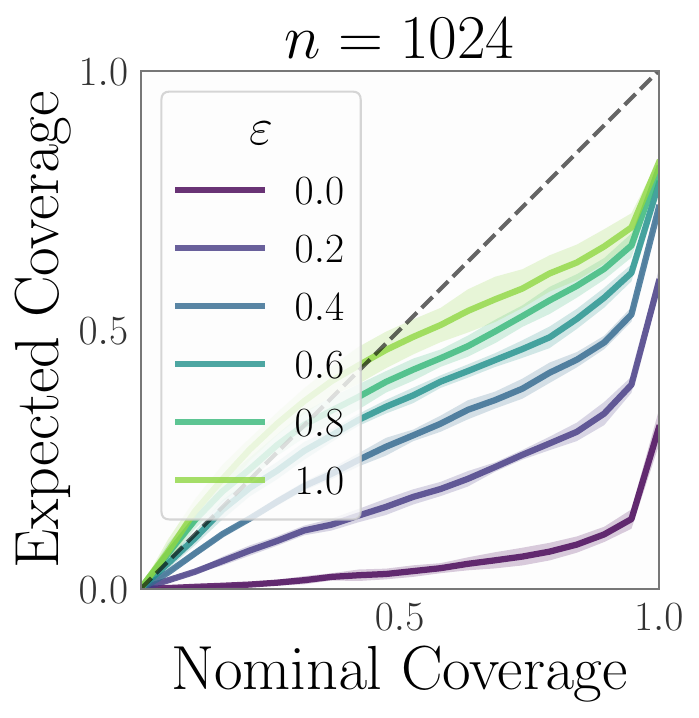}
        \includegraphics[width=0.19\linewidth]{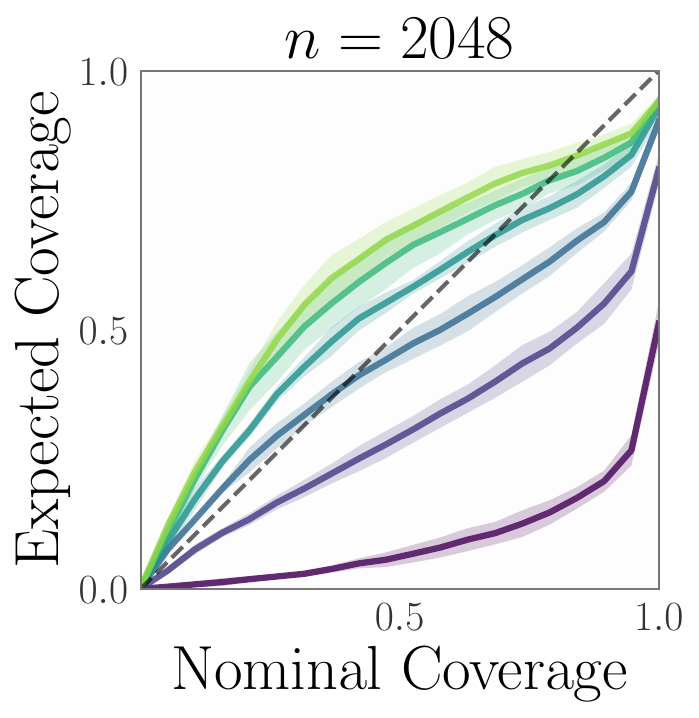}
        \includegraphics[width=0.19\linewidth]{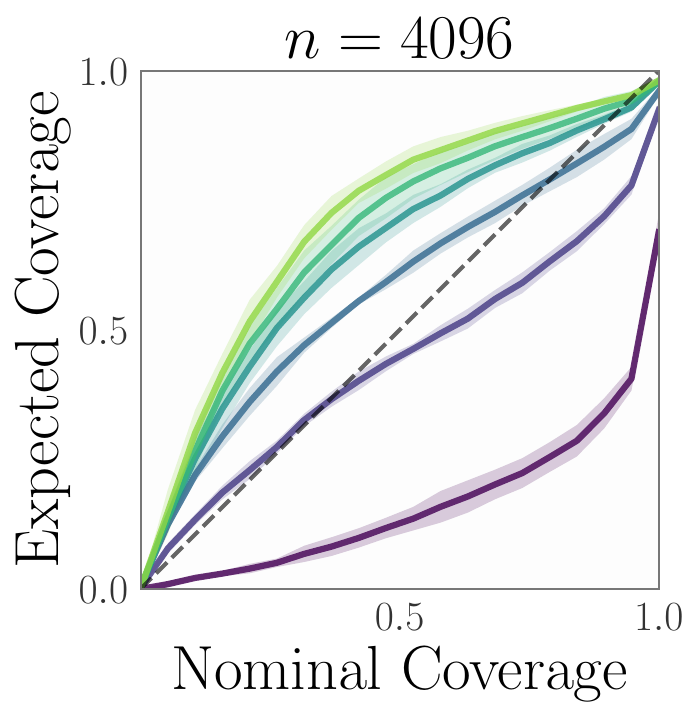}
        \includegraphics[width=0.19\linewidth]{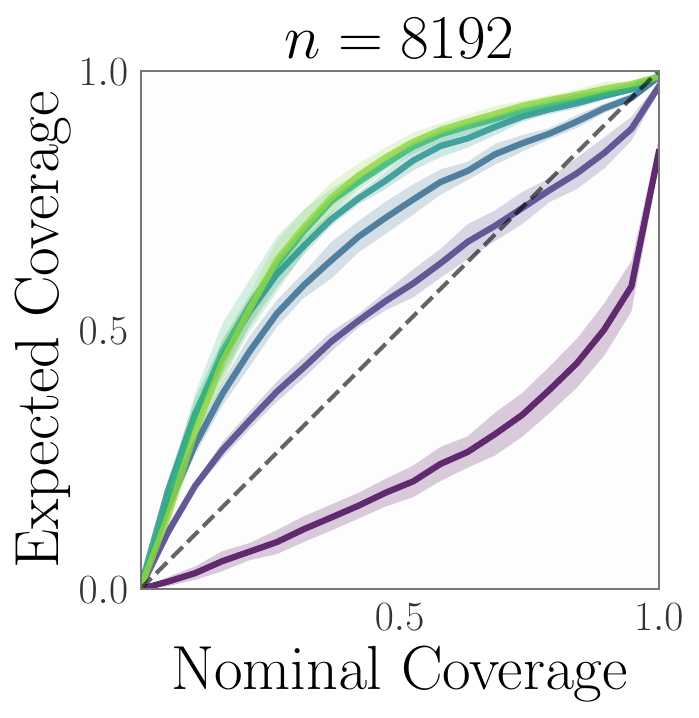}
        \includegraphics[width=0.19\linewidth]{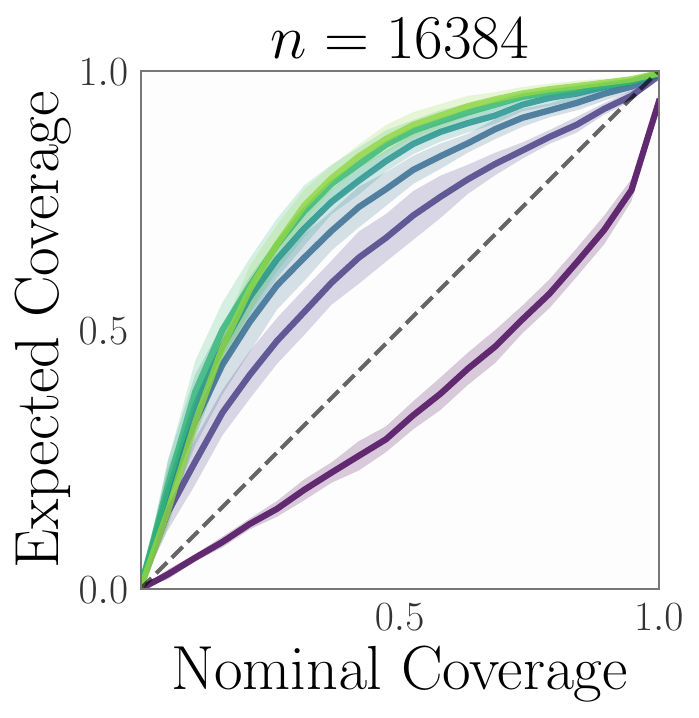}
         \caption{SLCP}
    \end{subfigure}

    \begin{subfigure}{\linewidth}
        \centering
        \includegraphics[width=0.19\linewidth]{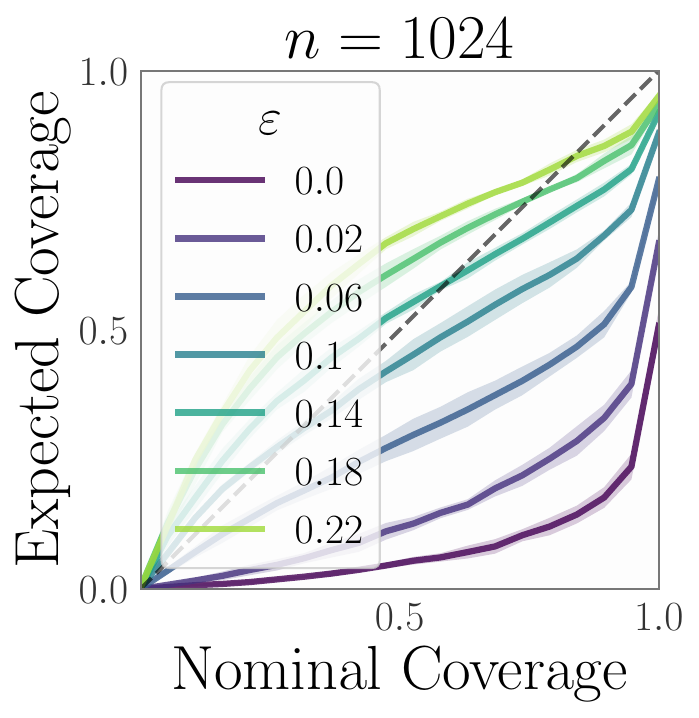}
        \includegraphics[width=0.19\linewidth]{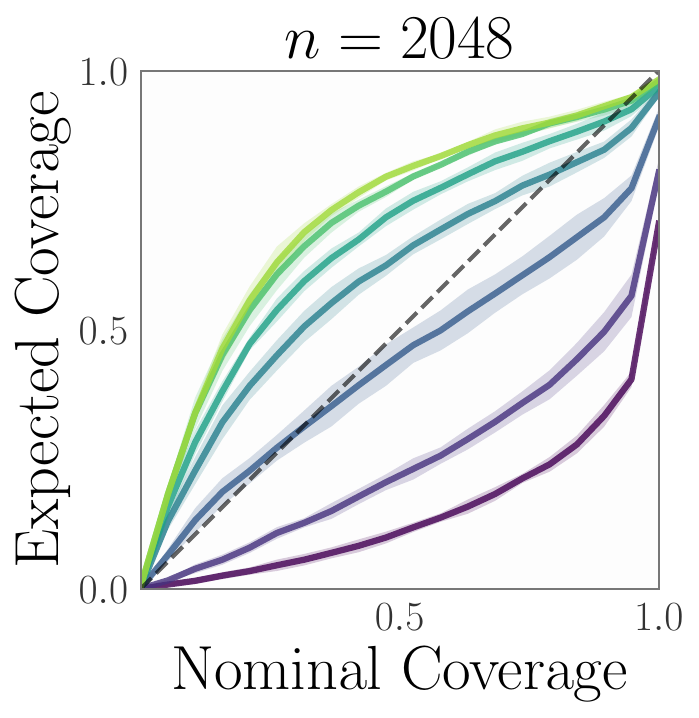}
        \includegraphics[width=0.19\linewidth]{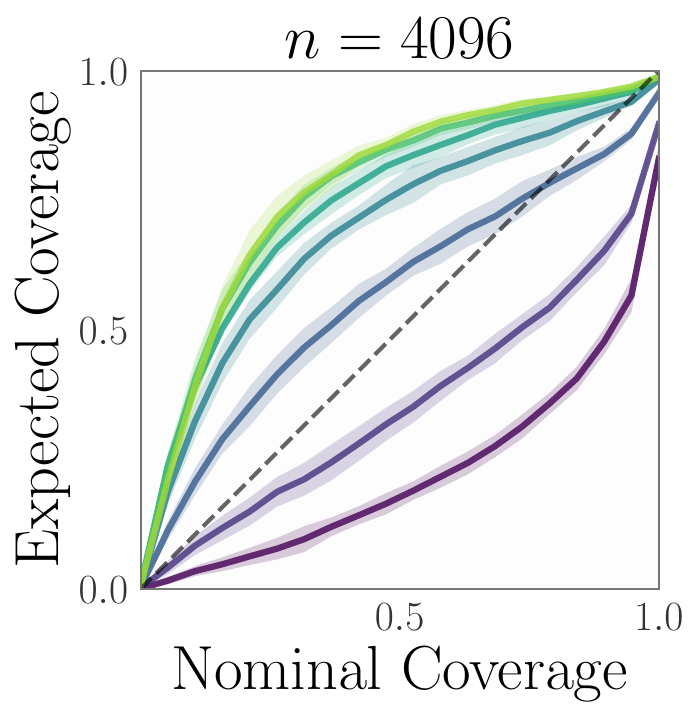}
        \includegraphics[width=0.19\linewidth]{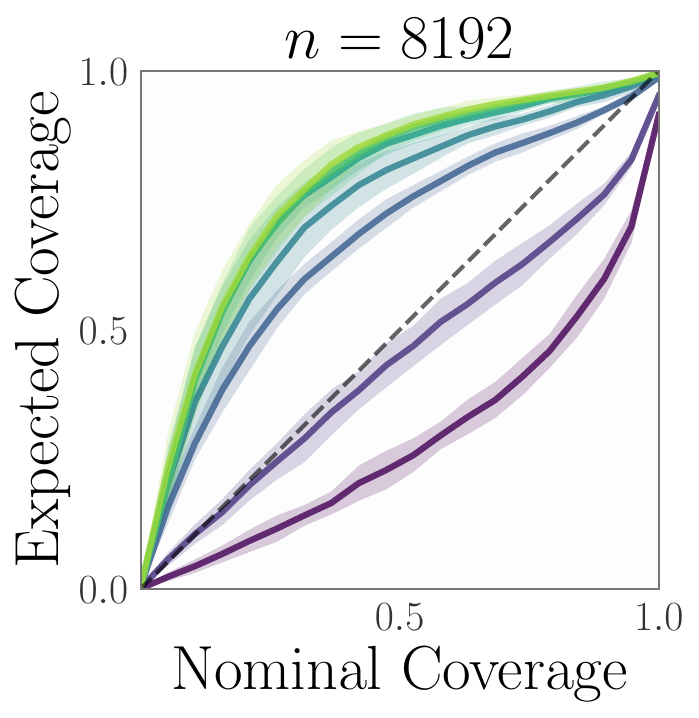}
        \includegraphics[width=0.19\linewidth]{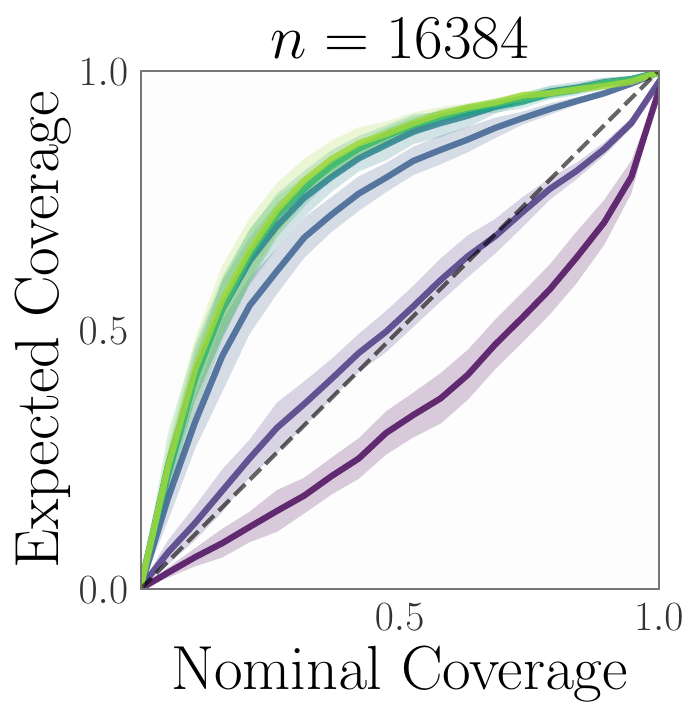}
        \caption{Lotka--Volterra}
    \end{subfigure}

    \caption{Coverage curves across \(\varepsilon\) for SLCP and Lotka--Volterra at different simulation budgets. Larger \(\varepsilon\) generally yields more conservative posteriors, while the radius closest to the diagonal decreases as the simulation budget grows.  Means and standard deviations are shown over five seeds.}
    \label{fig:dro-sensitivity-cov-app}
\end{figure}

Finally, \Cref{fig:dro-sensitivity-kl-app} compares absolute miscoverage at \(\alpha=0.05\) with \(\kl_\mathrm{cal}^q\) for different \(\varepsilon\). Absolute miscoverage decreases nearly monotonically with $\varepsilon$, favouring increasingly conservative posteriors.
By contrast, \(\kl_\mathrm{cal}^q\) is minimised at an intermediate value of \(\varepsilon\), making it more informative for selecting a radius that balances coverage and posterior quality.

\begin{figure}
    \centering

    \begin{subfigure}{\linewidth}
        \centering
        \includegraphics[width=\linewidth]{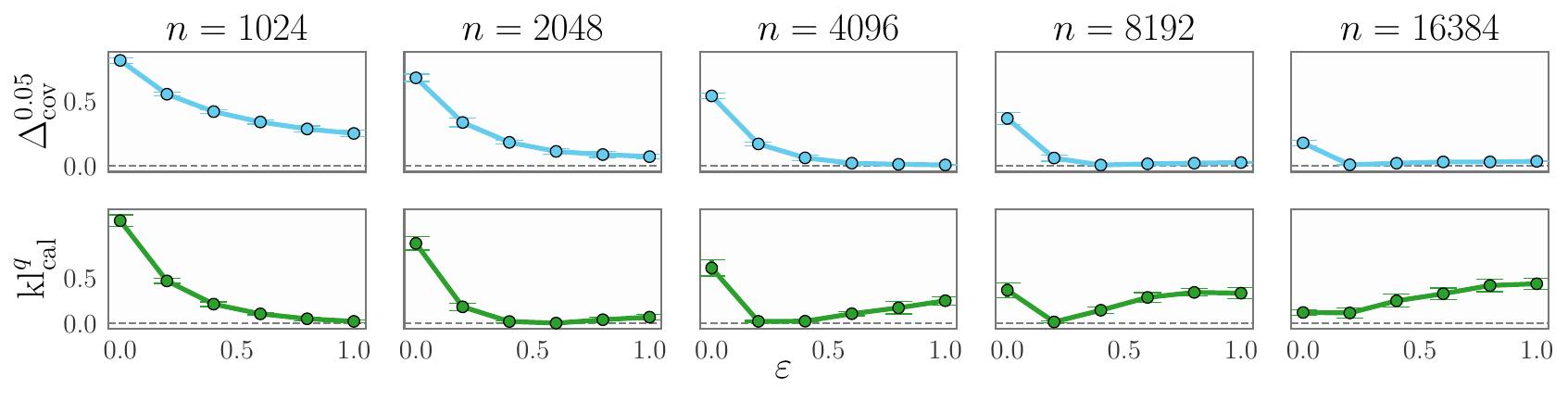}
         \caption{SLCP}
    \end{subfigure}
    \
    \begin{subfigure}{\linewidth}
        \centering
        \includegraphics[width=\linewidth]{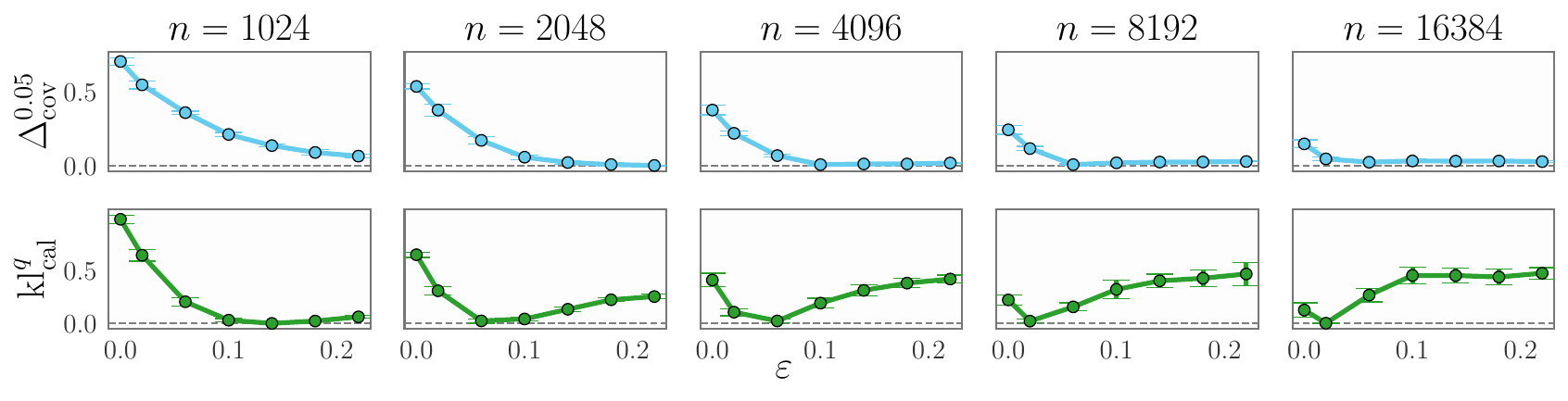}
        \caption{Lotka--Volterra}
    \end{subfigure}

    \caption{Absolute miscoverage at $\alpha = 0.05$ (\legendbox{drocoverror}) and KL-based miscalibration $\kl_\mathrm{cal}^q$ (\legendbox{drocolor}) across $\varepsilon$ for SLCP and Lotka--Volterra. Means and standard deviations are shown over five random seeds.}
    \label{fig:dro-sensitivity-kl-app}
\end{figure}

\paragraph{Choice of hyperparameter-selection metric.}
We report additional results for selecting $\varepsilon$ using validation metrics other than $\kl_\mathrm{cal}^q$. Across SLCP and Lotka--Volterra, selecting $\varepsilon$ via $\kl_\mathrm{cal}^q$ generally yields coverage curves closest to the diagonal while maintaining competitive NLPD.
In contrast, absolute miscoverage tends to select overly conservative posteriors, whereas \(\kl_\mathrm{cal}^{\|\btheta-\btheta_0\|}\) often leads to undercoverage and higher variability across runs. Selection by NLPD also performs well, but tends to produce slightly more conservative coverage curves than \(\kl_\mathrm{cal}^q\).

\begin{figure}
    \centering
    \begin{subfigure}{\linewidth}
        \centering
        \includegraphics[width=0.19\linewidth]{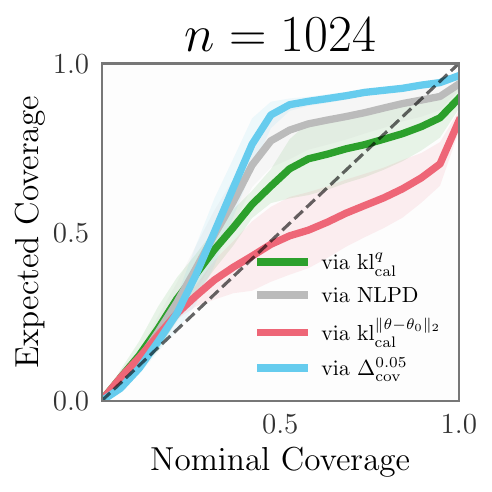}
        \includegraphics[width=0.19\linewidth]{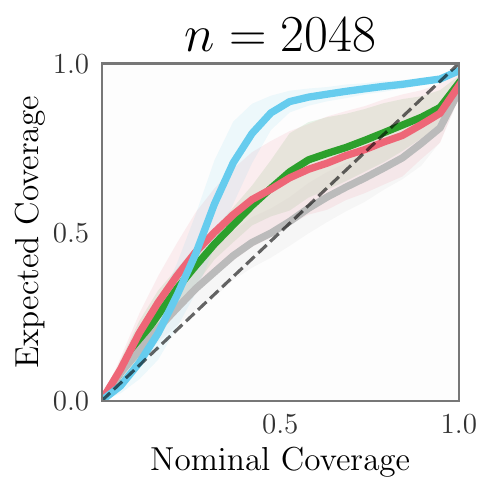}
        \includegraphics[width=0.19\linewidth]{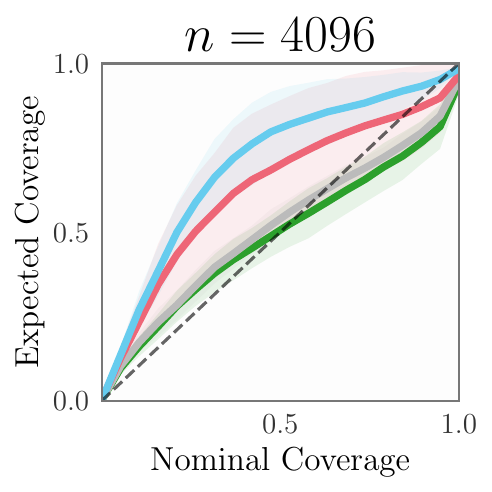}
        \includegraphics[width=0.25\linewidth]{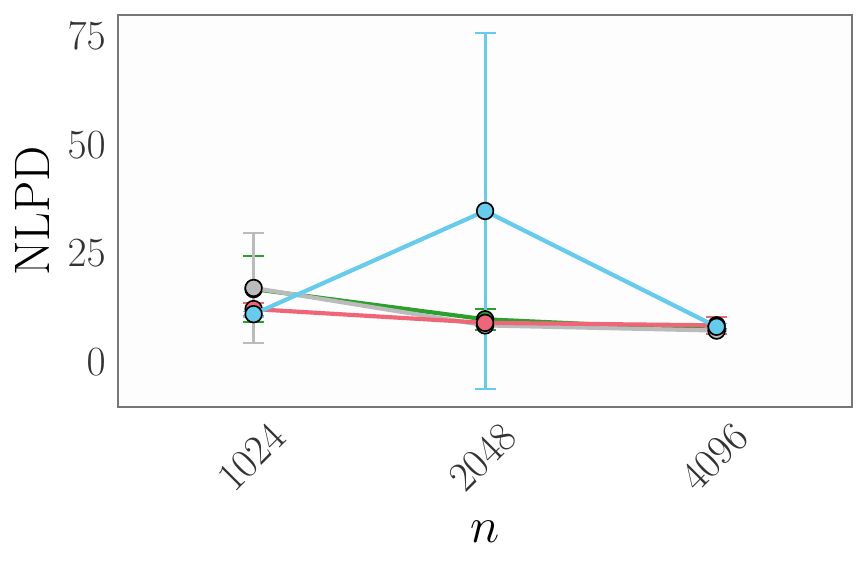}
        \caption{SLCP}
    \end{subfigure}
     \begin{subfigure}{\linewidth}
        \centering
        \includegraphics[width=0.19\linewidth]{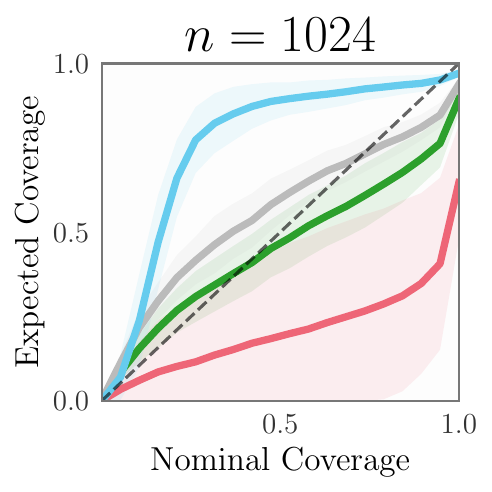}
        \includegraphics[width=0.19\linewidth]{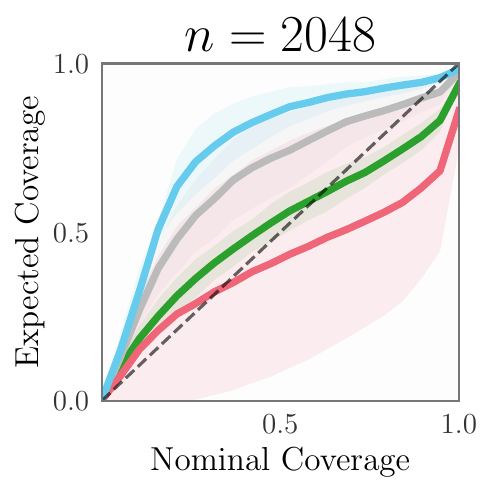}
        \includegraphics[width=0.19\linewidth]{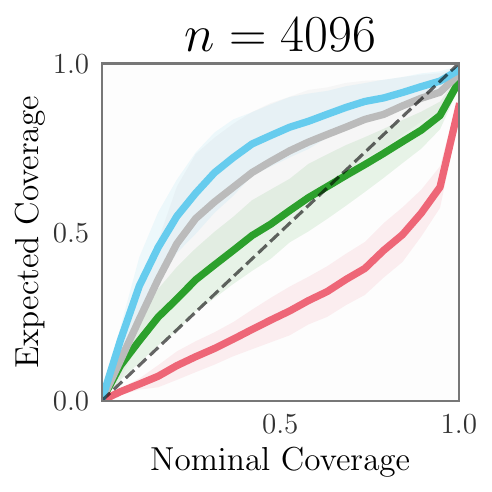}
        \includegraphics[width=0.25\linewidth]{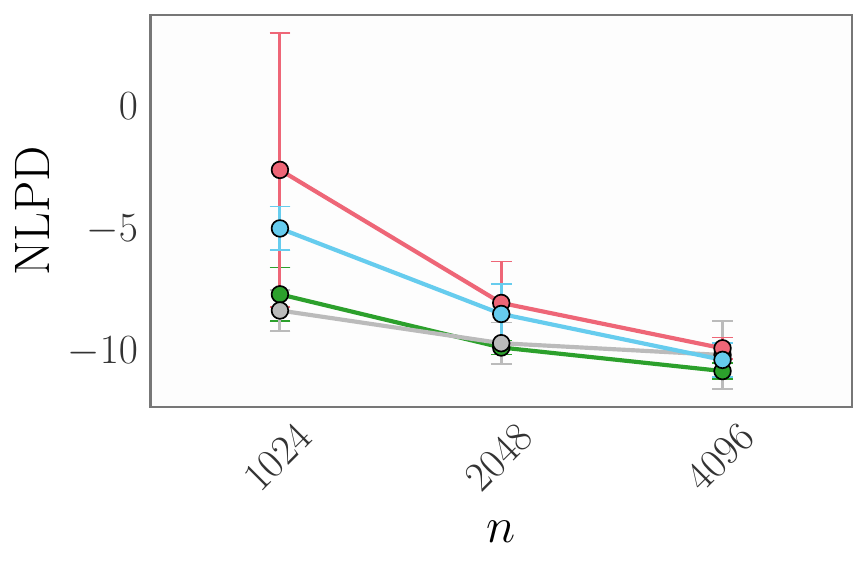}
        \caption{Lotka--Volterra}
    \end{subfigure}

    \caption{Effect of the validation metric used to select \(\varepsilon\) on SLCP and Lotka--Volterra. We compare selection by $\kl_\mathrm{cal}^q$ (\legendbox{drocolor}), NLPD (\legendbox{dronlpdcolor}), $\kl_\mathrm{cal}^{\| \theta- \theta_0\|_2}$ with $\theta_0$ drawn from the prior (\legendbox{drokldistcolor}), and absolute miscoverage at $\alpha = 0.05$ (\legendbox{drocoverror}). Selecting by \(\kl_\mathrm{cal}^q\) generally gives coverage closest to the diagonal while maintaining competitive NLPD. Means and standard deviations are shown over five seeds.}
    \label{fig:coverage_dro_different_metrics}
\end{figure}

\paragraph{Hyperparameter tuning on baseline methods.} 
We now use \(\kl_\mathrm{cal}^q\) to tune the regularisation hyperparameter of the baseline methods in settings where they exhibit poor coverage in \Cref{sec:benchmarking}. For CR-NPE, that is Lotka--Volterra with \(n=1024\), and for Bal-NPE, that is SLCP with \(n=1024\). 

For CR-NPE, we search over \([1,40]\), following the sensitivity range used by \citet{falkiewicz2023calibrating}. Although \citet{falkiewicz2023calibrating} analyse sensitivity using both NLPD and the area under the coverage curve, using two metrics for model selection would require an additional weighting choice; we therefore use \(\kl_\mathrm{cal}^q\) as a single validation criterion. For Bal-NPE, we search over \([100,1000]\), since the default \(\lambda=100\) appears too small in \Cref{fig:benchmarking-main}. The selected values are \(\lambda=29.6\pm15.8\) for CR-NPE and \(\lambda=415.4\pm303.0\) for Bal-NPE.

\Cref{fig:tune-baseline} shows the results over five random seeds. For CR-NPE, tuning the hyperparameter yields the same coverage results as before, while the NLPD performance degrades slightly, both in terms of the average value and the variance across runs. This is consistent with \citet{falkiewicz2023calibrating}, who find that CR-NPE is relatively insensitive to \(\lambda\). For Bal-NPE, tuning substantially improves NLPD but only modestly improves coverage, leaving the method undercovered. Overall, these results suggest that \(\kl_\mathrm{cal}^q\) could potentially be useful for tuning other conservative NPE objectives, whilst highlighting that hyperparameter tuning alone does not remove the coverage limitations of these baselines.

\begin{figure}
    \centering
    \includegraphics[width=0.24\linewidth]{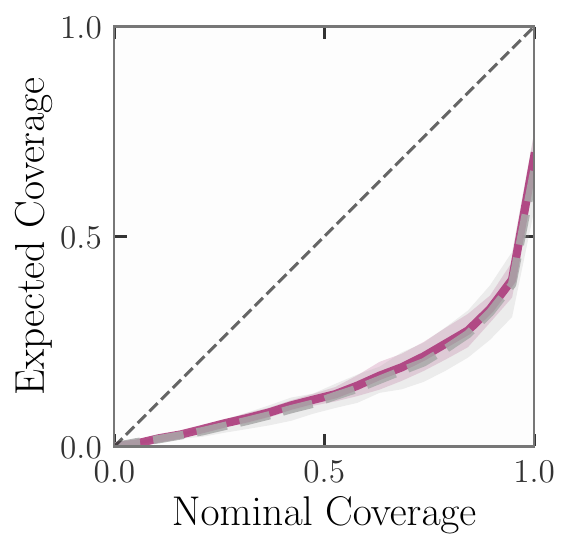}
    \raisebox{0.5em}{\includegraphics[width=0.24\linewidth]{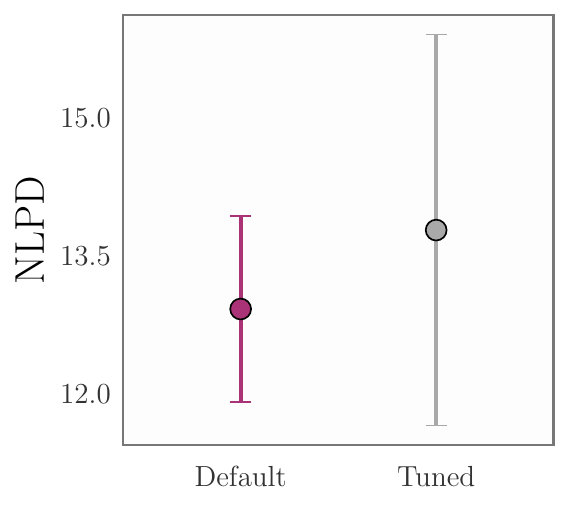}}
    \includegraphics[width=0.24\linewidth]{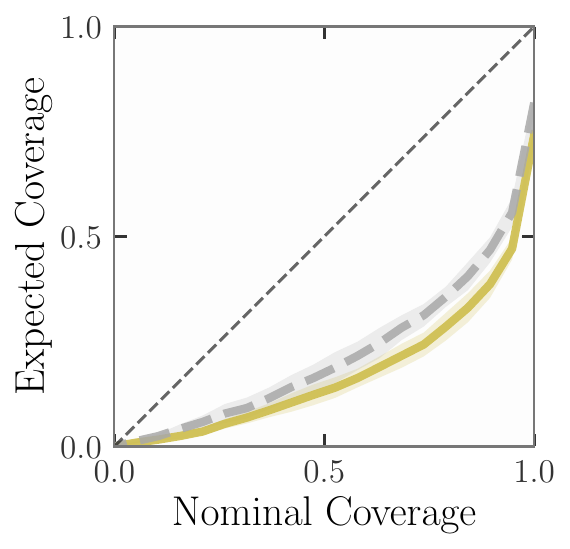}
    \raisebox{0.5em}{\includegraphics[width=0.235\linewidth]{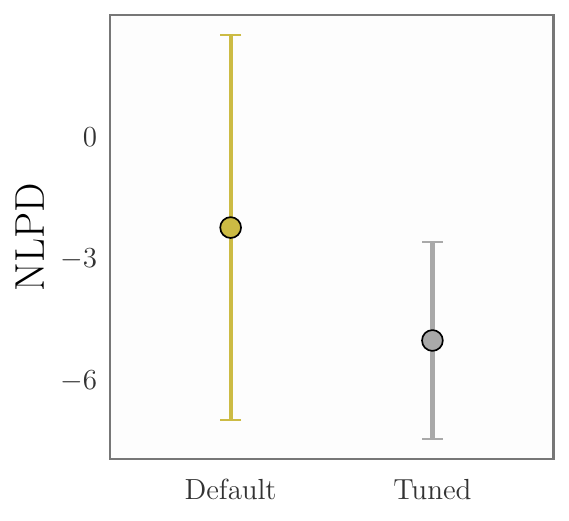}}
    \caption{Effect of tuning  regularisation hyperparameter using KL-based miscalibration \(\kl_\mathrm{cal}^q\). Default settings of CR-NPE (\legendbox{calnpecolor}) and Bal-NPE (\legendbox{balancecolor}) are compared with their tuned variants (\legendbox{dronlpdcolor}). Coverage curves and NLPD are shown with means and standard deviations over five random seeds. }
    \label{fig:tune-baseline}
\end{figure}

\paragraph{Comparison with NPE using early stopping.}
Another common way to reduce generalisation error and mitigate overfitting is early stopping. Although its effect on miscoverage is less well understood theoretically, except in relatively simple settings such as linear models \citep{sonthalia2024regularization}, it may still help in practice by reducing the generalisation gap. To assess this empirically, we compare DRO-NPE with standard NPE and NPE trained with early stopping. For early stopping, we use the default configuration of the \textit{sbi} package \citep{tejero2020sbi}: \(10\%\) of the training data is held out for validation, training stops if the validation risk does not improve for \(20\) consecutive epochs, and the model with the lowest validation risk is returned.

\Cref{fig:earlystop} reports the resulting coverage curves and NLPD. Early stopping substantially improves both coverage and NLPD relative to standard NPE, confirming that reducing overfitting can already be beneficial. However, in the low-data regime it still yields overconfident posteriors, whereas DRO-NPE remains conservative. As the simulation budget increases, the gap narrows, but DRO-NPE continues to offer comparable performance.

\begin{figure}
    \centering
    \begin{subfigure}{0.78\linewidth}
        \centering
        \includegraphics[width=\linewidth]{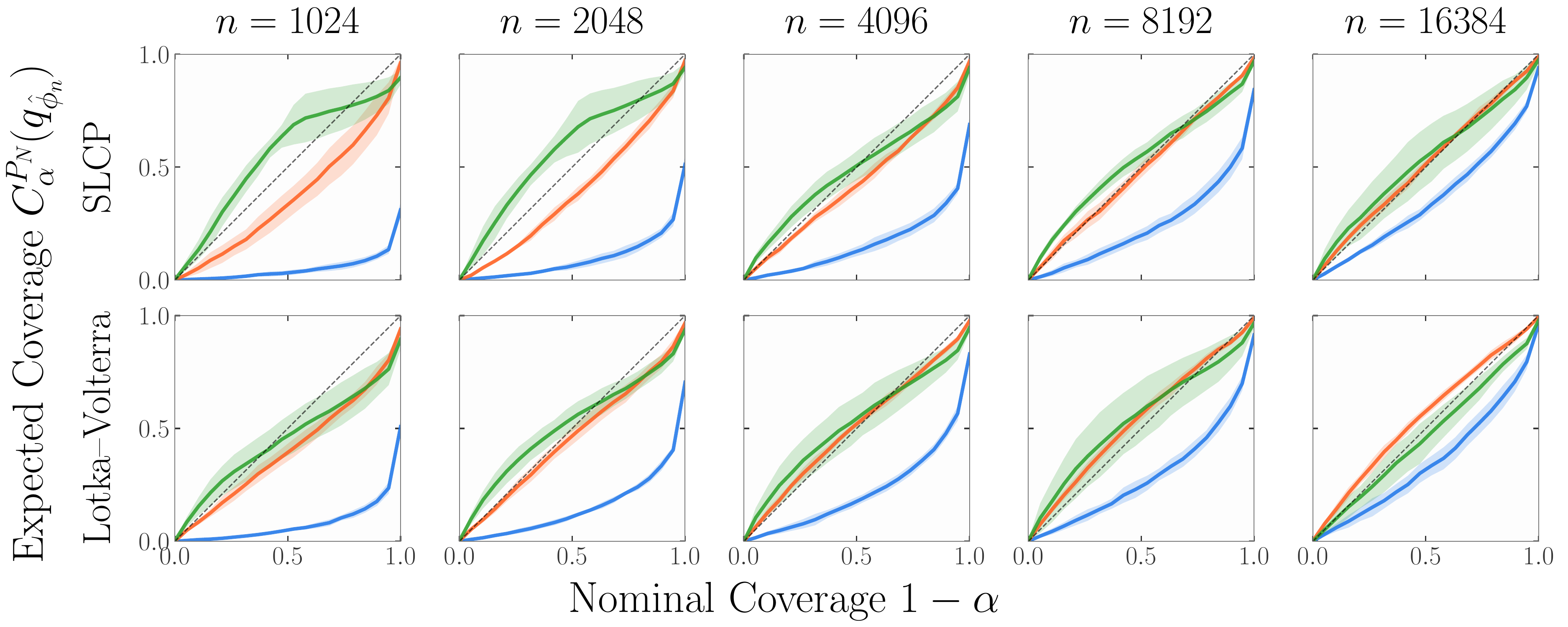}
        \caption{Coverage}
        \label{fig:earlystop_coverage}
    \end{subfigure}
    \hfill
    \begin{subfigure}{0.15\linewidth}
        \centering
        \includegraphics[width=\linewidth]{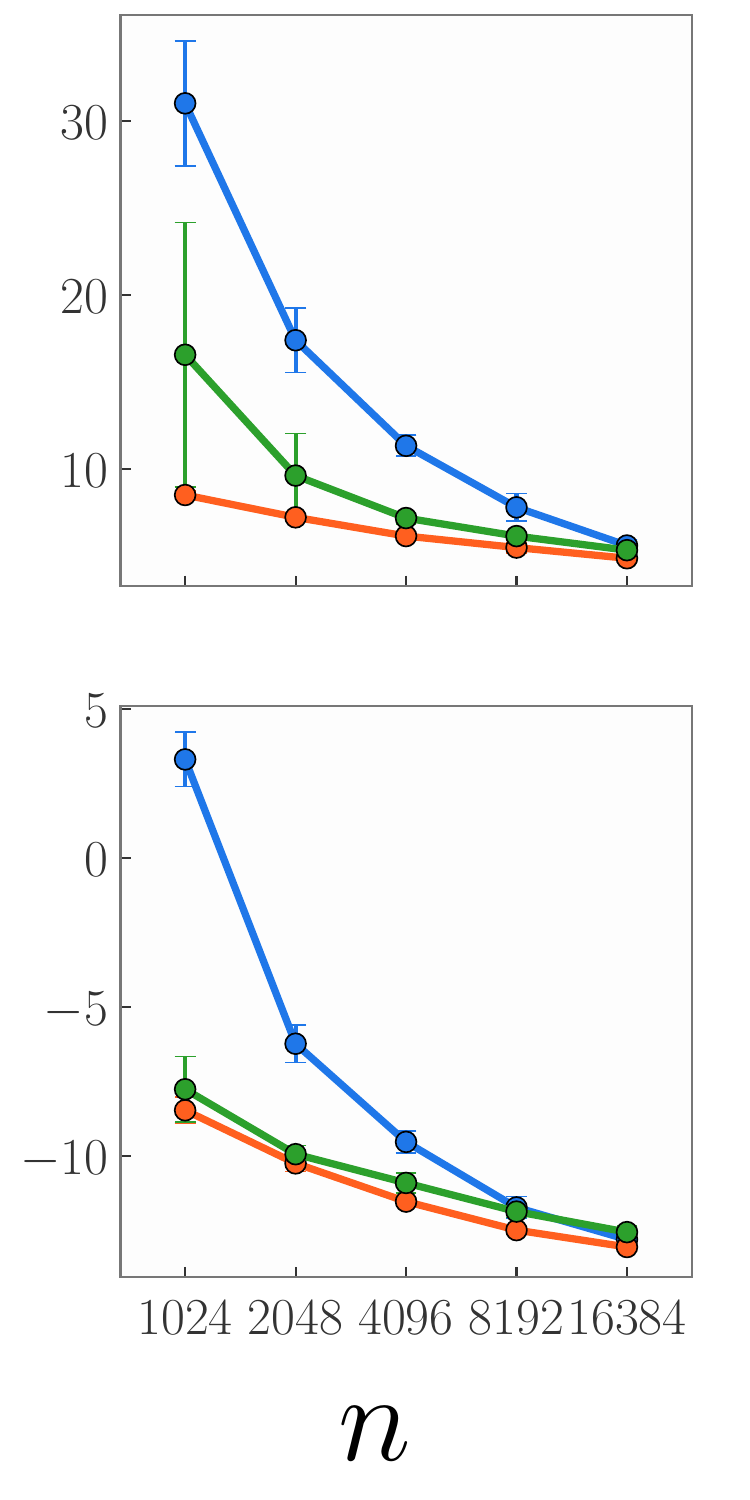}
        \caption{NLPD}
        \label{fig:earlystop_nlpd}
    \end{subfigure}

    \caption{Effect of early stopping on coverage and NLPD for SLCP and Lotka--Volterra. DRO-NPE (\legendbox{drocolor}) is compared with standard NPE (\legendbox{npecolor}) and NPE with early stopping (\legendbox{stopnpecolor}). Early stopping improves standard NPE, but DRO-NPE remains more conservative in low-data regimes. Means and standard deviations are shown over five random seeds.}
    \label{fig:earlystop}
\end{figure}

\begin{figure}
    \centering
    \begin{subfigure}{0.33\linewidth}
        \centering
        \includegraphics[width=0.8\linewidth]{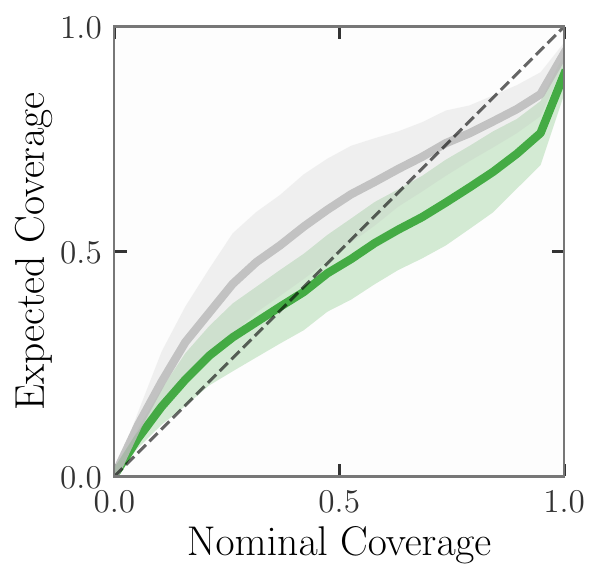}
        \caption{Coverage curve}
    \end{subfigure}
    \begin{subfigure}{0.33\linewidth}
        \centering
        \raisebox{1.5em}{\includegraphics[width=0.72\linewidth]{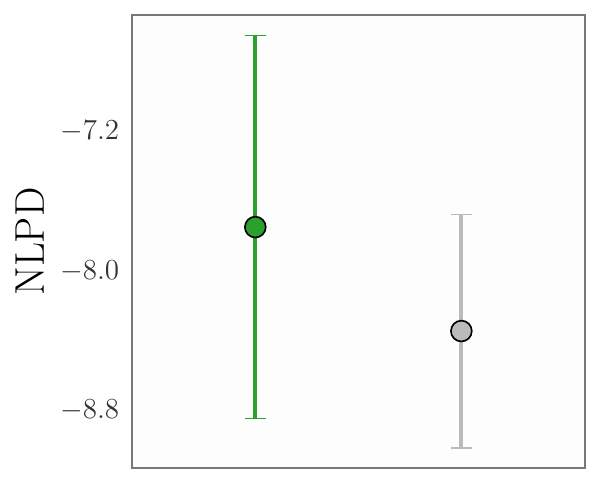}}
        \caption{NLPD}
    \end{subfigure}
    \caption{Comparison of standard DRO-NPE, with $\varepsilon$ selected by $\kl_\mathrm{cal}^q$ (\legendbox{drocolor}), against a more conservative variant with $\varepsilon$ selected by $\kl_\mathrm{cal}^q - \gamma \tilde \varepsilon$ (\legendbox{dronlpdcolor}), using $\gamma = 0.8$.  Results are shown for the Lotka--Volterra task with $n = 1024$. Means and standard deviations over five seeds are shown.}
    \label{fig:dro-conservative}
\end{figure}

\paragraph{A more conservative variant of DRO-NPE.}

In our main experiments, we select $\varepsilon$ by minimising $\kl_\mathrm{cal}^q$.
Although this criterion yields good overall coverage, it does not explicitly encourage coverage curves to lie above the diagonal; rather, it penalises deviations from the diagonal in either direction. We therefore consider a more conservative
variant of DRO-NPE, obtained by slightly modifying the selection criterion. Specifically, instead of minimising $\kl_\mathrm{cal}^q$, we minimise
$\kl_\mathrm{cal}^q - \gamma \tilde{\varepsilon}$, where
$$\tilde{\varepsilon}
=
\frac{\log \varepsilon - \log \varepsilon_\mathrm{min}}
{\log \varepsilon_\mathrm{max} - \log \varepsilon_\mathrm{min}}$$ is the log-normalised value of $\varepsilon$. The additional term favours larger values of $\varepsilon$, and hence more conservative posteriors. Here, $\gamma \geq 0$ is a user-specified hyperparameter controlling the trade-off between calibration and additional conservativeness. 

We illustrate this variant in \Cref{fig:dro-conservative}, reporting the resulting coverage and NLPD for $\gamma = 0.8$. The modified criterion yields a more conservative posterior approximation. Interestingly, it also achieves a slightly lower NLPD, suggesting that in this case the value of $\varepsilon$ selected using the validation set with \(\kl_\mathrm{cal}^q\) alone may have been slightly smaller than optimal.


\end{document}